\def\reals{{\mathbb R}}
\def\P{{\mathbb P}}
\def\E{{\mathbb E}}
\def\supp{\mathop{\text{supp}\kern.2ex}}
\def\argmin{\mathop{\text{arg\,min}\kern.2ex}}
\let\hat\widehat
\let\tilde\widetilde
\def\argmin{\mathop{\text{arg\,min}}}
\let\hat\widehat
\let\tilde\widetilde
\def\given{{\,|\,}}
\def\ds{\displaystyle}
\def\1{{(1)}}
\def\2{{(2)}}
\def\pn{{(n)}}
\def\ip{{(i)}}
\def\Xbar{\overline{X}}
\def\except{\backslash}
\def\npn{\mathop{\textit{NPN\,}}}
\def\i{{(i)}}
\def\cE{{\mathcal{C}}}
\def\cM{{\mathcal{M}}}
\def\cF{{\mathcal{F}}}
\def\cP{{\mathcal{P}}}
\def\cG{{\mathcal{G}}}
\def\M{{\mathcal{M}}}
\def\tr{\mathop{\text{tr}}}
\long\def\comment#1{}
\begin{document}

\mbox{\ }
\vskip.0in
\begin{center}
\Large\bf 
The Nonparanormal: Semiparametric Estimation of High Dimensional Undirected Graphs
\end{center}
\vskip.1in
\begin{center}
\begin{tabular}{c}
{\large Han Liu, \ John Lafferty, \  Larry Wasserman} \\[15pt]
Department of Statistics and \\
School of Computer Science\\
Carnegie Mellon University \\
Pittsburgh, PA 15213 USA \\[20pt]
\today \\[20pt]
\end{tabular}
\end{center}

\begin{abstract}
\noindent\normalsize 
Recent methods for estimating sparse undirected graphs for
real-valued data in high dimensional problems rely heavily on the assumption of normality.
We show how to use a semiparametric Gaussian copula---or
``nonparanormal''---for high dimensional inference.  Just as additive
models extend linear models by replacing linear functions with a set
of one-dimensional smooth functions, the nonparanormal extends the
normal by transforming the variables by smooth functions.  We derive a
method for estimating the nonparanormal, study the method's
theoretical properties, and show that it works well in many examples.
\vskip30pt
\noindent
\begin{quote}
\begin{itemize}
\item[\bf Keywords:] Graphical models, Gaussian copula,  high dimensional inference, sparsity, $\ell_1$ regularization, graphical lasso,
paranormal, occult
\end{itemize}
\end{quote}
\end{abstract}


\newpage

\section{Introduction}

The linear model is a mainstay of statistical inference that has been extended
in several important ways.  An extension to high dimensions was achieved by 
adding a sparsity constraint, leading to the lasso
\citep{Tibshirani:96}.  An extension to nonparametric models was achieved 
by replacing linear functions with smooth functions,
leading to additive models \citep{Hast:Tibs:1999}.
These two ideas were recently combined, leading to an extension called
sparse additive models (SpAM)  \citep{Ravikumar:07,Ravikumar:08}.
In this paper we consider a similar nonparametric extension of
undirected graphical models based on multivariate Gaussian
distributions in the high dimensional setting.
Specifically, we use a high dimensional Gaussian copula with nonparametric marginals,
which we refer to as a nonparanormal distribution.

If $X$ is a $p$-dimensional random vector distributed according to
a multivariate Gaussian distribution with covariance matrix
$\Sigma$, the conditional independence relations between the random variables
$X_1, X_2, \ldots, X_p$ are encoded in a graph formed
from the precision matrix $\Omega = \Sigma^{-1}$.
Specifically, missing edges in the graph 
correspond to zeroes of $\Omega$.  To estimate the graph 
from a sample of size $n$, it is only necessary to estimate $\Sigma$, 
which is easy if $n$ is much larger than $p$.
However, when $p$ is larger than $n$, the problem is more challenging.
Recent work has focused on the problem of estimating the graph
in this high dimensional setting, which becomes feasible if $G$ is
sparse.  
\cite{Yuan:Lin:07} and
\cite{Banerjee:08} propose
an estimator based on regularized maximum likelihood using 
an $\ell_1$ constraint on the entries of $\Omega$, and \cite{FHT:07} develop 
an efficient algorithm for computing the estimator using a graphical
version of the lasso.   The resulting estimation procedure
has excellent theoretical properties, as shown
recently by \cite{Rothman:08} and \cite{Ravikumar:Gauss:09}.

While Gaussian graphical models can be useful, a reliance on exact
normality is limiting.  Our goal in this paper is to weaken this
assumption.  Our approach parallels the ideas behind sparse additive
models for regression \citep{Ravikumar:07,Ravikumar:08}.  
Specifically, we replace the Gaussian with a semiparametric Gaussian copula.
This means that we replace the random variable $X=(X_1,\ldots, X_p)$ by
the transformed random variable $f(X) = \left(f_1(X_1), \ldots,
f_p(X_p)\right)$, and assume that $f(X)$ is multivariate Gaussian.
This semiparametric copula results in a nonparametric extension of the normal that we call
the \textit{nonparanormal} distribution.  The nonparanormal
depends on the functions $\{f_j\}$,
and a mean $\mu$ and covariance matrix $\Sigma$, all of which
are to be estimated from data.  While the resulting family of
distributions is much richer than the standard parametric normal (the
paranormal), the independence relations among the variables are
still encoded in the precision matrix $\Omega = \Sigma^{-1}$.  
We propose a nonparametric estimator for the functions $\{f_j\}$, and show
how the graphical lasso can be used to estimate the graph
in the high dimensional setting.
The relationship between linear regression models, Gaussian graphical
models, and their extensions to nonparametric and high dimensional
models is summarized in Figure~\ref{fig::compare}.

Most theoretical results on semiparametric copulas
focus on low or at least finite dimensional models \citep{Tsukahara:05}.
Models with increasing dimension require
a more delicate analysis; in particular, simply plugging in the usual empirical distribution
of the marginals does not lead to accurate inference.
Instead we use a truncated empirical distribution.
We give a theoretical analysis of this estimator, 
proving consistency results with respect to risk, model
selection, and estimation of $\Omega$ in the Frobenius norm.

In the following section we review the basic notion of the graph
corresponding to a multivariate Gaussian, and formulate different
criteria for evaluating estimators of the covariance or inverse
covariance.  In Section~\ref{sec:npn} we present the nonparanormal,
and in Section~\ref{sec:estimation} we discuss estimation of the model.
We present a theoretical analysis of the
estimation method in Section~\ref{sec:theory}, with the
detailed proofs collected in an appendix.  In
Section~\ref{sec:experiments} we present experiments with both
simulated data and gene microarray data, where the problem is to  construct the isoprenoid biosynthetic pathway.

\begin{figure}[t]
\begin{center}
\renewcommand{\tabcolsep}{.3cm}
\renewcommand{\arraystretch}{1.2}
\begin{tabular}{|r|c||l|l|}
\multicolumn{1}{c}{\sf Assumptions} & \multicolumn{1}{c}{\sf Dimension} &
\multicolumn{1}{c}{\sf Regression} & \multicolumn{1}{c}{\sf Graphical Models} \\
\cline{1-4}
\multirow{2}{*}{\sf parametric\ }  & low    & linear model    & multivariate normal \\
              & high   & lasso           & graphical lasso\\
\cline{1-4}
\multirow{2}{*}{\sf nonparametric\ } & low    & additive model  & nonparanormal\\
              & high   & sparse additive model & $\ell_1$-regularized nonparanormal\\
\cline{1-4}
\end{tabular}
\end{center}
\caption{Comparison of regression and graphical models. The
  nonparanormal extends additive models to the graphical model
  setting.  Regularizing the inverse covariance leads to an extension
  to high dimensions, which parallels sparse additive models for
  regression. }
\label{fig::compare}
\end{figure}

\section{Estimating Undirected Graphs}
\label{sec:graphs}

Let $X=(X_1,\ldots, X_p)$ denote a random vector with
distribution $P=N(\mu,\Sigma)$.
The undirected graph $G=(V,E)$ corresponding to $P$
consists of a vertex set $V$ and an edge set $E$.
The set $V$ has $p$ elements, one for each component of $X$.
The edge set $E$ consists of ordered pairs $(i,j)$
where $(i,j)\in E$ if there is a edge between $X_i$ and $X_j$.
The edge between $(i,j)$ is excluded from $E$
if and only if $X_i$ is independent of $X_j$
given the other variables 
$O_{\except\{i,j\}} \equiv (X_s:\ 1\leq s \leq p, \ \ s\neq i,j)$, written
\begin{equation}\label{eq::ci}
X_i \amalg X_j \Bigm| O_{\except\{i,j\}}.
\end{equation}
It is well-known that, for multivariate Gaussian distributions,
\eqref{eq::ci} holds
if and only if
$\Omega_{ij}=0$
where $\Omega = \Sigma^{-1}$.

Let $X^\1, X^\2, \ldots, X^\pn$ be a random sample from $P$, where
$X^\ip \in\reals^p$.
If $n$ is much larger than $p$,
then we can estimate $\Sigma$ using maximum likelihood, leading to the
estimate $\hat\Omega = S^{-1}$, where 
\begin{equation*}
S =  \frac{1}{n} \sum_{i=1}^n \left(X^\ip - \Xbar\right) \left(X^\ip
  - \Xbar\right)^T
\end{equation*}
is the sample covariance, with $\Xbar$ the sample mean.
The zeroes of $\Omega$ can then be estimated by applying hypothesis testing to
$\hat\Omega$ \citep{Drton:07,Drton:08}.  

When $p > n$, maximum likelihood is no longer useful; in particular, the
estimate $\hat\Sigma$ is not positive definite, having rank no greater
than $n$.  Inspired by the success of the lasso for linear models,
several authors have suggested estimating $\Sigma$
by minimizing
\begin{equation}
-\ell(\mu,\Omega) + \lambda \sum_{j\neq k} |\Omega_{jk}|
\end{equation}
where 
\begin{equation}
\ell(\mu,\Omega) = \frac{1}{2}\left(\log |\Omega |  - \text{tr}(\Omega S) - p \log(2\pi)\right)
\end{equation}
is the average log-likelihood and $S$ is the sample covariance matrix.
The estimator $\hat\Omega$ can be computed efficiently using the
glasso algorithm \citep{FHT:07}, which is a block coordinate descent
algorithm that uses the standard lasso to estimate a single row and
column of $\Omega$ in each iteration.  Under appropriate sparsity
conditions, the resulting estimator $\hat\Omega$ has been shown to have good
theoretical properties \citep{Rothman:08,Ravikumar:Gauss:09}.

There are several different ways to judge the quality of an
estimator $\hat\Sigma$ of the covariance or inverse covariance $\hat\Omega$.
We discuss three in this paper, persistency, norm consistency, and sparsistency.
Persistency means consistency in risk, when the model is not assumed
to be correct.  Suppose the true distribution is $P$ has mean $\mu_0$,
and that we use a multivariate normal  $p(x;\mu_0,\Sigma)$ for
prediction.  We do not assume that $P$ is normal.
We observe a new vector $X\sim P$ and define the prediction risk to be
$$
R(\Sigma) = -\mathbb{E} \log p(X;\mu_0,\Sigma) = -\int \log p(x;\mu_0,\Sigma) \,dP(x).
$$
It follows that
$$
R(\Sigma) =  \frac{1}{2}\left(\text{tr}(\Sigma^{-1} \Sigma_0 ) + \log
  | \Sigma | - p\log (2\pi)\right)
$$
where $\Sigma_0$ is the covariance of $X$ under $P$.
If ${\cal S}$ is a set of covariance matrices,
the oracle is defined to be the covariance matrix $\Sigma_*$
that minimizes $R(\Sigma)$ over ${\cal S}$:
$$
\Sigma_* = {\argmin}_{\Sigma \in {\cal S}} R(\Sigma).
$$
Thus
$p(x;\mu_0,\Sigma_*)$ is the best predictor of a new observation
among all distributions
in $\{p(x;\mu_0,\Sigma):\ \Sigma\in {\cal S}\}$.
In particular, if ${\cal S}$ consists of covariance matrices with sparse graphs,
then $p(x;\mu_0,\Sigma_*)$ is, in some sense, the best sparse predictor.
An estimator $\hat\Sigma_n$ is \textit{persistent} if
$$
R(\hat\Sigma_n) - R(\Sigma_*) \stackrel{P}{\rightarrow} 0
$$
as the sample size $n$ increases to infinity.
Thus, a persistent estimator approximates the best estimator
over the class ${\cal S}$, but we do not assume that the true
distribution has a covariance matrix in ${\cal S}$, or even that it is
Gaussian. Moreover, we allow the dimension $p=p_n$ to increase with $n$.
On the other hand, norm consistency and sparsistency require that
the true distribution is Gaussian.
In this case, let $\Sigma_0$ denote the true covariance matrix.
An estimator is \textit{norm consistent} if
$$
\|\hat\Sigma_n - \Sigma\| \stackrel{P}{\to} 0
$$
where $\|\cdot\|$ is a norm.
If $E(\Omega)$ denotes the edge set corresponding to $\Omega$.
An estimator is \textit{sparsistent} if
$$
\mathbb{P}\Bigl( E(\Omega) \neq E(\hat\Omega_n) \Bigr) \rightarrow 0.
$$
Thus, a sparsistent estimator identifies the correct graph consistently.
We summarize known results on these properties for the multivariate
normal in Section~\ref{sec:theory}, before presenting our theoretical
analysis of the nonparanormal.

\section{The Nonparanormal}
\label{sec:npn}

We say that a random vector
$X=(X_1,\ldots,X_p)^T$ has a {\em nonparanormal} distribution if
there exist functions
$\{f_j\}_{j=1}^p$ such that
$Z \equiv f(X) \sim N(\mu,\Sigma)$, where $f(X) =
(f_1(X_1),\ldots, f_p(X_p))$.
We then write
$$
X \sim \npn(\mu,\Sigma,f).
$$
When the $f_j$'s are monotone and  differentiable, the joint
probability density function of $X$ is given by
\begin{equation}
p_X(x) =
\frac{1}{(2\pi)^{p/2}|\Sigma|^{1/2}}
\exp\left\{-\frac{1}{2}\left(f(x)-\mu\right)^{T}{\Sigma}^{-1}\left(f(x)-\mu\right)
\right\}\prod_{j=1}^p|f'_j(x_j)|.
\label{eq:npndensity}
\end{equation}

\begin{lemma}
The nonparanormal distribution $\npn(\mu,\Sigma,f)$ is a Gaussian copula when the $f_{j}$'s are monotone and  differentiable.
\end{lemma}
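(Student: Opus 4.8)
The plan is to show that the copula associated with $\npn(\mu,\Sigma,f)$ coincides with a Gaussian copula, i.e.\ a copula of the form $C(u_1,\dots,u_p)=\Phi_R\bigl(\Phi^{-1}(u_1),\dots,\Phi^{-1}(u_p)\bigr)$, where $\Phi$ is the standard normal CDF, $\Phi_R$ is the joint CDF of a mean-zero Gaussian with correlation matrix $R$, and $R_{jk}=\Sigma_{jk}/\sqrt{\Sigma_{jj}\Sigma_{kk}}$. The cleanest route exploits the fact that a copula is invariant under strictly increasing transformations of the marginals: if $Z=(Z_1,\dots,Z_p)$ has copula $C$ and $g_1,\dots,g_p$ are strictly increasing, then $(g_1(Z_1),\dots,g_p(Z_p))$ has the same copula $C$. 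Since $Z=f(X)\sim N(\mu,\Sigma)$ and each $f_j$ is monotone and differentiable (hence strictly monotone, assumed increasing without loss of generality, so that $f_j^{-1}$ exists and is strictly increasing), we have $X_j=f_j^{-1}(Z_j)$, and therefore $X$ has the same copula as $Z$. It thus suffices to identify the copula of the Gaussian vector $Z$.

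Concretely, I would first compute the marginal distribution of $X_j$. Using monotonicity of $f_j$, $F_j(x_j)=\P(X_j\le x_j)=\P\bigl(Z_j\le f_j(x_j)\bigr)=\Phi\bigl((f_j(x_j)-\mu_j)/\sqrt{\Sigma_{jj}}\bigr)$, which is continuous, so Sklar's theorem guarantees a unique copula for $X$. Next I would compute the joint CDF, $F(x)=\P(X_1\le x_1,\dots,X_p\le x_p)=\P\bigl(Z_1\le f_1(x_1),\dots,Z_p\le f_p(x_p)\bigr)=\Phi_{\mu,\Sigma}\bigl(f_1(x_1),\dots,f_p(x_p)\bigr)$, the multivariate normal CDF evaluated at $f(x)$. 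To extract the copula I substitute $x_j=F_j^{-1}(u_j)$, which by the marginal formula gives $f_j(x_j)=\mu_j+\sqrt{\Sigma_{jj}}\,\Phi^{-1}(u_j)$. Plugging this into $F$ and standardizing each coordinate of the Gaussian yields $C(u)=\Phi_R\bigl(\Phi^{-1}(u_1),\dots,\Phi^{-1}(u_p)\bigr)$, exactly the Gaussian copula with correlation matrix $R$. Note that both $\mu_j$ and the diagonal scales $\sqrt{\Sigma_{jj}}$ are absorbed, so the copula depends on $\Sigma$ only through $R$.

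The computation is routine once the setup is in place; the only points requiring care are the technical hypotheses. I would emphasize that differentiability together with monotonicity should be read as strict monotonicity, so that $f_j$ is invertible and the marginals $F_j$ are continuous---continuity is precisely what makes the copula unique via Sklar's theorem and what lets the invariance argument go through cleanly. The sole genuine case distinction is the direction of monotonicity: for a strictly decreasing $f_j$ the marginal becomes $F_j(x_j)=1-\Phi\bigl((f_j(x_j)-\mu_j)/\sqrt{\Sigma_{jj}}\bigr)$, reflecting one coordinate and flipping the sign of the corresponding row and column of $R$, but still leaving $C$ a Gaussian copula. Handling this bookkeeping uniformly---e.g.\ by assuming increasing transformations throughout---is the main, though minor, obstacle.
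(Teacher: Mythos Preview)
Your proposal is correct and follows essentially the same route as the paper: compute the joint CDF as $\Phi_{\mu,\Sigma}(f_1(x_1),\ldots,f_p(x_p))$, invert the marginals via $f_j(x_j)=\mu_j+\sigma_j\Phi^{-1}(F_j(x_j))$, and read off the Gaussian copula from Sklar's theorem. You are more careful than the paper in standardizing to the correlation matrix $R$ and in flagging the decreasing case, whereas the paper simply writes the copula as $\Phi_{\mu,\Sigma}(\Phi^{-1}(u_1),\ldots,\Phi^{-1}(u_p))$ and leaves such details implicit.
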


\begin{proof}
By Sklar's theorem \citep{Sklar:59},
any joint distribution can be written as
$$
F(x_1,\ldots,x_p) = C\{ F_1(x_1),\ldots, F_p(x_p)\}
$$
where the function $C$ is called a copula.
For the nonparanormal we have
$$
F(x_1,\ldots,x_p) = \Phi_{\mu,\Sigma}( \Phi^{-1}(F_1(x_1)),\ldots, \Phi^{-1}(F_p(x_p)))
$$
where
$\Phi_{\mu,\Sigma}$ is the multivariate Gaussian cdf and
$\Phi$ is the univariate standard Gaussian cdf.
Thus, the corresponding copula is
$$
C(u_1,\ldots, u_p) = \Phi_{\mu,\Sigma}( \Phi^{-1}(u_1),\ldots, \Phi^{-1}(u_p)).
$$
This is exactly a Gaussian copula with parameters $\mu$ and $\Sigma$.  If each $f_j$ is differentiable then
the density of $X$ has the same form as \eqref{eq:npndensity}.
\end{proof}

Note that the density in \eqref{eq:npndensity} is not identifiable; to
make the family identifiable we demand that $f_j$ preserve means and
variances:
\begin{equation}
\mu_{j} = \mathbb{E}(Z_{j}) = \mathbb{E}(X_j)
~~\mathrm{and}~~\sigma^2_{j} \equiv \Sigma_{jj} = 
\mathrm{Var}\left( Z_j\right) = \mathrm{Var}\left( X_j\right). 
\label{eq:identify}
\end{equation}
Note that these conditions only depend on $\mathrm{diag}(\Sigma)$ but
not the full covariance matrix.

Let $F_{j}(x)$ denote the marginal distribution function of $X_{j}$. Then
$$
F_{j}(x) = \mathbb{P}\left(X_{j} \leq x \right) = \mathbb{P}\left(Z_j \leq
f_j(x)\right) = \Phi\left( \frac{f_j(x)-\mu_{j}}{\sigma_{j}} \right)
$$
which implies that
\begin{equation}
f_j(x) = \mu_{j} + \sigma_{j} \Phi^{-1}\left(F_j(x) \right). \nonumber
\end{equation}

The following basic fact says that the independence graph of the
nonparanormal is encoded in $\Omega = \Sigma^{-1}$, as for the
parametric normal.

\begin{lemma}
If $X\sim \npn(\mu,\Sigma,f)$ is nonparanormal and each $f_j$ is differentiable, then 
$X_i \amalg X_j  \given O_{\except\{i,j\}}$ if and only if
$\Omega_{ij}=0$, where $\Omega = \Sigma^{-1}$.
\end{lemma}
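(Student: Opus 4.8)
The plan is to exploit the fact that conditional independence relations are invariant under coordinate-wise, strictly monotone (hence invertible) transformations, which reduces the claim to the corresponding well-known fact for the multivariate Gaussian $Z = f(X)$ that was recalled in Section~\ref{sec:graphs}. Concretely, the equivalence $\eqref{eq::ci}\Leftrightarrow\Omega_{ij}=0$ is already available for the Gaussian vector $Z\sim N(\mu,\Sigma)$, so the entire content of the lemma is the transfer of the conditional independence statement from $Z$ back to $X$.

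First I would observe that each $f_j$, being monotone and differentiable, is strictly monotone and therefore a bijection from the support of $X_j$ onto the support of $Z_j$; indeed $f_j(x) = \mu_j + \sigma_j\,\Phi^{-1}(F_j(x))$ is strictly increasing. Writing $Z_j = f_j(X_j)$, the key point is that $f_j$ acts on $X_j$ alone, so the $\sigma$-field generated by $(X_s:\ s\neq i,j)$ coincides with that generated by $(Z_s:\ s\neq i,j)$, and likewise $\sigma(X_i)=\sigma(Z_i)$ and $\sigma(X_j)=\sigma(Z_j)$. Consequently
$$
X_i \amalg X_j \given O_{\except\{i,j\}} \quad\Longleftrightarrow\quad Z_i \amalg Z_j \given (Z_s:\ s\neq i,j).
$$
Since $Z\sim N(\mu,\Sigma)$, the standard Gaussian characterization gives that the right-hand side holds if and only if $\Omega_{ij}=0$, which completes the argument.

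The one step that requires care — and the main obstacle — is justifying the displayed equivalence rigorously rather than appealing to measurability alone, and I would verify it concretely from the density \eqref{eq:npndensity}. By the factorization (Hammersley--Clifford) criterion, $X_i \amalg X_j \given O_{\except\{i,j\}}$ is equivalent to the $(x_i,x_j)$-dependence of $\log p_X$ being additively separable once the remaining coordinates are fixed. In \eqref{eq:npndensity} the Jacobian factor $\prod_{k}|f'_k(x_k)|$ separates completely across coordinates and never couples $x_i$ with $x_j$, while the exponent equals $-\tfrac12\sum_{k,l}\Omega_{kl}\bigl(f_k(x_k)-\mu_k\bigr)\bigl(f_l(x_l)-\mu_l\bigr)$, whose only term involving both $x_i$ and $x_j$ is $-\Omega_{ij}\bigl(f_i(x_i)-\mu_i\bigr)\bigl(f_j(x_j)-\mu_j\bigr)$. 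When $\Omega_{ij}=0$ this cross term vanishes and the density factors, yielding conditional independence; when $\Omega_{ij}\neq 0$, strict monotonicity of $f_i$ and $f_j$ makes this cross term a function of $(x_i,x_j)$ that cannot be written as a sum of a function of $x_i$ and a function of $x_j$, so no such factorization exists and the two variables remain conditionally dependent. This density computation is exactly what makes the transformation-invariance step precise, and it simultaneously handles both directions of the iff.
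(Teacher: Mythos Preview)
Your proposal is correct, and the density computation in your final paragraph is exactly the paper's argument---namely, that $p_X$ in \eqref{eq:npndensity} factors with respect to the graph of $\Omega$ and hence obeys the global Markov property---only you have spelled out both directions of the biconditional where the paper is terse. Your first route, via $\sigma(X_\ell)=\sigma(Z_\ell)$ and the invariance of conditional independence under coordinate-wise bijections, is in fact already a complete and rigorous proof on its own; there is no gap in the measurability argument, so the subsequent density check, while correct, is not needed to close any hole.
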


\begin{proof}
From the form of the density \eqref{eq:npndensity}, it follows that
the density factors with respect to the graph of $\Omega$, and therefore
obeys the global Markov property of the graph.
\end{proof}

\begin{figure}[t]
\begin{center}
\begin{tabular}{ccc}
\includegraphics[width=.29\textwidth,angle=-90]{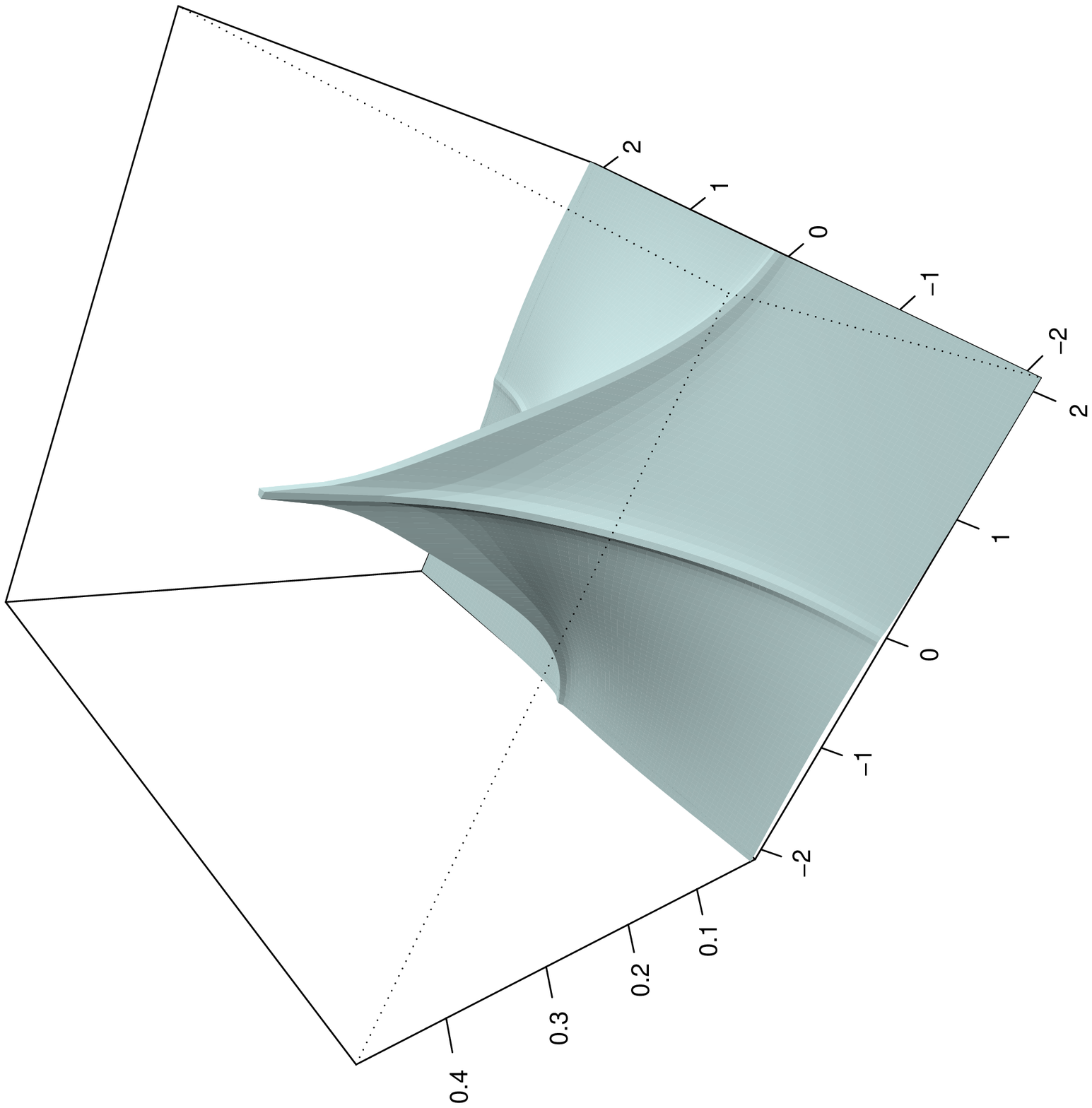} &
\includegraphics[width=.29\textwidth,angle=-90]{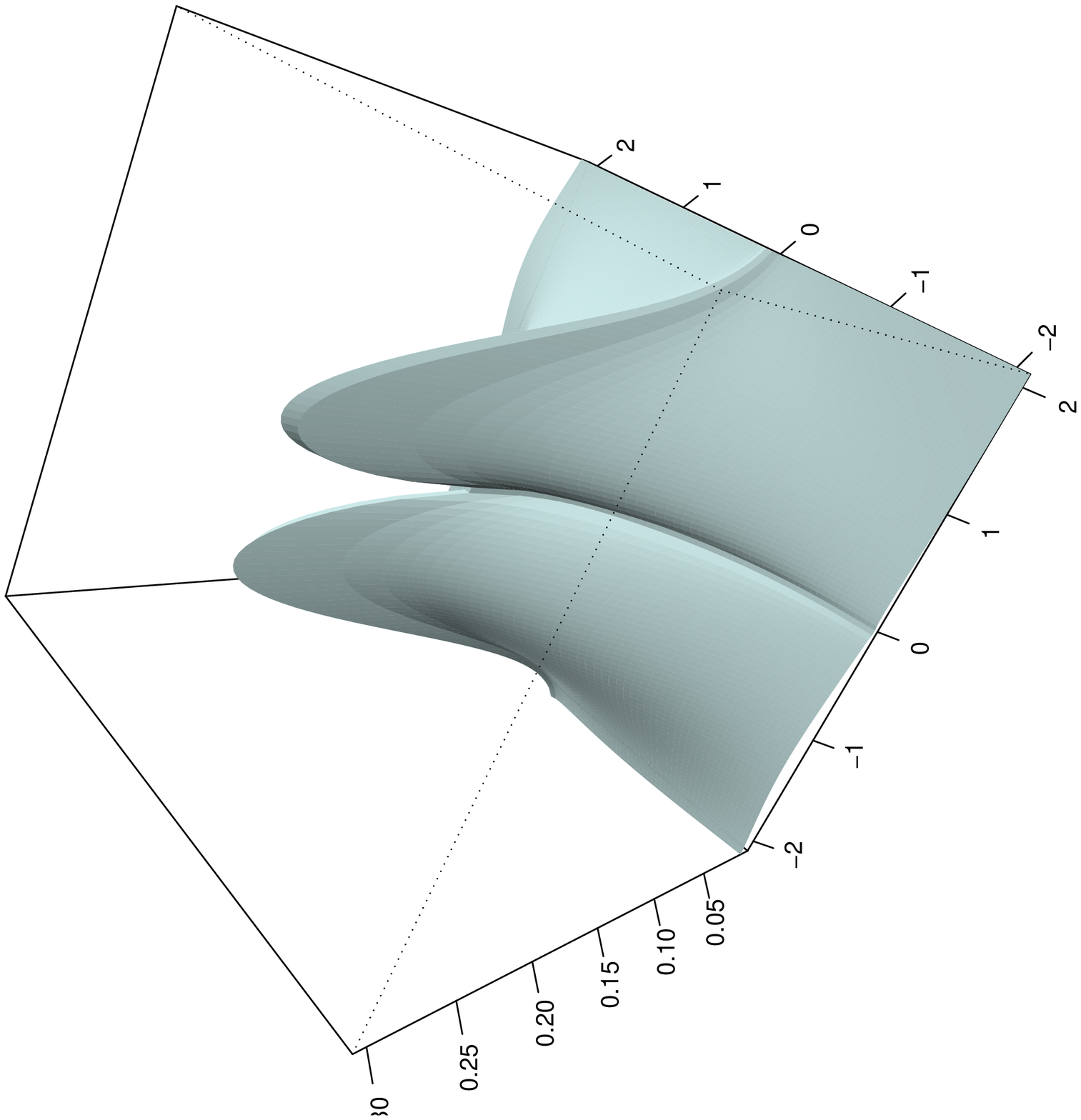} &
\includegraphics[width=.29\textwidth, angle=-90]{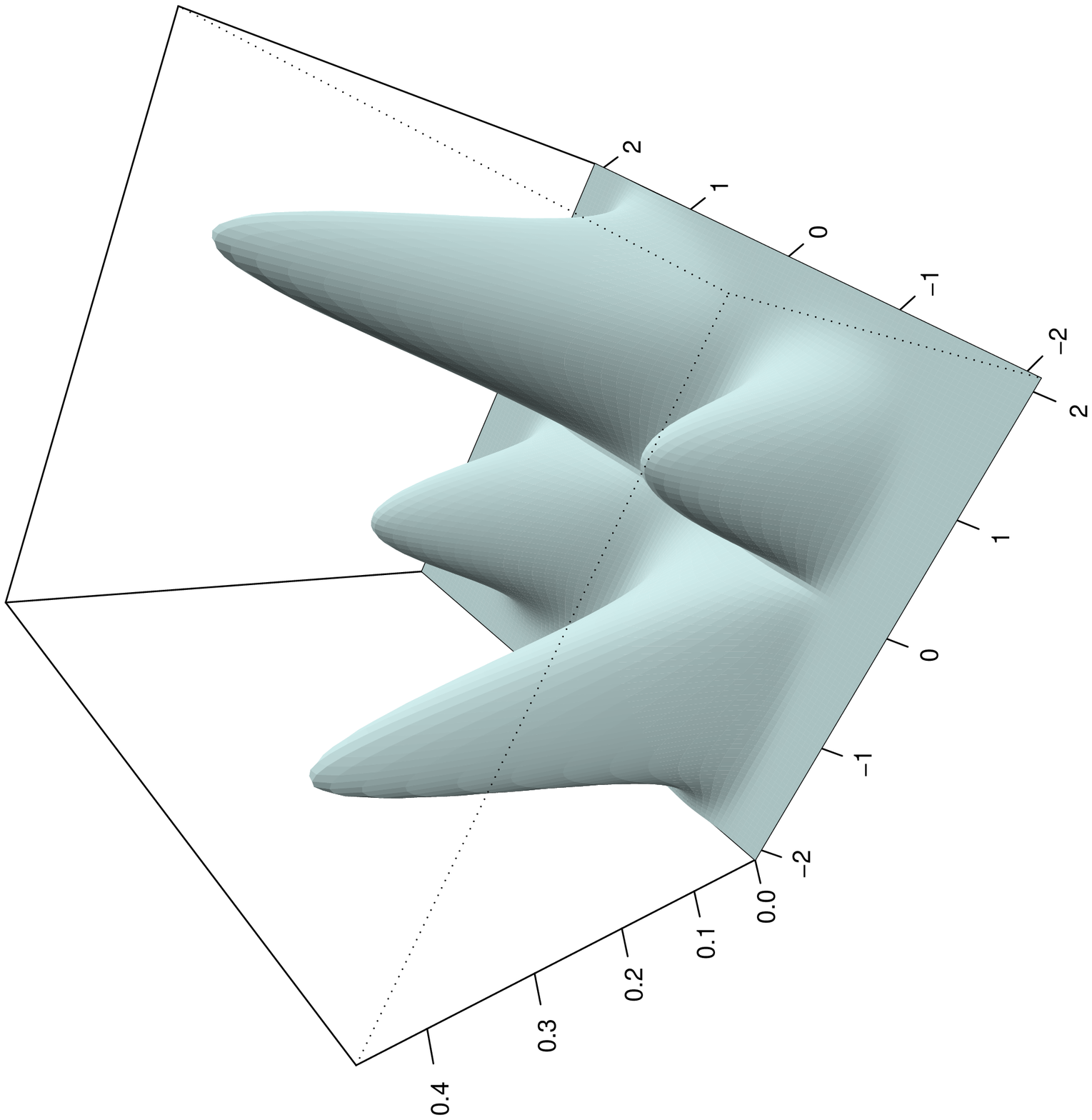} \\
\includegraphics[width=.29\textwidth,angle=-90]{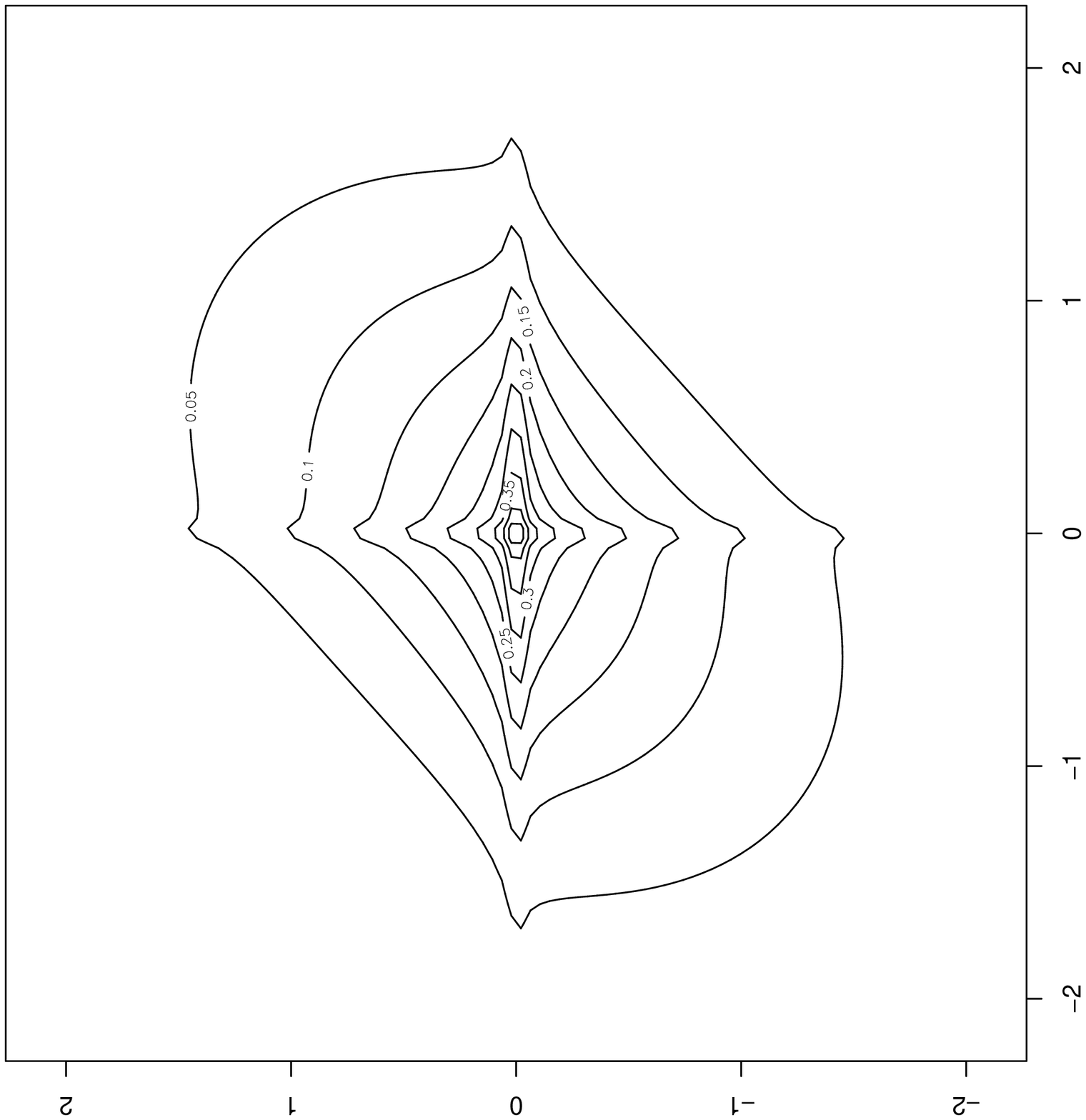} &
\includegraphics[width=.29\textwidth,angle=-90]{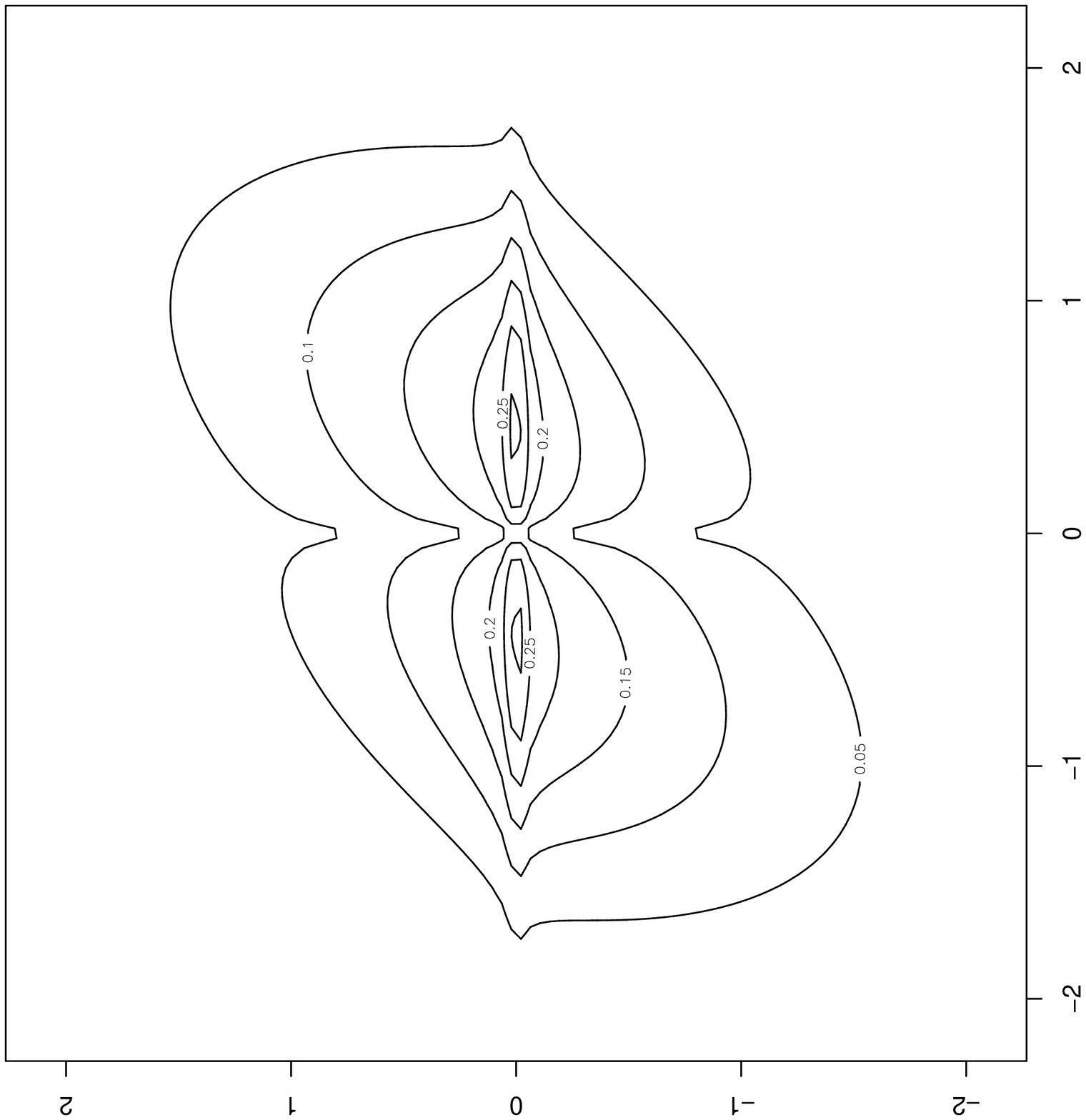} &
\includegraphics[width=.29\textwidth, angle=-90]{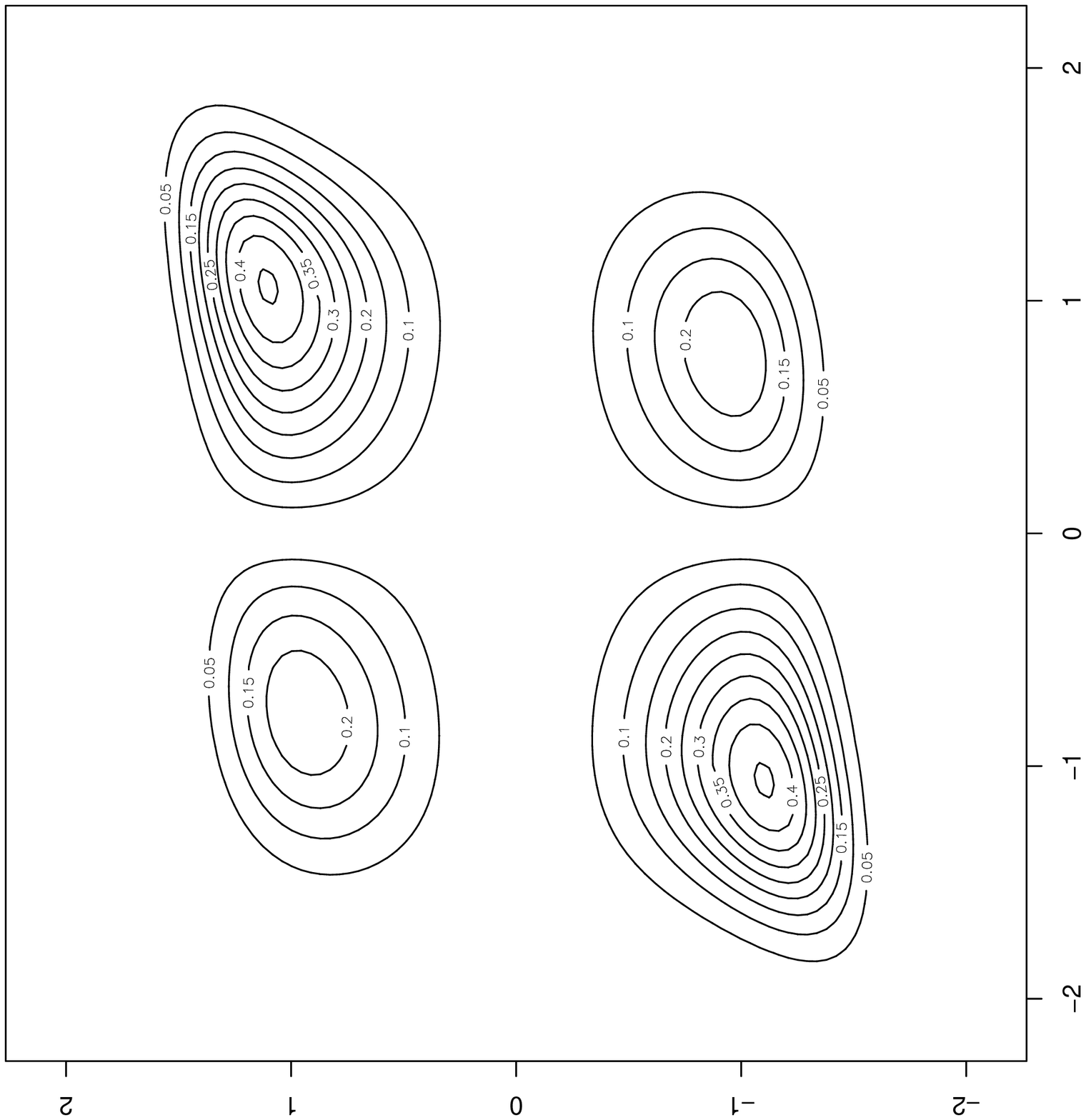} \\
\end{tabular}
\end{center}
\caption{Densities of three 2-dimensional nonparanormals.  The
  component functions have the form $f_j(x) =
  \text{sign}(x)|x|^{\alpha_j}$.   Left: $\alpha_1=0.9$,
  $\alpha_2=0.8$;   center: $\alpha_1=1.2$, $\alpha_2=0.8$; right
  $\alpha_1=2$, $\alpha_2=3$.  In each case $\mu=(0,0)$ and $\Sigma =
  \binom{1\ .5}{.5\ 1}$.}
\label{fig:densityex}
\vskip10pt
\end{figure}

Next we show that the above is true for any choice of
identification restrictions.

\begin{lemma}
\label{lemma:h}
Define 
\begin{equation}\label{eq::h}
h_j(x) = \Phi^{-1}(F_j(x))
\end{equation}
and let $\Lambda$ be the
covariance matrix of $h(X)$.  Then
$X_j \amalg X_k \given O_{\except\{j,k\}}$ if and only if
$\Lambda_{jk}^{-1} =0$.
\end{lemma}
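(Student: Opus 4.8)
The plan is to show that $h(X)$ is itself a jointly Gaussian vector, to identify its covariance $\Lambda$ with a rescaling of $\Sigma$, and then to read the conditional independence structure off $\Lambda^{-1}$ using the previous lemma. First I would use the marginal computation already carried out: writing $\mu_j=\E(Z_j)$ and $\sigma_j^2=\mathrm{Var}(Z_j)$ for the mean and variance of the $j$th Gaussian coordinate $Z_j=f_j(X_j)$, we have $F_j(x)=\Phi\!\left((f_j(x)-\mu_j)/\sigma_j\right)$, and inverting gives $h_j(x)=\Phi^{-1}(F_j(x))=(f_j(x)-\mu_j)/\sigma_j$. Hence $h_j(X_j)=(Z_j-\mu_j)/\sigma_j$ is nothing but the coordinatewise standardization of the Gaussian vector $Z$, so $h(X)=D^{-1}(Z-\mu)$ with $D=\mathrm{diag}(\sigma_1,\ldots,\sigma_p)$. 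It follows that $h(X)\sim N(0,\Lambda)$ where $\Lambda=D^{-1}\Sigma D^{-1}$ is the correlation matrix of $Z$.

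Next I would simply invert this relation: $\Lambda^{-1}=D\Sigma^{-1}D=D\Omega D$, so that $\Lambda^{-1}_{jk}=\sigma_j\sigma_k\,\Omega_{jk}$. Because each $\sigma_j>0$, the off-diagonal zero pattern of $\Lambda^{-1}$ coincides exactly with that of $\Omega$, i.e.\ $\Lambda^{-1}_{jk}=0 \iff \Omega_{jk}=0$. Combining this with the preceding lemma, which asserts that $X_j \amalg X_k \given O_{\except\{j,k\}} \iff \Omega_{jk}=0$, yields the desired equivalence $X_j \amalg X_k \given O_{\except\{j,k\}} \iff \Lambda^{-1}_{jk}=0$.

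This argument is mostly bookkeeping; its only genuine content is the observation that $h$ is the standardizing map of a Gaussian, which makes the Gaussian-graph lemma directly applicable. The point that requires care — and the reason the statement is isolated as a separate lemma — is the invariance claim announced by ``for any choice of identification restrictions'': the map $h_j=\Phi^{-1}\circ F_j$ is constructed from the marginal $F_j$ alone, hence is the same across all representations $(\mu,\Sigma,f)$ of a given nonparanormal law, so $\Lambda$ (and therefore the graph it encodes through $\Lambda^{-1}$) is well defined without first imposing the normalizations in \eqref{eq:identify}. An alternative route that sidesteps the explicit affine computation is to note that coordinatewise strictly increasing transformations preserve conditional independence, so the graph of $X$ equals the graph of $h(X)$; since $h(X)$ is Gaussian, that graph is encoded precisely by the zeros of its precision matrix $\Lambda^{-1}$.
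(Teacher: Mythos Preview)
Your proof is correct and follows essentially the same route as the paper: both arguments establish the affine relation $\Sigma = D\Lambda D$ (equivalently $\Lambda = D^{-1}\Sigma D^{-1}$) with $D=\mathrm{diag}(\sigma_1,\ldots,\sigma_p)$, invert to obtain $\Lambda^{-1}=D\Omega D$, and conclude that the zero patterns of $\Lambda^{-1}$ and $\Omega$ coincide. Your derivation is slightly more explicit in showing $h_j(x)=(f_j(x)-\mu_j)/\sigma_j$ from the marginal formula, and your closing remarks on identification-invariance and the alternative conditional-independence-preservation argument are nice additions, but the core is the same.
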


\begin{proof}
We can rewrite the covariance matrix as
$$
\Sigma_{jk} = {\rm Cov}(Z_j,Z_k) = \sigma_{j}\sigma_{k} {\rm Cov}(h_j(X_j),h_k(X_k)).
$$
Hence $\Sigma = D \Lambda D$ and 
$$
\Sigma^{-1} = D^{-1} \Lambda^{-1} D^{-1}.
$$
where $D$ is the diagonal matrix with $\text{diag}(D) = \sigma$.  The
zero pattern of $\Lambda^{-1}$ is therefore identical to the zero
pattern of $\Sigma^{-1}$.
\end{proof}

Thus, it is not necessary to estimate $\mu$ or $\sigma$ to 
estimate the graph.

Figure~\ref{fig:densityex} shows three examples of 2-dimensional
nonparanormal densities.  In each case, the component functions
$f_j(x)$ take the form
\begin{equation*}
f_j(x) = a_j \text{sign}(x) |x|^{\alpha_j} + b_j
\end{equation*}
where 
the constants $a_j$ and $b_j$ are set to enforce the
identifiability constraints \eqref{eq:identify}.  The covariance in
each case is $\Sigma = \binom{1\ .5}{.5\ 1}$ and the mean is $\mu=(0,0)$.
The exponent $\alpha_j$ determines the nonlinearity.  It can be seen
how the concavity of the density changes with the exponent $\alpha$,
and that $\alpha > 1$ can result in multiple modes.

The assumption that $f(X) = (f_1(X_1), \ldots, f_p(X_p)$ is normal leads
to a semiparametric model where only one dimensional functions need
to be estimated.  But the monotonicity of the functions $f_j$, which map onto $\reals$,
enables computational tractability of the nonparanormal.  For more
general functions $f$, the normalizing constant for the density 
\begin{equation}
p_X(x) \propto
\exp\left\{-\frac{1}{2}\left(f(x)-\mu\right)^{T}{\Sigma}^{-1}\left(f(x)-\mu\right)
\right\} \nonumber
\label{eq:gpndensity}
\end{equation}
cannot be
computed in closed form.

%

\section{Estimation Method}
\label{sec:estimation}

Let $X^{(1)},\ldots, X^{(n)}$ be a sample of size $n$ where
$X^\i=(X^\i_{1},\ldots, X^\i_{p})^T \in \mathbb{R}^p$.
In light of \eqref{eq::h} we define
\begin{equation}
\hat{h}_j(x) = \Phi^{-1}(\tilde{F}_j(x)) \nonumber
\end{equation}
where
$\tilde{F}_j$ is an estimator of $F_j$. A natural candidate for $\tilde{F}_j$
is the marginal empirical distribution function
$$
\hat{F}_j(t) \equiv \frac{1}{n}\sum_{i=1}^n\mathbf{1}_{\left\{X^{(i)}_{j} \leq t\right\}}.
$$
Now, let $\theta$ denote the parameters of the copula.
Tsukahara (2005) suggests taking
$\hat\theta$ to be the solution of
$$
\sum_{i=1}^n \phi\left( \tilde{F}_1(X_1^\i),\ldots, \tilde{F}_p(X_p^\i),\theta\right) = 0
$$
where $\phi$ is an estimating equation
and
$\tilde{F}_j(t) = n \hat{F}_j(t)/(n+1)$.
In our case, $\theta$ corresponds to the covariance matrix.
The resulting estimator $\hat\theta$, called a rank approximate $Z$-estimator,
has excellent theoretical properties. However, 
we are interested in the 
high dimensional scenario where the dimension $p$ is allowed to increase
with $n$; the variance of
$\hat{F}_j(t)$ is too large in this case.
Instead, we use the following truncated 
or {\em Winsorized\/}\footnote{After Charles
  P. Winsor, whom John Tukey credited with converting him from
  topology to statistics \citep{Tukey:VI:90}.} estimator:
\begin{equation}\label{eq:truncatedestimator}
\tilde{F}_j(x) =  
\begin{cases}
\delta_{n} & \text{if $\hat{F}_j(x) < \delta_n$} \\
\hat{F}_j(x) & \text{if $\delta_n \leq \hat{F}_j(x) \leq 1-\delta_n$} \\
(1-\delta_n) & \text{if $\hat{F}_j(x) > 1- \delta_n$},
\end{cases}
\end{equation}
where $\delta_n$ is a truncation parameter. 
Clearly, there is a bias-variance tradeoff in choosing $\delta_n$.
In what follows we use
\begin{equation}
\delta_{n} \equiv \frac{1}{4n^{1/4}\sqrt{\pi\log n}}.
\label{eq:delta}
\end{equation}
This provides the right balance so that we can achieve the desired
rate of convergence in our estimate of $\Omega$ and the associated
undirected graph $G$.

Given this estimate of the distribution of variable $X_{j}$, we then
estimate the transformation function $f_j$ by
\begin{eqnarray}\label{eq:keyestimator}
\tilde{f}_j(x) \equiv \hat{\mu}_{j}+\hat{\sigma}_{j}\tilde{h}_j(x)
\end{eqnarray}
where
\begin{equation}
\tilde{h}_j(x) = \Phi^{-1}\left(\tilde{F}_j(x)\right) \nonumber
\end{equation}
and
$\hat{\mu}_{j}$ and $\hat{\sigma}_{j}$ are the sample mean and the standard deviation:
$$
\hat{\mu}_{j} \equiv 
\frac{1}{n}\sum_{i=1}^{n}X^\i_{j}~~\mathrm{and}~~
\hat{\sigma}_{j}= \sqrt{\frac{1}{n}\sum_{i=1}^{n}\left(X^\i_{j} - \hat{\mu}_{j} \right)^{2}}.
$$
Now, let $S_n(\tilde{f})$ be the sample covariance matrix of
$\tilde{f}(X^{(1)}),\ldots, \tilde{f}(X^{(n)})$; that is,
\begin{eqnarray}
\label{eq:covardef}
S_n(\tilde{f}) &\equiv& \frac{1}{n} 
\sum_{i=1}^n \left( \tilde{f}(X^\i) - \mu_n(\tilde{f})\right)\left( \tilde{f}(X^\i) -  \mu_n(\tilde{f})\right)^T \\
\mu_n(\tilde{f}) &\equiv& \frac{1}{n} \sum_{i=1}^n \tilde{f}(X^\i). \nonumber
\end{eqnarray}
We then estimate $\Omega$ using $S_n(\tilde f)$.  For instance, the
maximum likelihood estimator is $\hat\Omega_n = S_n(\tilde
  f)^{-1}$.  The $\ell_1$-regularized estimator is 
\begin{equation}
\hat\Omega_n = \mathop{\text{arg\,min}}_{\Omega} 
\left\{\text{tr}\left(\Omega S_n(\tilde f)\right)  - \log |\Omega |  + \lambda \|\Omega\|_1\right\}
\label{eq:winsorized-est}
\end{equation}
where $\lambda$ is a regularization parameter, 
and $\|\Omega\|_1=\sum_{j\neq k}|\Omega_{jk}|$.  The estimated graph
is then  $\hat{E}_n = \{ (j,k):\ \hat\Omega_{jk}\neq 0\}$.  In the
following section we analyze the theoretical properties of this $\ell_1$-regularized estimator.

\section{Theoretical Results}
\label{sec:theory}

In this section we present our theoretical results on risk consistency, model selection
consistency, and norm consistency of the covariance $\Sigma$ and inverse covariance
$\Omega$.  From Lemma~\ref{lemma:h}, the estimate of the graph does
not depend on $\sigma_j, \; j\in\{1,\ldots, p\}$ and $\mu$, so we assume that $\sigma_j=1$
and $\mu = 0$.   Our key technical result is an analysis of covariance of the Winsorized estimator
defined in \eqref{eq:truncatedestimator}, \eqref{eq:keyestimator}, and
\eqref{eq:covardef}.  In particular, we show that under appropriate conditions, 
\begin{equation*}
\max_{j,k} \left| S_n(\tilde{f})_{jk} - S_n(f)_{jk}\right| = O_{P}\left( \sqrt{\frac{\log p \log^2 n}{n^{1/2}}}\right)
\end{equation*}
where $S_n(\tilde f)_{jk}$ denotes the $(j,k)$ entry of the matrix.  This result allows us
to leverage the recent analysis of \cite{Rothman:08} and \cite{Ravikumar:Gauss:09} in the
Gaussian case to obtain consistency results for the nonparanormal.  More precisely, our
main theorem is the following.

\begin{theorem}\label{thm.keylemma}
Suppose that $p = n^\xi$ and let $\tilde{f}$ be the Winsorized estimator defined in 
\eqref{eq:keyestimator} with $\delta_{n} = \ds \frac{1}{4n^{1/4}\sqrt{\pi\log n}}$.  Define
\begin{eqnarray*}
C(M,\xi) \equiv  \frac{48}{\sqrt{\pi\xi}}\left(\sqrt{2M}-1 \right)(M+2)
\end{eqnarray*}
for $M,\xi > 0$. Then for any  $\ds \epsilon \geq C(M,\xi)\sqrt{\frac{\log p \log^2
    n}{n^{1/2}}}$ and  
sufficiently large $n$, we have
\begin{eqnarray}
\mathbb{P}\left(\max_{jk} \left| S_n(\tilde{f})_{jk} - S_n(f)_{jk}\right|> \epsilon
\right) 
\leq \frac{c_{1}p}{(n\epsilon^{2})^{2\xi}} + \frac{c_{2}p}{n^{M\xi-1}} + 
c_{3}\exp\left(-\frac{c_{4}n^{1/2}\epsilon^2}{\log p  \log^2 n} \right), \nonumber
\end{eqnarray}
where $c_{1}, c_{2}, c_{3},  c_{4}$ are positive constants.
\end{theorem}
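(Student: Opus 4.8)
The plan is to reduce the entrywise covariance bound to a single weighted-average control of the marginal transformation error $\Delta^i_j \equiv \widetilde h_j(X^i_j) - h_j(X^i_j) = \Phi^{-1}(\widetilde F_j(X^i_j)) - \Phi^{-1}(F_j(X^i_j))$, and then to bound that error by pairing a Dvoretzky--Kiefer--Wolfowitz estimate for the empirical marginals with a careful treatment of the endpoint singularity of $\Phi^{-1}$, which the Winsorization is designed to tame. By the reductions $\mu=0$ and $\sigma_j=1$, so $f_j=h_j$ and $Z^i_j \equiv f_j(X^i_j)=h_j(X^i_j)$ are jointly $N(0,\Sigma)$ with unit variances. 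Using shift-invariance and bilinearity of the empirical covariance $\widehat{\mathrm{Cov}}$, I write $S_n(\widetilde f)_{jk} = \hat\sigma_j\hat\sigma_k\,\widehat{\mathrm{Cov}}(\widetilde h_j,\widetilde h_k)$ and $S_n(f)_{jk} = \widehat{\mathrm{Cov}}(h_j,h_k)$, and then expand the difference, with $\widetilde h_j = h_j + \Delta_j$, into a term $(\hat\sigma_j\hat\sigma_k-1)\widehat{\mathrm{Cov}}(h_j,h_k)$ and the cross terms $\widehat{\mathrm{Cov}}(\Delta_j,\widetilde h_k)$ and $\widehat{\mathrm{Cov}}(h_j,\Delta_k)$. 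The first term is lower order, since $\hat\sigma_j\hat\sigma_k \to 1$ and $\widehat{\mathrm{Cov}}(h_j,h_k)$ concentrates near $\Sigma_{jk} \in [-1,1]$. Each cross term, after recentering, is bounded by the weighted average $\frac1n\sum_i |\Delta^i_j|\,|\widetilde h_k(X^i_k)-\overline{\widetilde h}_k|$, so the whole statement follows once this average is controlled uniformly in $j,k$.

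To bound $\Delta^i_j$, apply the mean value theorem to $\Phi^{-1}$: $\Delta^i_j = (\widetilde F_j(X^i_j)-F_j(X^i_j))/\phi(\Phi^{-1}(\xi^i_j))$ for an intermediate $\xi^i_j$. The numerator splits as $|\widetilde F_j - F_j| \le |\widetilde F_j - \hat F_j| + |\hat F_j - F_j|$, the empirical-process deviation plus the Winsorization clip from \eqref{eq:truncatedestimator} (nonzero only where $\hat F_j$ leaves $[\delta_n,1-\delta_n]$, and then at most of order $\delta_n$). The denominator carries the difficulty, because $(\Phi^{-1})'(u)=1/\phi(\Phi^{-1}(u))$ blows up as $u\to0,1$. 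I therefore split the sum over $i$ into a bulk part, where $\hat F_j(X^i_j)\in[\delta_n,1-\delta_n]$, and the extreme part, where the value is Winsorized. On the bulk part the argument stays on $[\delta_n,1-\delta_n]$, where $\phi(\Phi^{-1}(\cdot)) \gtrsim \delta_n\sqrt{\log(1/\delta_n)}$ using $\Phi^{-1}(\delta_n) \sim -\sqrt{2\log(1/\delta_n)}$; since $\log(1/\delta_n)\asymp \tfrac14\log n$ this gives a derivative factor of order $n^{1/4}$, so the bulk contribution is of order $n^{1/4}\sup_{x,j}|\hat F_j(x)-F_j(x)|$, the main $n^{-1/4}$ scale. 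On the extreme part, which on the Gaussian envelope event $\{\max_{i,j}|Z^i_j|\le\sqrt{2M\log n}\}$ has empirical fraction of order $\delta_n$, each $|\Delta^i_j|$ is only $O(\sqrt{\log n})$, so this part contributes at the scale $\delta_n$ times the $\sqrt{\log n}$ envelopes, producing the extra logarithmic factors in the stated rate.

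The three probability terms then arise by union bounds over the $p=n^\xi$ coordinates. The DKW inequality gives $\P(\sup_{x,j}|\hat F_j(x)-F_j(x)|>t)\le 2p\,e^{-2nt^2}$, which after propagation through the $n^{1/4}$ derivative factor yields the exponential term $c_3\exp(-c_4 n^{1/2}\epsilon^2/(\log p\log^2 n))$. The envelope event $\{\max_{i,j}|Z^i_j|\le\sqrt{2M\log n}\}$ is controlled by a Gaussian tail bound and a union bound over the $np$ entries, which is the source of the parameter $M$ in both the constant $C(M,\xi)$ and the term $c_2 p/n^{M\xi-1}$; raising $M$ shrinks this probability at the cost of enlarging $C(M,\xi)$. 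The remaining term $c_1 p/(n\epsilon^2)^{2\xi}$ comes from controlling the empirical first moments and the extreme order statistics of the $Z^i_j$ that appear as multipliers, again via moment inequalities and a union bound. The specific choice $\delta_n = 1/(4n^{1/4}\sqrt{\pi\log n})$ is exactly the balance point between the bulk scale $n^{1/4}\sqrt{\log p/n}$, the extreme-part scale $\delta_n\log n$, and the derivative growth, and it is this balance that fixes both the $n^{1/2}$ in the denominator and the $\log^2 n$ factor.

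The hard part is the uniform control of the weighted average $\frac1n\sum_i|\Delta^i_j|\,|\widetilde h_k(X^i_k)-\overline{\widetilde h}_k|$ against the endpoint singularity of $(\Phi^{-1})'$: one must simultaneously handle the random location of the intermediate point $\xi^i_j$ relative to the truncation window, the empirical fluctuations of $\hat F_j$, and the largest Gaussian order statistics, all uniformly over $i$ and over the $p=n^\xi$ coordinates, and track the logarithmic factors precisely enough to land at exactly $\log^2 n$. By contrast the bilinear reduction, the concentration of the sample standard deviations, and the final assembly of the three tail terms by union bounds are comparatively routine.
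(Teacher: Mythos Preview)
Your proposal is correct and uses the same core ingredients as the paper: the mean value theorem applied to $\Phi^{-1}$, the Dvoretzky--Kiefer--Wolfowitz inequality for $\hat F_j-F_j$, a Gaussian maximal inequality for the envelope event, and a crude $O(\sqrt{\log n})$ bound on $\Delta^i_j$ for the rare ``bad'' indices. The organization of the split is genuinely different, though. The paper partitions the \emph{domain} deterministically into a middle region $\mathcal M_n=\{|f_j(t)|<\sqrt{\beta\log n}\}$ and end region $\mathcal E_n$ (with $\beta=1/2$ chosen so that $\Phi(\sqrt{\beta\log n})\approx 1-\delta_n$), treats the four $\mathcal M_n/\mathcal E_n$ combinations for $(X_{ij},X_{ik})$ separately, and on the middle block introduces an auxiliary Hoeffding event $\mathcal B_n$ guaranteeing that truncation is inactive there before invoking MVT+DKW. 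You instead split by \emph{empirical rank} (Winsorized versus not), which makes the identification $\tilde F_j=\hat F_j$ automatic on the bulk and avoids the $\mathcal B_n$ device; the price is that the split is random. Both routes land on the same rate.

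One point in your sketch needs care: the claim that on the bulk ``the argument stays on $[\delta_n,1-\delta_n]$'' only controls $\tilde F_j(X^i_j)$, not the true value $F_j(X^i_j)$, so the MVT intermediate point $\xi^i_j$ is not yet pinned to that window. You must first condition on the DKW event $\sup_j\|\hat F_j-F_j\|_\infty\le t$ with $t=o(\delta_n)$; since here $\log p=\xi\log n$ gives $t\asymp\sqrt{\log n/n}\ll\delta_n\asymp n^{-1/4}(\log n)^{-1/2}$, this forces $F_j(X^i_j)\in[\delta_n-t,\,1-\delta_n+t]$ on the bulk and the derivative bound goes through. Also, your Mills-ratio bound $\phi(\Phi^{-1}(\delta_n))\gtrsim\delta_n\sqrt{\log(1/\delta_n)}$ is actually sharper than the paper's $\phi(\Phi^{-1}(\delta_n))\ge\delta_n/\sqrt{2\pi}$ (the paper's cruder bound is where its second $\sqrt{\log n}$ factor, and hence the $\log^2 n$ in the rate, comes from). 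Finally, be careful with the $(\hat\sigma_j\hat\sigma_k-1)$ term: $\hat\sigma_j$ is the sample standard deviation of the \emph{untransformed} marginal $X_j$, which can be heavy-tailed under the nonparanormal, so its concentration is not immediate; the paper sidesteps this by redefining $\tilde f_j=\Phi^{-1}(\tilde F_j)$ for the proof, dropping $\hat\mu_j,\hat\sigma_j$ altogether via the Lemma~\ref{lemma:h} reduction.
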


The proof of the above theorem is given in Section~\ref{sec:proofs}.
The following corollary is immediate, which specifies the scaling of the dimension in terms of
sample size.  

\begin{corollary}
Let $M>\ds1+\frac{1}{\xi}$.  Then
\begin{eqnarray}
\mathbb{P}\left(\max_{jk} 
\left| S_n(\tilde{f})_{jk} - S_n(f)_{jk}\right|>C(M,\xi)\sqrt{\frac{\log p \log^2 n}{n^{1/2}}}\right) = 
o(1) + c_{3}\exp\left(-c_{4}C^{2}(M,\xi) \right).\nonumber
\end{eqnarray}
Hence,
\begin{eqnarray}
\max_{j,k} \left| S_n(\tilde{f})_{jk} - S_n(f)_{jk}\right| = O_{P}\left( \sqrt{\frac{\log p \log^2 n}{n^{1/2}}}\right).  
\label{eq.key}
\end{eqnarray}
\end{corollary}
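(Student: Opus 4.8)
The plan is to deduce the corollary directly from Theorem~\ref{thm.keylemma} by substituting the threshold value $\epsilon = C(M,\xi)\sqrt{\log p\,\log^2 n/n^{1/2}}$ into the tail bound and tracking each of its three summands as $n\to\infty$. This choice meets the hypothesis $\epsilon \geq C(M,\xi)\sqrt{\log p\,\log^2 n/n^{1/2}}$ with equality, so the theorem applies for all sufficiently large $n$. Throughout I would use $p = n^\xi$, equivalently $\log p = \xi\log n$, so that $\log p\,\log^2 n = \xi\log^3 n$.

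Next I would evaluate each term. For the first term, writing $n\epsilon^2 = C^2(M,\xi)\,\log p\,\log^2 n\, n^{1/2}$ gives $(n\epsilon^2)^{2\xi} = C^{4\xi}(\log p\,\log^2 n)^{2\xi} n^\xi$, and the factor $n^\xi$ cancels the numerator $p = n^\xi$, leaving $c_1/[C^{4\xi}(\xi\log^3 n)^{2\xi}]$, which is $O((\log n)^{-6\xi}) = o(1)$ since $\xi>0$. For the second term, $c_2 p/n^{M\xi-1} = c_2 n^{1-(M-1)\xi}$, which tends to zero precisely when $(M-1)\xi > 1$, i.e. when $M > 1 + 1/\xi$ --- exactly the hypothesis of the corollary, and this is the only place it is used. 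For the third term, $n^{1/2}\epsilon^2 = C^2(M,\xi)\,\log p\,\log^2 n$, so the exponent reduces exactly to $-c_4 C^2(M,\xi)$ and the term equals $c_3\exp(-c_4 C^2(M,\xi))$ with no residual $n$-dependence. Summing the three contributions yields the first displayed bound, $o(1) + c_3\exp(-c_4 C^2(M,\xi))$.

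For the $O_P$ conclusion I would note that, for a fixed $M$, the bound does not vanish: it retains the constant piece $c_3\exp(-c_4 C^2(M,\xi))$. The point is to let $M$ grow. Since $C(M,\xi) = \frac{48}{\sqrt{\pi\xi}}(\sqrt{2M}-1)(M+2) \to \infty$ as $M\to\infty$, this residual can be made arbitrarily small. Concretely, given $\eta>0$, first choose $M$ large enough that $c_3\exp(-c_4 C^2(M,\xi)) < \eta/2$, then take the $O_P$ constant to be $K = C(M,\xi)$. The first display then shows $\mathbb{P}(\max_{jk}|S_n(\tilde f)_{jk}-S_n(f)_{jk}| > K\sqrt{\log p\,\log^2 n/n^{1/2}}) \leq o(1) + \eta/2 < \eta$ for all large $n$, which is exactly the definition of $O_P(\sqrt{\log p\,\log^2 n/n^{1/2}})$.

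The computations are entirely routine; the one conceptual wrinkle --- and the step I would flag as requiring care rather than effort --- is that the stochastic bound cannot be obtained from a single application of the theorem with fixed $M$, because of the non-vanishing exponential remainder. The resolution is to trade $M$ against the failure probability $\eta$, exploiting that increasing $M$ simultaneously shrinks $c_3\exp(-c_4 C^2(M,\xi))$ and enlarges the constant $C(M,\xi)$ multiplying the rate; one must also verify that the constants $c_1,\dots,c_4$ supplied by the theorem are universal and do not themselves depend on $M$, so that this limiting argument is legitimate.
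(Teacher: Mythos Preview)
Your argument is correct and is exactly the intended one: the paper itself offers no proof beyond calling the corollary ``immediate,'' and your substitution of $\epsilon = C(M,\xi)\sqrt{\log p\,\log^2 n/n^{1/2}}$ into Theorem~\ref{thm.keylemma} together with the $M\to\infty$ device for the $O_P$ statement is precisely what that word is covering. Your flag about the constants $c_3,c_4$ not depending on $M$ is the right thing to check; inspecting the proof of Theorem~\ref{thm.keylemma} one sees these arise from Lemma~\ref{lemma.term2}, whose analysis takes place on the middle region $\mathcal{M}_n$ defined via $\beta=1/2$ and does not involve $M$, so the limiting argument goes through.
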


The following corollary yields estimation consistency in both the
Frobenius norm and the $\ell_{2}$-operator norm.  The proof follows
the same arguments as the proof of Theorem 1 and Theorem 2 from
\cite{Rothman:08}, replacing their Lemma 1 with our
Theorem~\ref{thm.keylemma}. 

For a matrix $A = (a_{ij})$, the Frobenius norm $\| \cdot \|_{F}$ is
defined as $ \| A\|_{\rm F} \equiv \sqrt{\sum_{i,j}a^{2}_{ij}}$.  The 
$\ell_{2}$-operator norm $\|\cdot \|_{2}$ is defined as the magnitude of
the largest eigenvalue of the matrix, $ \|A \|_{2} \equiv \max_{\|x
  \|_{2} =1} \|Ax \|_2$.  In the following, we write $a_{n}\asymp b_{n}$ if
there are positive constants $c$ and $C$ independent of $n$ such that
$ c\leq {a_{n}}/{b_{n}} \leq C$.

\begin{corollary} \label{consistency}
Suppose that the data are generated as $X^{(i)}\sim \npn(\mu_0,\Sigma_0,
f_0)$, and let $\Omega_0 = \Sigma_0^{-1}$.  If the
regularization parameter $\lambda_n$ is chosen as
\begin{eqnarray*}
\lambda_n \asymp \sqrt{\frac{\log p \log^2 n}{n^{1/2}}}
\end{eqnarray*}
then the nonparanormal estimator $\hat{\Omega}_n$ of
\eqref{eq:winsorized-est} satisfies
\begin{eqnarray}\label{eq.Frobenius}
\|\hat{\Omega}_n -\Omega_0\|_{\rm F} = O_{P}\left( \sqrt{\frac{(s + p)\log p \log^2 n}{n^{1/2}} }\right) \nonumber
\end{eqnarray}
and
\begin{eqnarray}\label{eq.L2}
\|\hat{\Omega}_n -\Omega_{0}\|_{2} = O_{P}\left( \sqrt{\frac{s\log p \log^2 n}{n^{1/2}} }\right), \nonumber
\end{eqnarray}
where 
\begin{eqnarray}
s \equiv \mathrm{Card}\left(\left\{(i,j) \in \{1,\ldots, p \} \times \{1,\ldots, p \}  \given  {{\Omega_0}(i,j)}\neq 0,~ i\neq j\right\}\right) \nonumber
\end{eqnarray}
is the number of nonzero off-diagonal elements of the true precision matrix.
\end{corollary}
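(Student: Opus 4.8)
The statement is essentially a reduction, as the authors indicate: it inherits the proof architecture of \cite{Rothman:08}, with their Gaussian concentration lemma replaced by Theorem~\ref{thm.keylemma}. The plan is therefore to (i) isolate the single probabilistic quantity on which the Rothman analysis actually depends, namely the element-wise sup-norm deviation of the plugged-in covariance matrix from the truth; (ii) establish the rate of this deviation for the Winsorized sample covariance $S_n(\tilde f)$; and (iii) feed that rate into the otherwise deterministic argument of \cite{Rothman:08} to read off the Frobenius and $\ell_2$-operator norm bounds. The key observation enabling (i) is that the $\ell_1$-regularized log-likelihood analysis never uses the sampling mechanism directly; it uses only that the input covariance is positive semidefinite and that $\max_{jk}|S_{jk}-{\Sigma_0}_{,jk}|$ is of order $\lambda_n$. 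Since $S_n(\tilde f)$ is a bona fide sample covariance it is automatically positive semidefinite, so only the deviation bound needs to be supplied.

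To obtain that bound I would split, by the triangle inequality,
\[
\max_{jk}\bigl|S_n(\tilde f)_{jk}-{\Sigma_0}_{,jk}\bigr|
\;\le\;
\max_{jk}\bigl|S_n(\tilde f)_{jk}-S_n(f)_{jk}\bigr|
\;+\;
\max_{jk}\bigl|S_n(f)_{jk}-{\Sigma_0}_{,jk}\bigr|.
\]
The first term is controlled by \eqref{eq.key}, the content of Theorem~\ref{thm.keylemma}, and is $O_P\!\bigl(\sqrt{\log p\,\log^2 n/n^{1/2}}\bigr)$. For the second term, the defining assumption $X^{(i)}\sim\npn(\mu_0,\Sigma_0,f_0)$ means that the true transformation maps the data to i.i.d. Gaussian vectors $Z^{(i)}=f(X^{(i)})\sim N(\mu_0,\Sigma_0)$; hence $S_n(f)$ is a genuine Gaussian sample covariance and Rothman's own concentration lemma applies verbatim, giving $O_P\!\bigl(\sqrt{\log p/n}\bigr)$. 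Since $\log^2 n\,\sqrt n\to\infty$, the Winsorization error dominates, so $\max_{jk}|S_n(\tilde f)_{jk}-{\Sigma_0}_{,jk}|=O_P\!\bigl(\sqrt{\log p\,\log^2 n/n^{1/2}}\bigr)$, which is precisely of the order of the prescribed $\lambda_n$.

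With this deviation rate in hand, I would reproduce the Rothman argument with $S$ replaced by $S_n(\tilde f)$. Writing $\Delta=\hat\Omega_n-\Omega_0$ and expanding the penalized objective $\mathrm{tr}(\Omega S_n(\tilde f))-\log|\Omega|+\lambda_n\|\Omega\|_1$ about $\Omega_0$, one Taylor-expands the log-determinant, bounds the linear cross term by $|\mathrm{tr}(\Delta(S_n(\tilde f)-\Sigma_0))|\le \max_{jk}|S_n(\tilde f)_{jk}-{\Sigma_0}_{,jk}|\,\|\Delta\|_1$, decomposes $\Delta$ into its diagonal, on-support, and off-support off-diagonal parts, and uses the convexity of the objective to confine the minimizer to a Frobenius ball of radius of order $\sqrt{s+p}\,\lambda_n$. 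This yields the Frobenius bound; the sharper $\ell_2$-operator bound depending on $s$ rather than $s+p$ comes from the second-stage refinement in Theorem 2 of \cite{Rothman:08}, again with only the scalar deviation rate changed. Substituting $\lambda_n\asymp\sqrt{\log p\,\log^2 n/n^{1/2}}$ then produces exactly the two displayed rates.

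I expect the only genuine difficulty to be a bookkeeping one: verifying that nothing in the \cite{Rothman:08} analysis uses a property of the Gaussian sample covariance beyond positive semidefiniteness and the sup-norm concentration, and in particular that the regularity hypotheses their theorems place on $\Omega_0$ and $\Sigma_0$ (bounded eigenvalues together with the sparsity count $s$) are conditions on the \emph{target}, which here transfer unchanged because $f(X)$ is exactly $N(\mu_0,\Sigma_0)$. Once this is confirmed, the corollary is a clean substitution, with all of the genuinely new probabilistic content already absorbed into Theorem~\ref{thm.keylemma}.
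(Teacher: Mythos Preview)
Your proposal is correct and matches the paper's approach exactly: the paper simply states that the proof follows the arguments of Theorems~1 and~2 of \cite{Rothman:08} with their Lemma~1 replaced by Theorem~\ref{thm.keylemma}, and your triangle-inequality splitting through $S_n(f)$ is precisely how that replacement is effected in practice. Your additional remark that the Rothman analysis uses the sampling mechanism only through positive semidefiniteness and the sup-norm concentration of the input covariance is the right observation, and nothing further is needed.
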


To prove the model selection consistency result, we need further assumptions.  We follow
Ravikumar (2009) and let the $p^{2} \times p^{2}$ Fisher information
matrix of $\Sigma_{0}$ be $\Gamma \equiv \Sigma_{0} \otimes \Sigma_{0}
$ where $\otimes$ is the Kronecker matrix product, and define
the support set $S$ of $\Omega_0= \Sigma_0^{-1}$ as
\begin{eqnarray}
S \equiv \left\{(i,j) \in \{1,\ldots, p \} \times \{1,\ldots, p \}  \given  \Omega_{0}(i,j)\neq 0\right\}. \nonumber
\end{eqnarray}

We use $S^{c}$ to denote the complement of $S$ in the set $\{1,\ldots, p\}
\times \{1,\ldots, p \}$, and for any two subsets $T$ and $T'$ of
$\{1,\ldots, p \} \times \{1,\ldots, p\}$, we use $\Gamma_{TT'}$ to
denote the sub-matrix with rows and columns of $\Gamma$ indexed by $T$
and $T'$ respectively. 

\begin{assumption}\label{assump.irrep}
There exists some $\alpha \in (0,1]$, such that 
$\left\| \Gamma_{S^{c}S} (\Gamma_{SS})^{-1} \right\|_{\infty} \leq 1 - \alpha. 
$
\end{assumption}

As in \cite{Ravikumar:Gauss:09}, we define two quantities
$K_{\Sigma_{0}} \equiv \|\Sigma_{0} \|_{\infty}$ and $K_{\Gamma}
\equiv \| (\Gamma_{SS} )^{-1}\|_{\infty}$.  Further, we define the
maximum row degree as
\begin{eqnarray}
d \equiv \max_{i=1,\ldots, p}\mathrm{Card}\left( \{ j \in {1,\ldots, p} \given \Omega_{0}(i,j) \neq 0\}\right). \nonumber
\end{eqnarray}

\begin{assumption}\label{assump.boundedquantities}
  The quantities $K_{\Sigma^{0}}$ and $K_{\Gamma} $ are bounded, and
  there are positive constants $C_{1}$ and $C_{2}$ such that
\begin{eqnarray}
\min_{(j,k) \in S}\left| \Omega_{0}(j,k)\right| \geq C_{1} \sqrt{\frac{\log n}{n}}~~\mathrm{and}~~n \geq C_{2}d ^{2}\log p \nonumber
\end{eqnarray}
for large enough $n$.
\end{assumption}

The proof of the following uses our Theorem~\ref{thm.keylemma} in place of 
equation (12) in the analysis of \cite{Ravikumar:Gauss:09},

\begin{corollary}  
Suppose the regularization parameter is chosen as
\begin{eqnarray}
\lambda_n \asymp \sqrt{\frac{\log p \log^2 n}{n^{1/2}}}. \nonumber
\end{eqnarray}
Then the nonparanormal estimator $\hat{\Omega}_n$ satisfies
\begin{eqnarray}
\mathbb{P}\left( \cG\left(\hat{\Omega}_n,  \Omega_{0}\right) \right) \geq 1-o(1) \nonumber
\end{eqnarray}
where $\cG( \hat{\Omega}_n,  \Omega_{0} )$ is the event
\begin{eqnarray}
\left\{ \mathrm{sign} \left(\hat{\Omega}_n(j,k)\right) = \mathrm{sign} \left(\Omega_{0}^{-1}(j,k)\right) ,~~~\forall j,k \in S \right\}. \nonumber
\end{eqnarray}
\end{corollary}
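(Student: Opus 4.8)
The plan is to reduce the claim to the Gaussian sparsistency analysis of \cite{Ravikumar:Gauss:09} by isolating the single point at which normality enters their argument---namely, a concentration bound on the input covariance matrix---and then verifying that our Winsorized covariance $S_n(\tilde f)$ satisfies the analogous bound around the true $\Sigma_0$, albeit at the slower nonparanormal rate. Because we have set $\mu=0$ and $\sigma_j=1$ (justified by Lemma~\ref{lemma:h}), under $\npn(\mu_0,\Sigma_0,f)$ the transformed sample $f(X^\ip)$ is exactly i.i.d.\ $N(0,\Sigma_0)$, so $S_n(f)$ is a genuine Gaussian sample covariance to which equation~(12) of \cite{Ravikumar:Gauss:09} applies verbatim.

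First I would record the deterministic core of \cite{Ravikumar:Gauss:09}. Their argument is a primal--dual witness construction: one solves the $\ell_1$-penalized problem restricted to the support $S$, then exhibits a dual certificate and checks that the resulting pair satisfies the KKT conditions of the full problem with strict dual feasibility on $S^c$. Granting a bound of the form $\max_{jk}\bigl|S_n - \Sigma_0\bigr|_{jk}\le\delta$, their analysis shows---using the irrepresentability Assumption~\ref{assump.irrep}, the boundedness of $K_{\Sigma_0}$ and $K_\Gamma$, and the degree/sample-size control $n\ge C_2 d^2\log p$ of Assumption~\ref{assump.boundedquantities}---that for $\lambda_n$ proportional to $\delta$ the witness is the unique minimizer, its support lies inside $S$, and (via the minimum-signal condition $\min_{(j,k)\in S}|\Omega_0(j,k)|\ge C_1\sqrt{\log n/n}$) its signs on $S$ match those of $\Omega_0$. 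This step uses no normality; normality enters only in producing $\delta$ with high probability.

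Next I would supply the concentration bound for our estimator by the triangle inequality
\[
\max_{jk}\bigl|S_n(\tilde f)_{jk} - (\Sigma_0)_{jk}\bigr|
\le \max_{jk}\bigl|S_n(\tilde f)_{jk} - S_n(f)_{jk}\bigr|
 + \max_{jk}\bigl|S_n(f)_{jk} - (\Sigma_0)_{jk}\bigr|.
\]
The first term is controlled by Theorem~\ref{thm.keylemma}, of order $O_P\bigl(\sqrt{\log p\,\log^2 n/n^{1/2}}\bigr)$; the second is the ordinary Gaussian deviation bounded by equation~(12) of \cite{Ravikumar:Gauss:09}, of the faster order $O_P\bigl(\sqrt{\log p/n}\bigr)$, hence dominated. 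Thus $\delta\asymp\sqrt{\log p\,\log^2 n/n^{1/2}}$, matching the prescribed $\lambda_n$, and feeding this $\delta$ into the deterministic witness argument yields $\mathbb{P}\bigl(\cG(\hat\Omega_n,\Omega_0)\bigr)\ge 1-o(1)$.

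The main obstacle I anticipate is bookkeeping the slower $n^{-1/4}$ (up to logarithms) rate through the threshold inequalities of the witness argument, rather than any new conceptual difficulty. Every occurrence of $n$ entering \cite{Ravikumar:Gauss:09} through the covariance deviation must be re-expressed in terms of $n^{1/2}$, and one must confirm two things for $n$ large: that the minimum-signal condition of Assumption~\ref{assump.boundedquantities} still exceeds the (now larger) element-wise estimation error, so no true edge is thresholded away; and that the irrepresentability margin $\alpha$ still secures strict dual feasibility on $S^c$ at the rescaled $\lambda_n$. Once these inequalities are verified, the conclusion follows exactly as in the Gaussian case.
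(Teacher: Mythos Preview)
Your proposal is correct and matches the paper's approach: the paper's entire proof is the one-line remark that Theorem~\ref{thm.keylemma} replaces equation~(12) of \cite{Ravikumar:Gauss:09}, and you have simply spelled out the natural triangle-inequality bridge $\max_{jk}|S_n(\tilde f)-\Sigma_0|\le\max_{jk}|S_n(\tilde f)-S_n(f)|+\max_{jk}|S_n(f)-\Sigma_0|$ needed to make that substitution rigorous. Your observation that the bookkeeping must be rechecked at the slower $n^{-1/4}$ rate---in particular that the minimum-signal threshold in Assumption~\ref{assump.boundedquantities} must dominate the new elementwise error---is a valid caveat that the paper leaves implicit.
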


Our persistency (risk consistency) result parallels the persistency result for 
additive models given in \cite{Ravikumar:08}, and allows model dimension
that grows exponentially with sample size.  The definition in this theorem
uses the fact (from Lemma~\ref{lemma:quantile}) that
$\sup_x \Phi^{-1}\left(\tilde{F}_j(x)\right) \leq \sqrt{2\log n}$
when $\delta_{n} = 1/ (4n^{1/4}\sqrt{\pi\log n})$.

In the next theorem, we do not assume the true model is  nonparanormal and define the population and sample risks as
\begin{eqnarray*}
R(f, \Omega) &=&  \frac{1}{2}\left\{\tr\left[ \Omega \E(f(X) f(X)^T\right] - \log
  |  \Omega| - p\log (2\pi)\right\} \\
\hat R(f,  \Omega) &=&  \frac{1}{2}\left\{\tr\left[ \Omega S_n(f)\right] - \log
  | \Omega | - p\log (2\pi)\right\}.
\end{eqnarray*}

\begin{theorem}
\label{thm:persist}
Suppose that $p \leq e^{n^\xi}$ for some
$\xi < 1$, and define the classes
\begin{eqnarray}
\cM_n &=& 
\left\{ f \,:\,  \text{$f$ is monotone with $\|f\|_\infty \leq  C \sqrt{\log n}$}\right\} \nonumber\\
\cE_n &=& 
\left\{ \Omega  \,:\, \|\Omega^{-1}\|_1 \leq L_n\right\}. \nonumber
\end{eqnarray}
Let ${\hat\Omega_n}$ be given by
\begin{equation}
\hat\Omega_n = \mathop{\text{arg\,min}}_{\Omega \in \cE_n}  
\left\{\text{tr}\left(\Omega S_n(\tilde f)\right)  - \log |\Omega |\right\}. \nonumber
\end{equation}
Then
\begin{equation*}
R(\tilde f_n, \hat{\Omega}_n) - \inf_{(f,\Omega)\in\cM_n^p\oplus\cE_n} R(f,\Omega) =
O_P\left(L_n\sqrt{\frac{\log n}{n^{1-\xi}}}\right).
\end{equation*}
Hence the Winsorized estimator of $(f,\Omega)$ with 
$\delta_{n} = 1/ (4n^{1/4}\sqrt{\pi\log n})$ 
is persistent over
$\cE_n$ when $L_n = o\left(n^{(1-\xi)/2} / \sqrt{\log n}\right)$.
\end{theorem}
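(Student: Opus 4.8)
The plan is to run the standard Greenshtein--Ritov persistence argument, exploiting that $R$ and $\hat R$ differ only through the substitution of the population second-moment matrix $\Sigma(f)\equiv\E[f(X)f(X)^T]$ by the sample version $S_n(f)$, and that both risks are \emph{linear} in $\Omega$. Writing $(f^*,\Omega^*)$ for a minimizer of $R$ over $\cM_n^p\oplus\cE_n$, I would telescope
\[
\begin{aligned}
R(\tilde f_n,\hat\Omega_n)-R(f^*,\Omega^*)
&= \bigl[R(\tilde f_n,\hat\Omega_n)-\hat R(\tilde f_n,\hat\Omega_n)\bigr] \\
&\quad+ \bigl[\hat R(\tilde f_n,\hat\Omega_n)-\hat R(\tilde f_n,\Omega^*)\bigr]
+ \bigl[\hat R(\tilde f_n,\Omega^*)-R(f^*,\Omega^*)\bigr].
\end{aligned}
\]
The middle bracket is $\le 0$ because $\hat\Omega_n$ minimizes $\hat R(\tilde f_n,\cdot)$ over $\cE_n$ and $\Omega^*\in\cE_n$; the remaining brackets reduce to controlling $R-\hat R$.

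Two structural facts make the argument go through. First, since $\tilde F_j$ and $\Phi^{-1}$ are monotone, each $\tilde f_{n,j}$ is monotone, and by Lemma~\ref{lemma:quantile} we have $\|\tilde f_n\|_\infty\le\sqrt{2\log n}$; hence $\tilde f_n\in\cM_n^p$ with probability one (for $C\ge\sqrt 2$), so it is legitimate to treat the estimated transform as a member of the oracle class. Second, for any $(f,\Omega)$,
\[
R(f,\Omega)-\hat R(f,\Omega)=\tfrac12\,\tr\!\bigl[\Omega\bigl(\Sigma(f)-S_n(f)\bigr)\bigr],
\]
so the trace/Hölder inequality gives $|R(f,\Omega)-\hat R(f,\Omega)|\le \tfrac12\|\Omega\|_1\,\max_{jk}|\Sigma(f)_{jk}-S_n(f)_{jk}|$, and the $\ell_1$ constraint defining $\cE_n$ bounds the relevant norm by $L_n$.

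The heart of the proof is therefore the uniform deviation bound
\[
\Delta_n \equiv \sup_{f\in\cM_n^p}\ \max_{j,k}\bigl|S_n(f)_{jk}-\Sigma(f)_{jk}\bigr| = O_P\!\left(\sqrt{\tfrac{\log n}{n^{1-\xi}}}\right).
\]
Because $\|f\|_\infty\le C\sqrt{\log n}$, each summand $f_j(X_j^{(i)})f_k(X_k^{(i)})$ is bounded by $C^2\log n$, so for a \emph{fixed} $f$ a Bernstein bound controls $|S_n(f)_{jk}-\Sigma(f)_{jk}|$; a union bound over the $p^2\le e^{2n^\xi}$ index pairs costs $\log(p^2)\lesssim n^\xi$ in the exponent, which is precisely what turns $n$ into $n^{1-\xi}$ and yields the stated rate. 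To upgrade from fixed $f$ to the supremum over $\cM_n^p$---needed because $\tilde f_n$ is data dependent---I would use that bounded monotone functions on $\reals$ form a class of small ($O(1/\epsilon)$) bracketing entropy, so a chaining/covering argument (equivalently, the observation that $S_n(f)_{jk}$ depends on $f$ only through its values at the $n$ order statistics) makes the supremum concentrate at the same rate up to constants. Substituting $\tilde f_n\in\cM_n^p$ with $\|\hat\Omega_n\|_1\le L_n$, and $\Omega^*$ with $\|\Omega^*\|_1\le L_n$, into the two surviving brackets bounds them by $O_P(L_n\Delta_n)$, whence persistence holds as soon as $L_n\Delta_n\to0$, i.e. $L_n=o\bigl(n^{(1-\xi)/2}/\sqrt{\log n}\bigr)$.

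The step I expect to be the real obstacle is the last bracket, $\hat R(\tilde f_n,\Omega^*)-R(f^*,\Omega^*)$, because its empirical risk is evaluated at the \emph{estimated} transform $\tilde f_n$ while the oracle uses $f^*$. I would split it as $[\hat R(\tilde f_n,\Omega^*)-\hat R(f^*,\Omega^*)]+[\hat R(f^*,\Omega^*)-R(f^*,\Omega^*)]$; the second piece is a fixed-$f$ deviation, handled exactly as above, but the first equals $\tfrac12\tr[\Omega^*(S_n(\tilde f_n)-S_n(f^*))]$ and compares the sample covariances of two \emph{different} transforms. This is precisely the object governed by Theorem~\ref{thm.keylemma} when $f^*$ is the Gaussianizing transform, and in general must be absorbed into $\Delta_n$ using that both $\tilde f_n$ and $f^*$ lie in $\cM_n^p$. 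Aligning the constants and the exact power of $\log n$---so that the variance proxy entering the Bernstein step is treated as $O(\log n)$ rather than the crude $O(\log^2 n)$ from the boundedness alone---is the delicate bookkeeping, and is where I would expect to spend most of the effort.
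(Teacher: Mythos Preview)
Your proposal follows essentially the same route as the paper: the Greenshtein--Ritov persistence chain, the H\"older bound on $|R-\hat R|$ in terms of $\|\Omega\|_1$ times the max-entry deviation, and control of that uniform deviation via the $O(1/\epsilon)$ bracketing entropy of bounded monotone functions---the paper invokes Corollary~19.35 of van der Vaart (1998) for the bracketing-integral bound directly rather than Bernstein-plus-chaining, but this is the same machinery.

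On your ``obstacle'': the paper does not decompose the third bracket as you do. It runs the four-line chain
\[
R(f^*,\Omega^*)\;\le\; R(\tilde f_n,\hat\Omega_n)\;\le\; \hat R(\tilde f_n,\hat\Omega_n)+O_P(\cdot)\;\le\; \hat R(f^*,\Omega^*)+O_P(\cdot)\;\le\; R(f^*,\Omega^*)+O_P(\cdot),
\]
applying the uniform bound at the second and fourth steps; no separate comparison of $S_n(\tilde f_n)$ with $S_n(f^*)$, and no appeal to Theorem~\ref{thm.keylemma}, enters. So you are working harder than the paper does---though your instinct that the third inequality is the delicate one is sound, since $\hat\Omega_n$ minimizes only over $\Omega$ for the fixed $\tilde f_n$, and the paper does not elaborate on the passage from $\hat R(\tilde f_n,\hat\Omega_n)$ to $\hat R(f^*,\Omega^*)$ beyond implicitly treating the pair as a joint empirical minimizer.
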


The proofs of Theorems~\ref{thm.keylemma} and~\ref{thm:persist} are
given in Section~\ref{sec:proofs}.

\section{Experimental Results}
\label{sec:experiments}

In this section, we report experimental results on synthetic and real
datasets.  We mainly compare the $\ell_1$-regularized nonparanormal
and Gaussian (paranormal) models, computed using the graphical lasso
algorithm (glasso) of \cite{FHT:07}. The primary conclusions are: (i) When the
data are multivariate Gaussian, the performance of the two methods is
comparable; (ii) when the model is correct, the nonparanormal performs
much better than the graphical lasso in many cases; (iii) even for
distributions that are not nonparanormal, the new method often
performs better; (iv) for gene microarray data, our method behaves
differently from the graphical lasso, and may support different
biological conclusions.

Note that we can reuse the glasso implementation to fit a sparse
nonparanormal.  In particular, after computing the Winsorized sample
covariance $S_n(\tilde f)$, we pass this matrix to the glasso
routine to carry out the optimization
\begin{equation}
\hat\Omega_n = \mathop{\text{arg\,min}}_{\Omega} 
\left\{\text{tr}\left(\Omega S_n(\tilde f)\right)  - \log |\Omega |  + \lambda_n \|\Omega\|_1\right\}. \nonumber
\end{equation}

\subsection{Neighborhood graphs }

We begin by describing a procedure to generate graphs as in \citep{Meinshausen:2006}, with respect to
which several distributions can then be defined. We generate a  $p$-dimensional sparse graph $G\equiv  (V,E)$ as follows:
Let $V=\{1,\ldots, p\}$
corresponding to variables $X=(X_1,\ldots, X_p)$.
We associate each index $j$ with a point
$(Y_j^{(1)},Y_j^{(2)})\in [0,1]^2$
where
$$ Y^{(k)}_1,\ldots, Y^{(k)}_n \sim \mathrm{Uniform}[0,1]$$ for $k=1,2$. 
Each pair of nodes $(i,j)$ is included in the edge set $E$ with
probability
\begin{eqnarray}
\mathbb{P}\biggl( (i,j)\in E \biggr) =
\frac{1}{\sqrt{2\pi}}\exp\left( -\frac{\|y_i - y_j \|^2_n}{2s}
\right) \nonumber
\end{eqnarray}
where $y_i \equiv (y^{(1)}_i,y^{(2)}_i)$ is the observation of $(Y^{(1)}_{i}, Y^{(2)}_{i})$ and $\|\cdot \|_n$
represents the Euclidean distance. Here, $s=0.125$ is a parameter that
controls the sparsity level of the generated graph.  We restrict the maximum
degree of the graph to be four and build the inverse covariance matrix
$\Omega_{0}$ according to
\begin{eqnarray}
\Omega_{0}(i,j) = \left\{\begin{array}{lr}
                            1 &  \mathrm{if}~i=j \\
                            0.245 & \mathrm{if}~(i,j)\in E \\
                            0 & \mathrm{otherwise,}
                          \end{array}
  \right.\label{exp.InvSigma}
\end{eqnarray}
where the value $0.245$ guarantees positive definiteness of the
inverse covariance matrix.

Given $\Omega_0$, $n$ data points are sampled from 
\begin{eqnarray}
X^{(1)}, \ldots, X^{(n)} \sim NPN(\mu_{0}, \Sigma_{0}, f_{0}) \label{eq.expsetup}
\end{eqnarray}
where $\mu_{0} = (1.5,\ldots, 1.5)$, $\Sigma_{0} =  \Omega^{-1}_{0}$. For simplicity,  the transformations functions for all dimensions are the same $f_{1} = \ldots = f_{p} = f_{0}$.  To sample data from the nonparanormal distribution, we also need $g_{0}  \equiv f^{-1}_{0}$,  two different transformations $g_{0}$ are employed:

 Next, we define the
following two transformation families:

\begin{definition}{\rm (Gaussian CDF Transformation)}
Let $g_{0}$ be a one-dimensional Gaussian cumulative distribution
function with mean $\mu_{g_{0}}$ and the standard deviation
$\sigma_{g_{0}}$, i.e.,
\begin{eqnarray}
g_{0}(t) \equiv  \Phi\left(\frac{t-\mu_{g_{0}}}{\sigma_{g_{0}}} \right). \nonumber
\end{eqnarray}
We define the
transformation function $g_j = f_j^{-1}$ for the $j$-th dimension as
\begin{eqnarray}
g_{j}(z_{j}) \equiv \sigma_{j} \left( \frac{g_0(z_{j}) - {\ds \int} g_0(t)\phi\left(\frac{t-\mu_{j} }{ \sigma_{j}  }\right)dt}{ \sqrt{{\ds\int} \left(g_0(y)- {\ds \int} g_0(t)\phi\left(\frac{t-\mu_{j} }{ \sigma_{j}  }\right)dt \right)^2 \phi\left(\frac{y-\mu_{j} }{ \sigma_{j}  }\right)dy}}\right) +
\mu_{j} \nonumber
\end{eqnarray}
where $\sigma_{j} = \Sigma_{0}(j,j)$.
\end{definition}

\begin{definition}{\rm (Symmetric Power Transformation)}
Let $g_{0}$ be the symmetric and odd transformation 
given by
\begin{eqnarray}
g_{0}(t) = \mathrm{sign}(t)|t|^{\alpha} \nonumber
\end{eqnarray}
where $\alpha>0$ is a parameter.
We define the power
transformation for the $j$-th dimension as
\begin{eqnarray}
g_{j}(z_{j}) \equiv \ds \sigma_{j} \left( \frac{\ds g_0(z_{j} - \mu_{j})}{\sqrt{{\ds\int} g^2_0(t-\mu_{j})
\phi\left(\frac{t-\mu_{j}}{\sigma_{j}}\right)dt}}\right) + \mu_{j}. \nonumber
\end{eqnarray}
\end{definition}

These transformation are constructed to preserve the marginal
mean and standard deviation. In the following experiments, we refer to
them as the cdf transformation and the power transformation, respectively. For
the cdf transformation, we set $\mu_{g_0} = 0.05$ and
$\sigma_{g_0}=0.4$. For the power transformation, we set $\alpha
=3$.

\begin{figure}[t]
\vskip-25pt
\begin{center}
\begin{tabular}{c}
\hskip-20pt
\includegraphics[width=.75\textwidth, angle=-90]{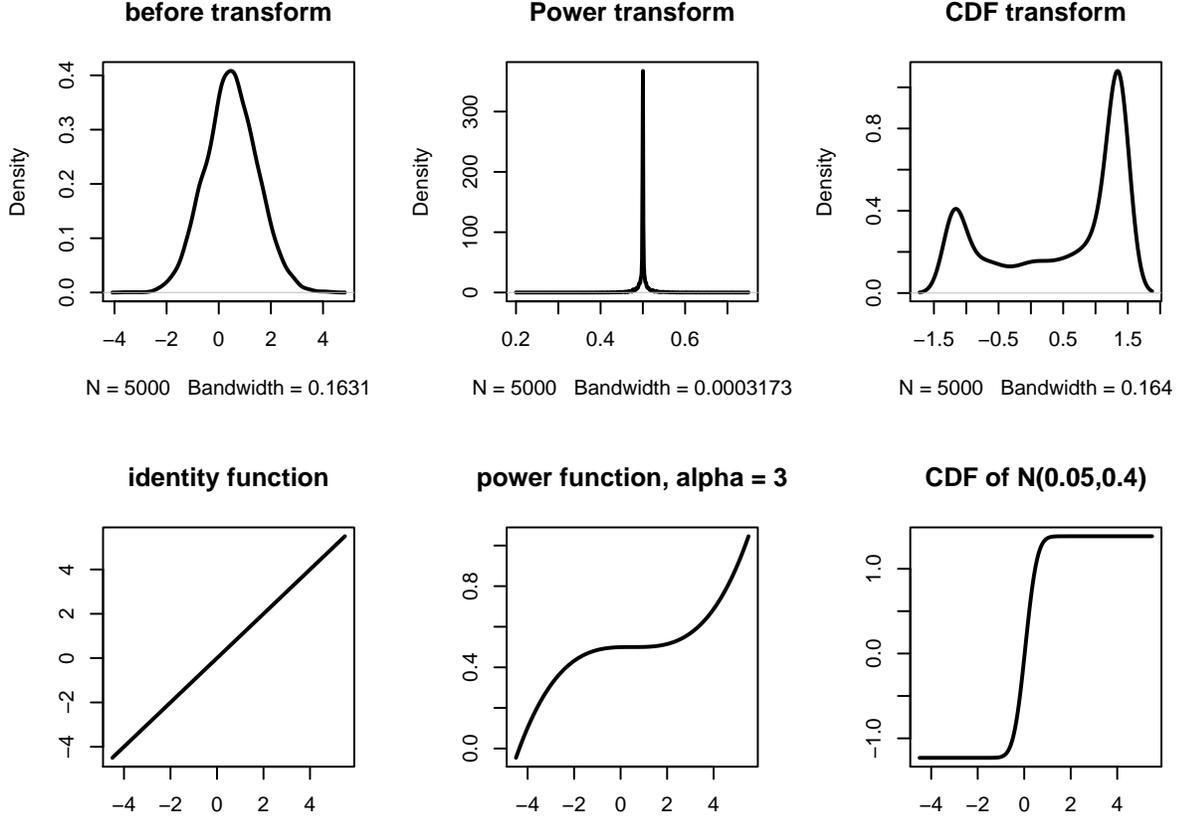}\\[-30pt]
\end{tabular}
\end{center}
\caption{\small The power and cdf transformations. The densities are estimated using kernel density estimator with bandwidths selected by unbiased cross-validation.}
\label{fig1}
\vskip15pt
\end{figure}

To visualize these two transformations, we sample $5000$ data points
from a one-dimensional normal distribution ${N}(0.5, 1.0)$
and then apply the above two transformations; the results are shown
in Figure \ref{fig1}.  It can be seen how the cdf and power
transformations map a univariate normal distribution into a highly skewed and a bi-modal
distribution, respectively.

To generate synthetic data, we set $p=40$, resulting in $\binom{40}{2}
+ 40 = 820$ parameters to be estimated, and vary the sample sizes from
$n=200$ to $n=1000$.  Three conditions are considered, corresponding
to using the cdf transform, the power transform, or no transformation.
In each case, both the glasso and the nonparanormal are applied to
estimate the graph.

\subsubsection{Comparison of regularization paths}

We choose a set of
regularization parameters $\Lambda$; for each $\lambda \in
\Lambda$, we obtain an estimate $\hat{\Omega}_n$ which is a
$40 \times 40$ matrix. The upper triangular matrix has 780 parameters; we
can vectorize it to get a 780-dimensional parameter vector. A
regularization path is trace of these parameters over
all the regularization parameters within $\Lambda$. The
regularization paths for both methods are plotted in Figure~\ref{fig.paths}.
For the cdf transformation and the power transformation, the
nonparanormal separates the relevant and the irrelevant
dimensions very well. For the glasso, relevant variables are
mixed with irrelevant variables. If no transformation is applied, the
paths for both methods are almost the same.

\begin{figure*}
\vskip-30pt
\begin{center}
\begin{tabular}{ccc}
\scriptsize \bf cdf & \scriptsize \bf power &\scriptsize \bf linear \\[-15pt]
\includegraphics[width=.30\textwidth,angle=-90]{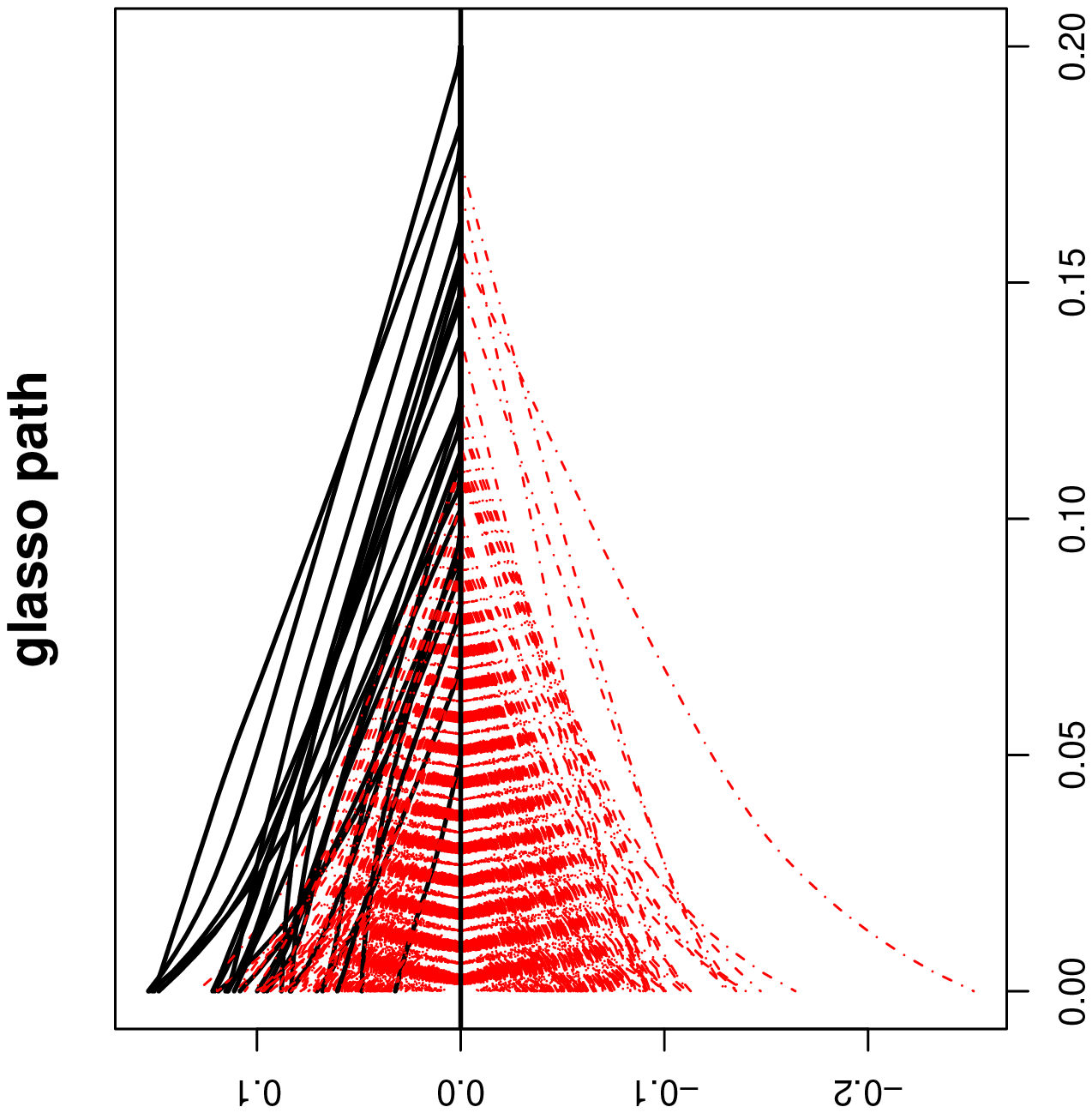} &
\includegraphics[width=.30\textwidth,angle=-90]{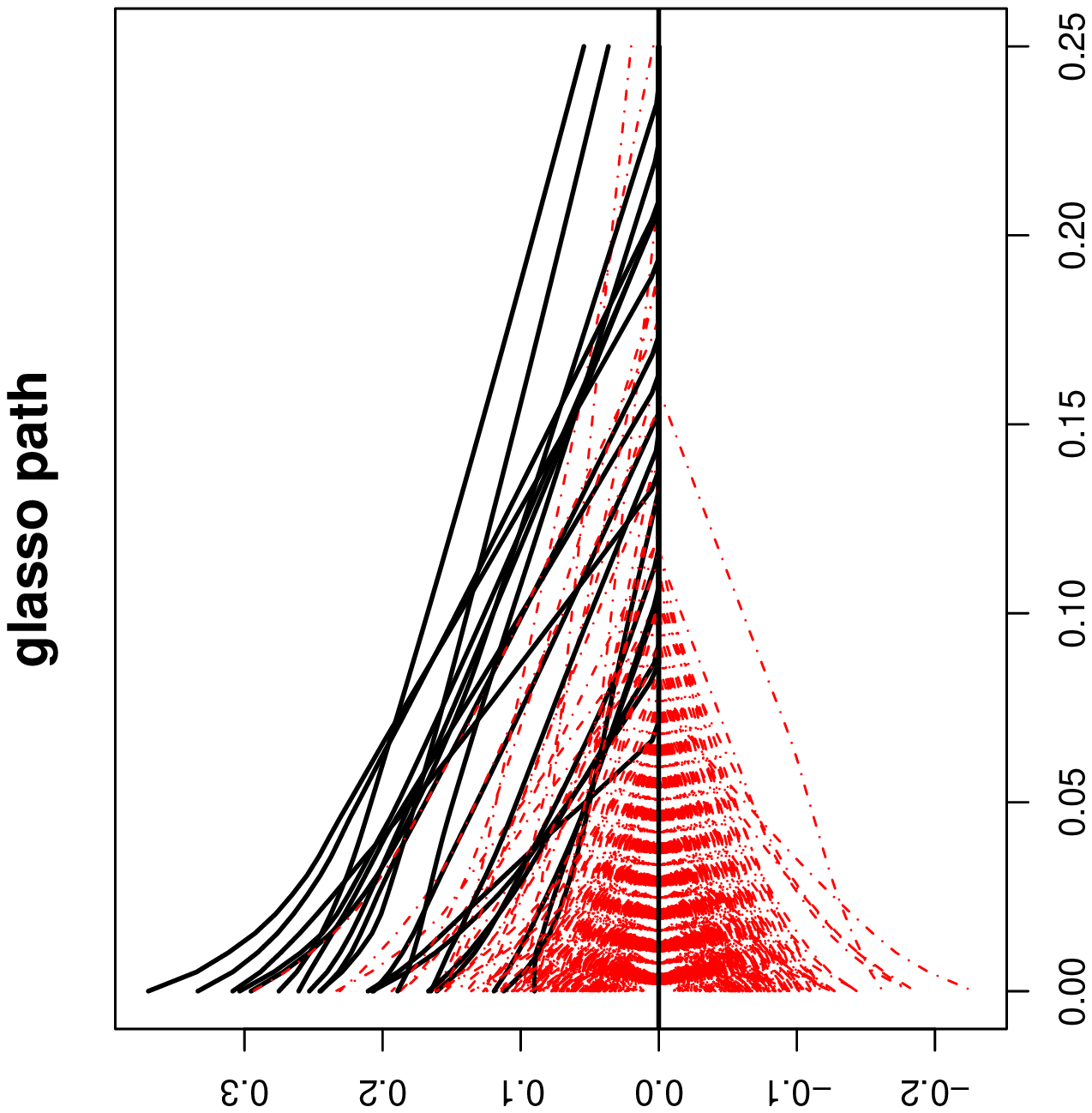} &
\includegraphics[width=.30\textwidth,angle=-90]{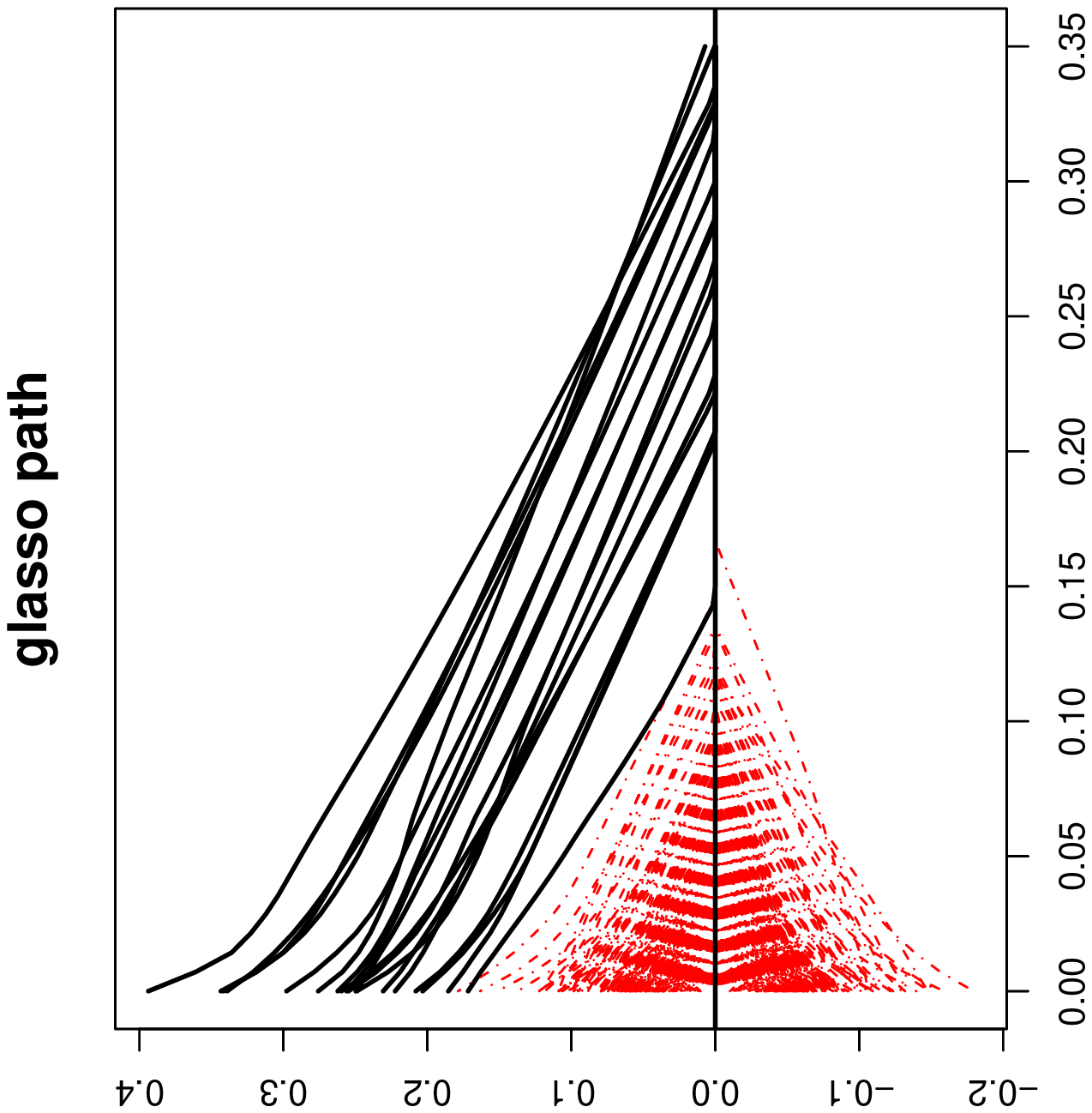} \\[-20pt]
\includegraphics[width=.30\textwidth,angle=-90]{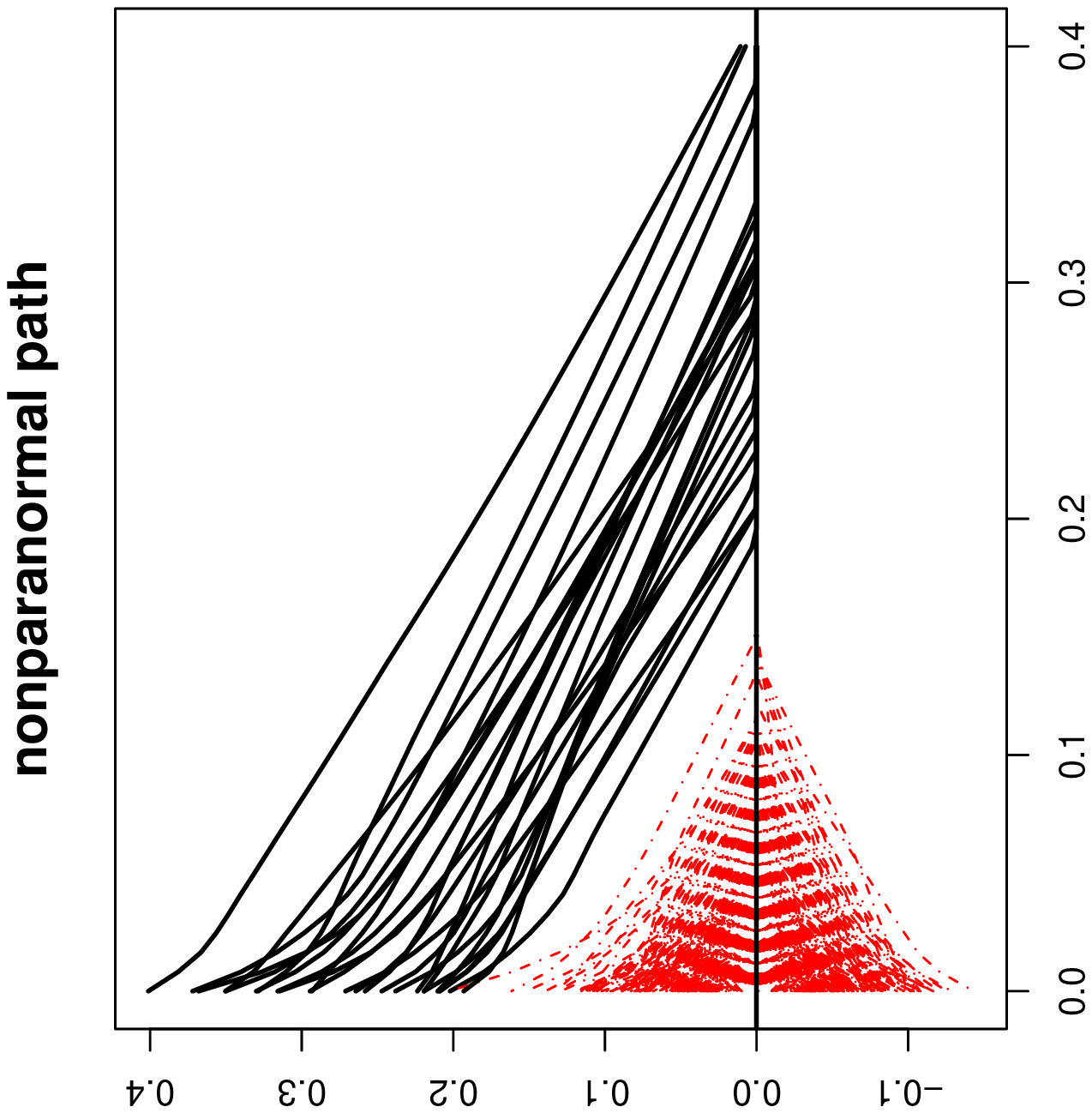} &
\includegraphics[width=.30\textwidth,angle=-90]{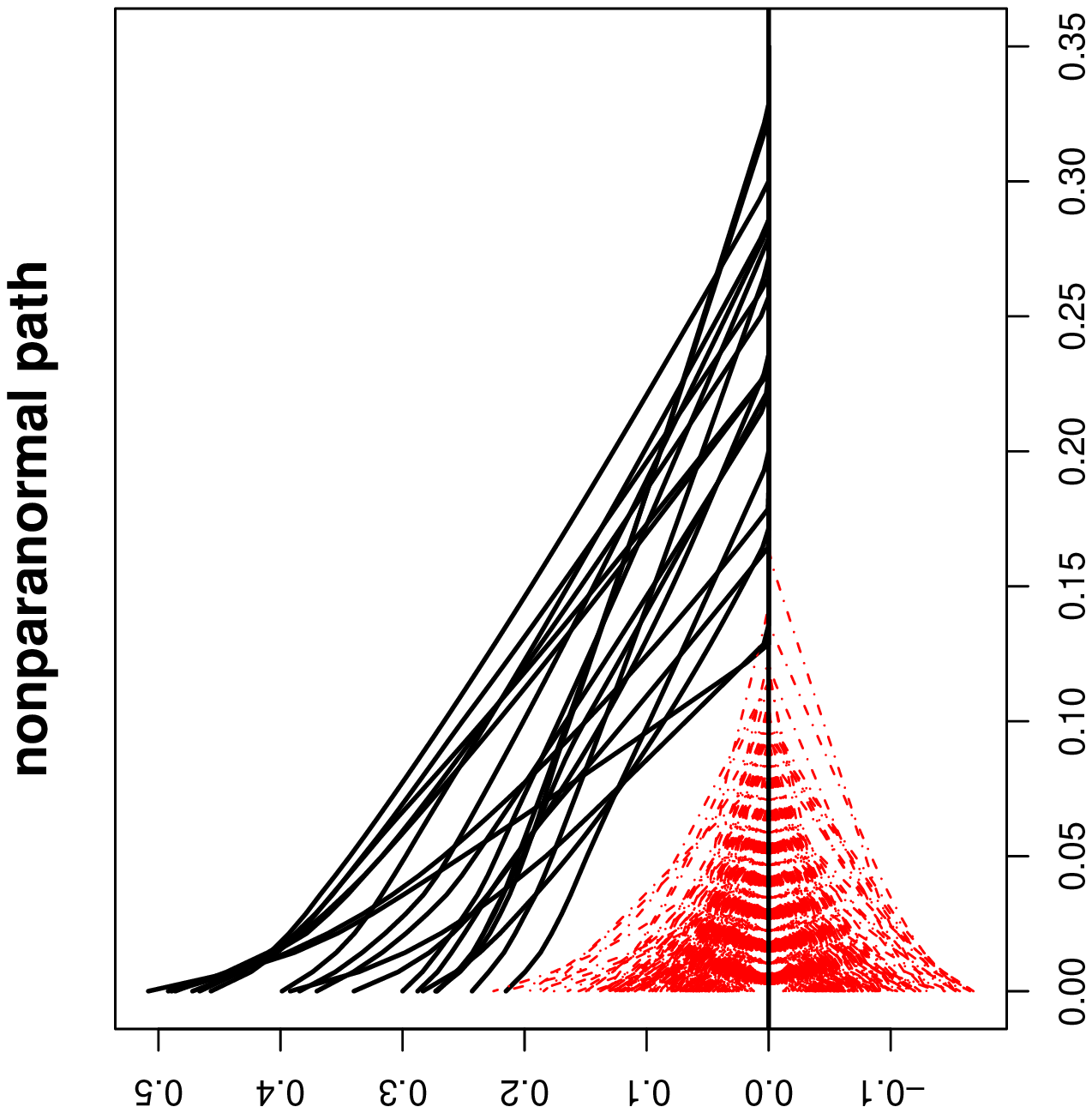} &
\includegraphics[width=.30\textwidth,angle=-90]{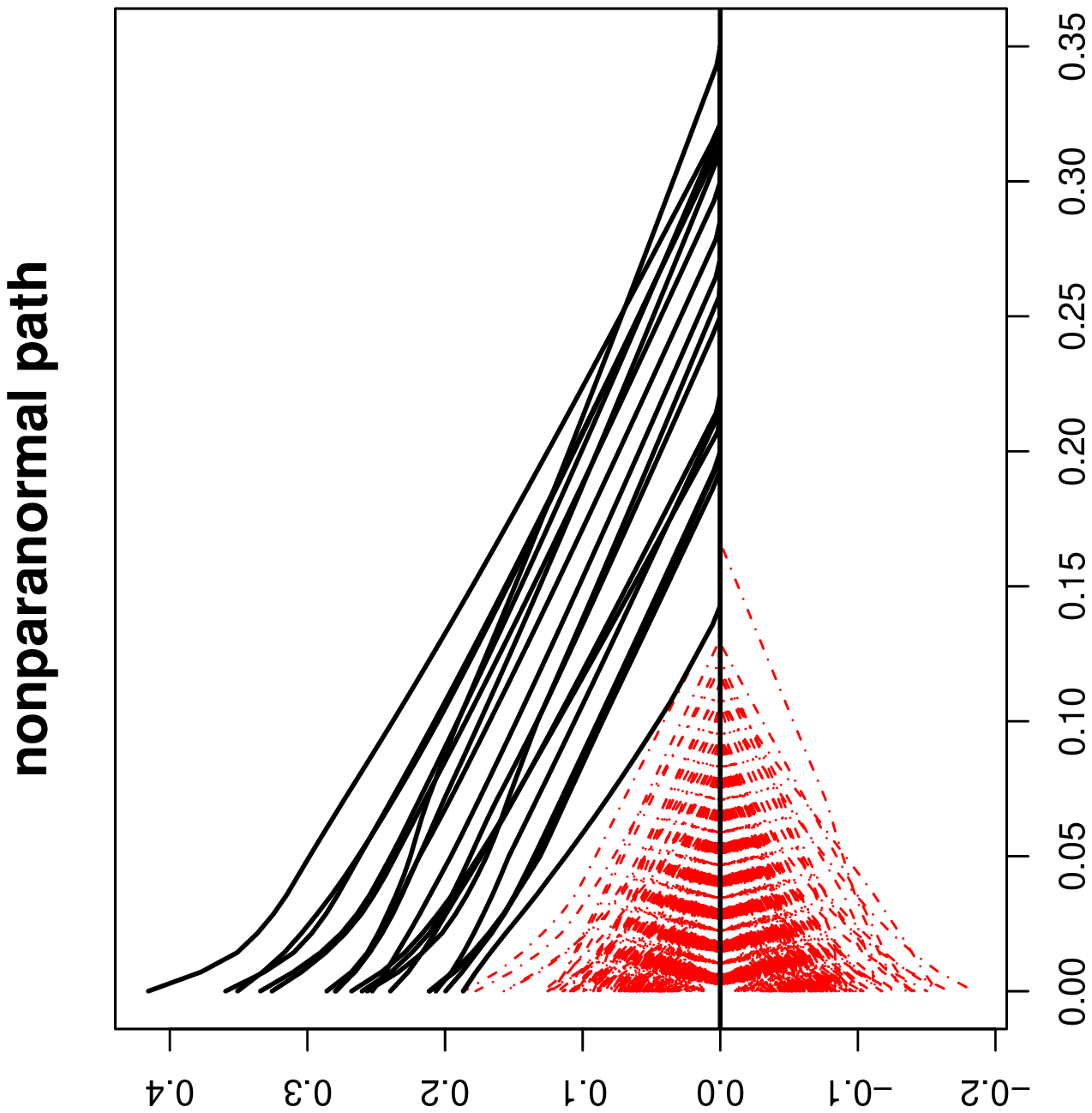}\\[-5pt]
& $n=500$ & \\[10pt]
\hline \\[-5pt]
\scriptsize \bf cdf & \scriptsize \bf power &\scriptsize \bf linear \\[-15pt]
\includegraphics[width=.30\textwidth,angle=-90]{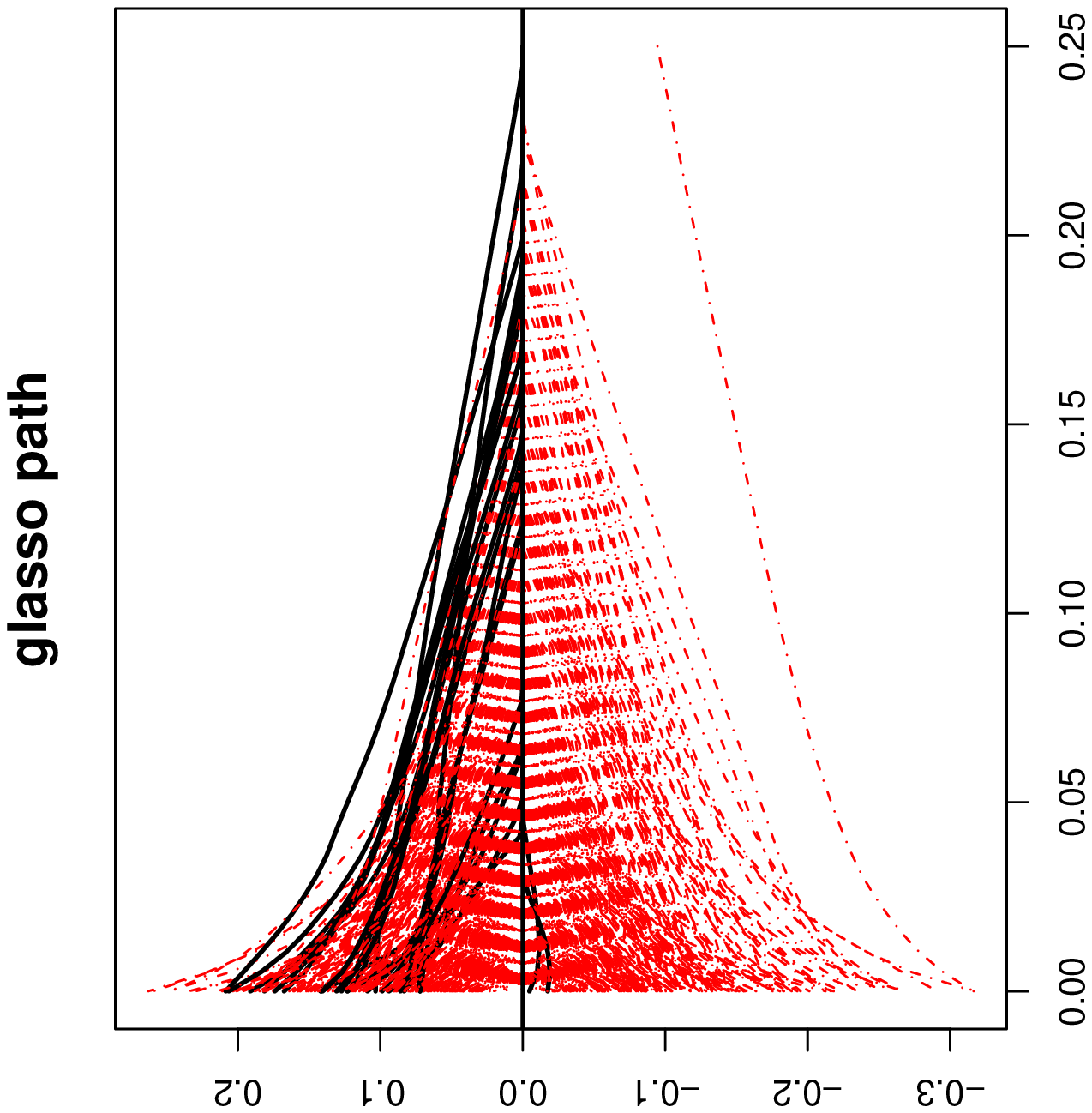} &
\includegraphics[width=.30\textwidth,angle=-90]{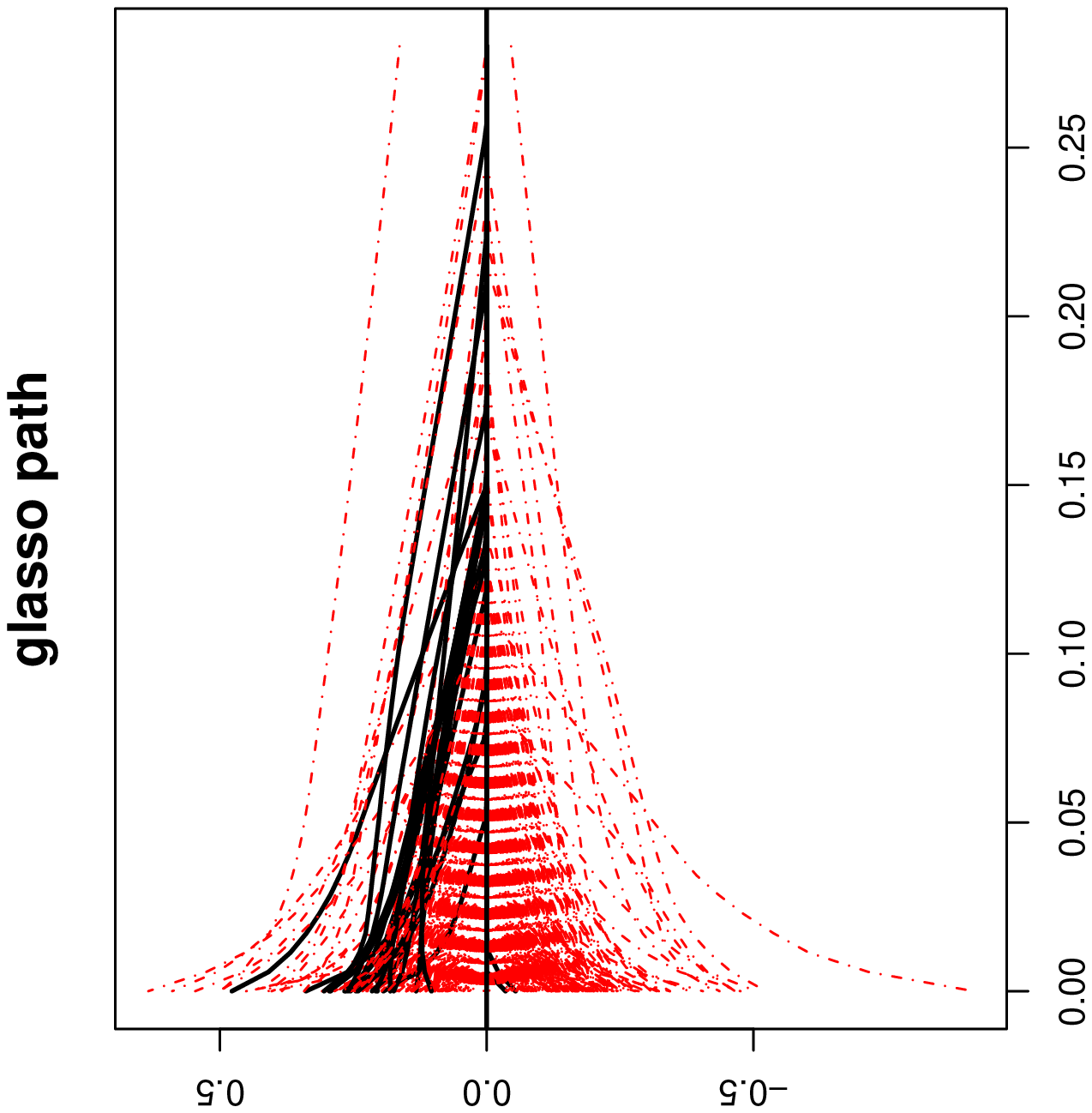} &
\includegraphics[width=.30\textwidth,angle=-90]{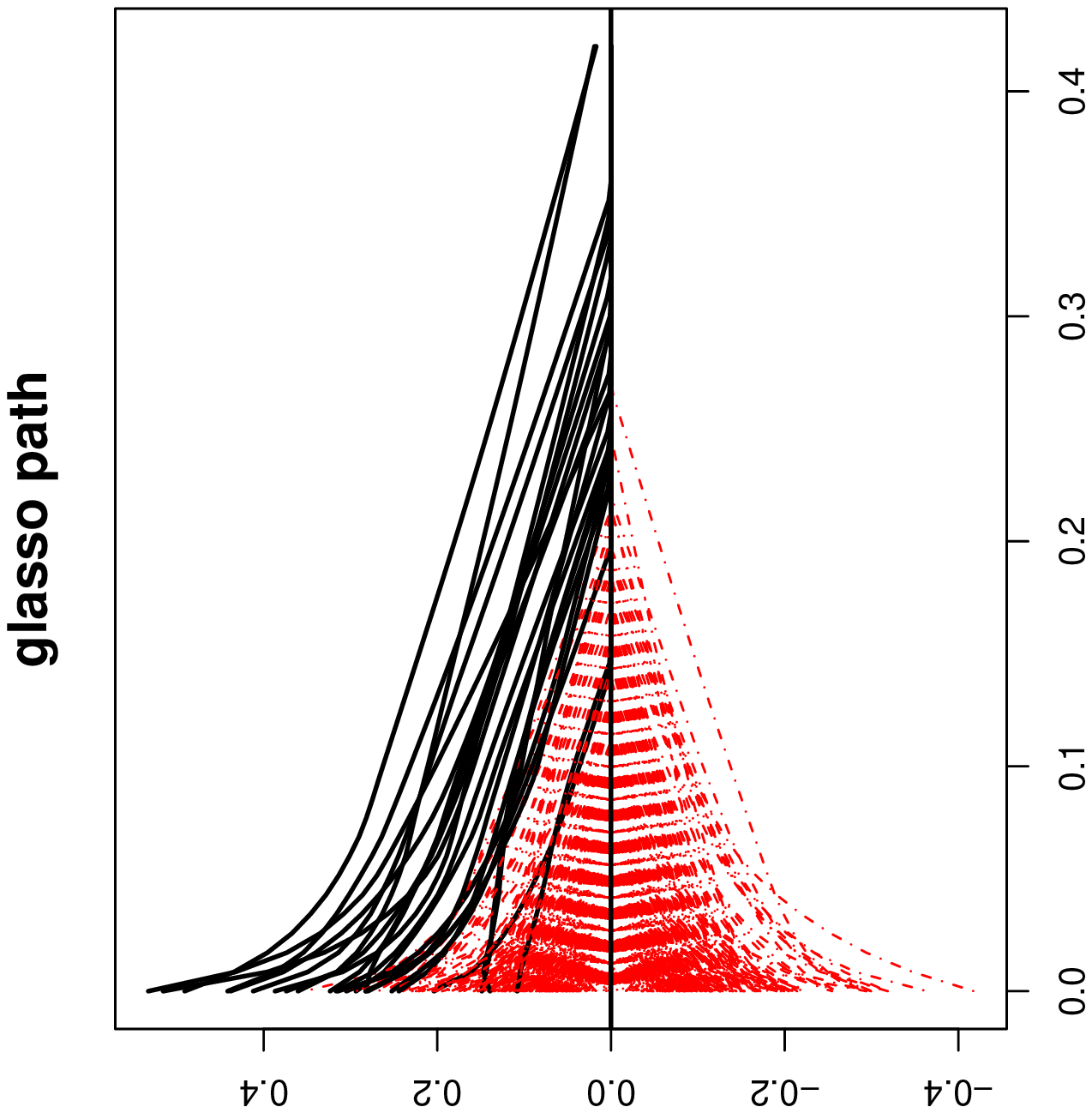} \\[-20pt]
\includegraphics[width=.30\textwidth,angle=-90]{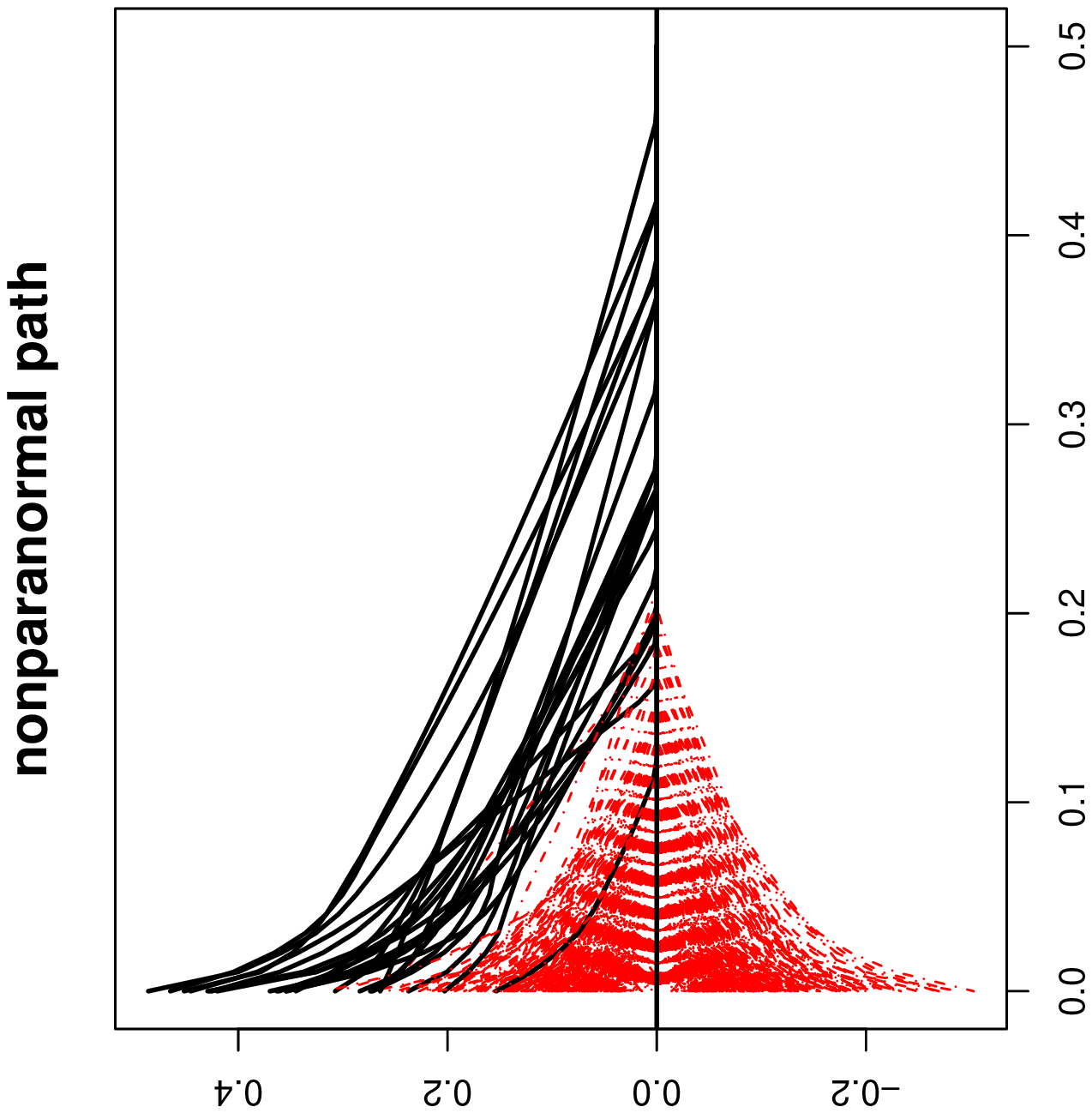} &
\includegraphics[width=.30\textwidth,angle=-90]{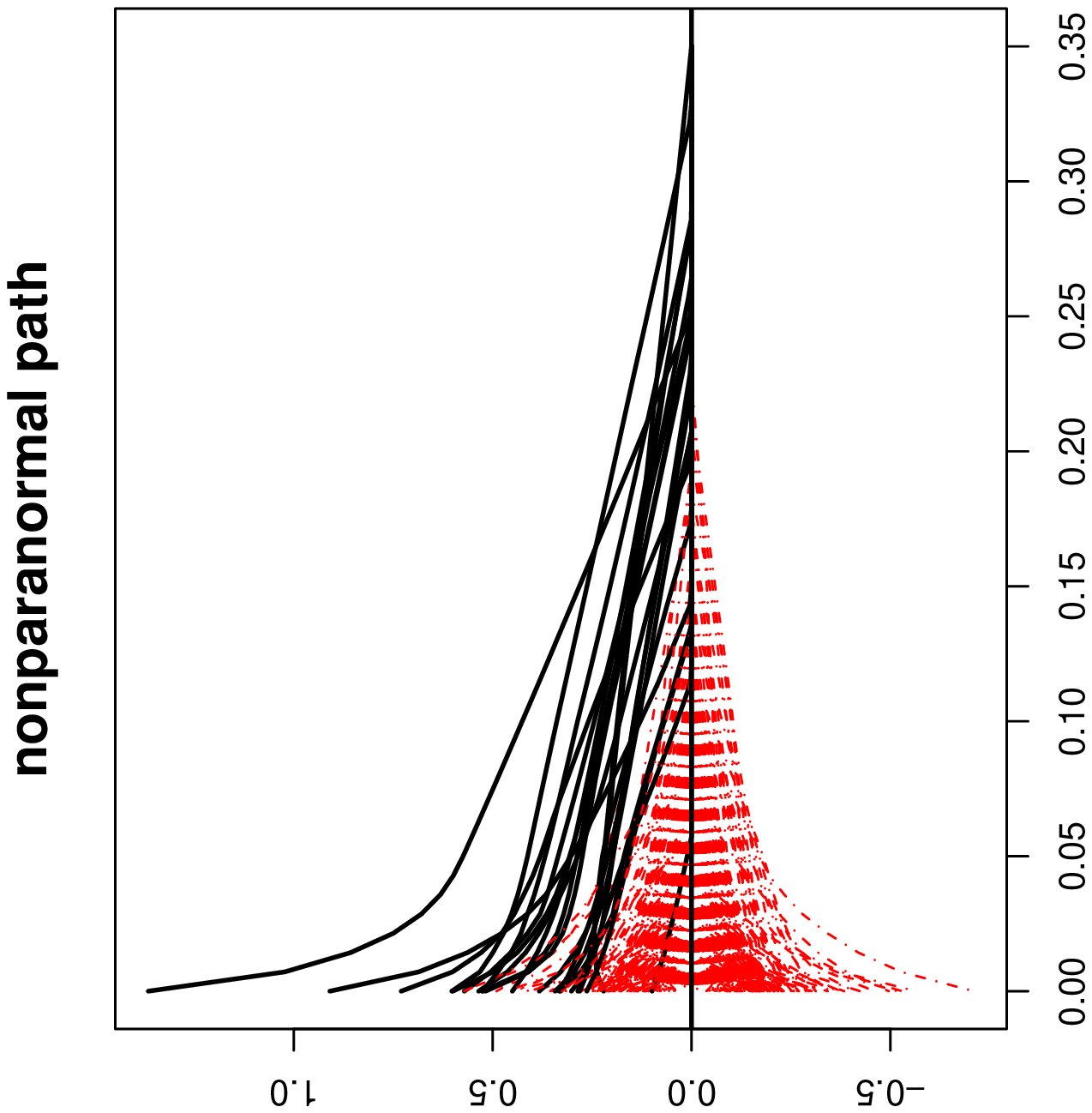} &
\includegraphics[width=.30\textwidth,angle=-90]{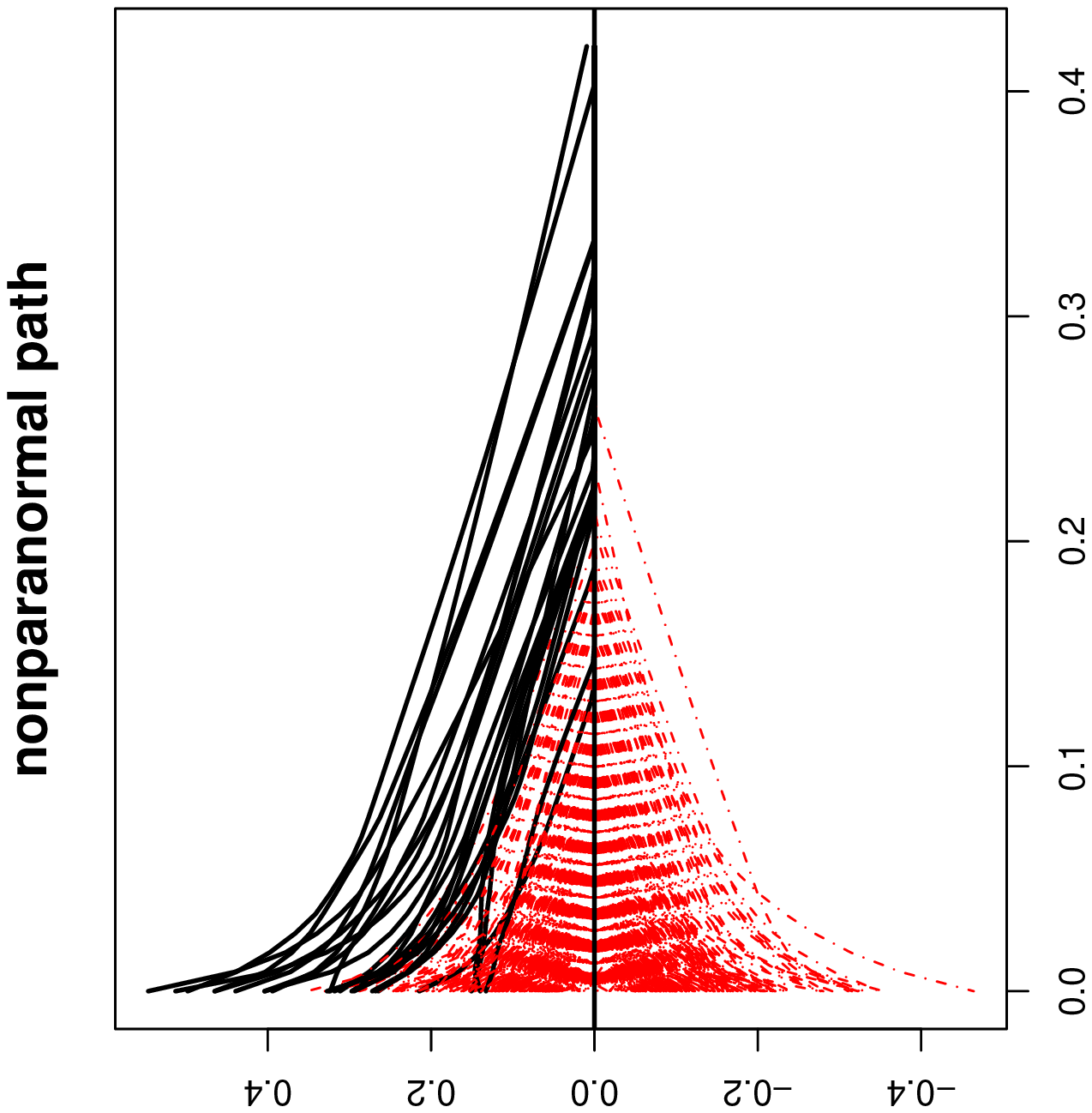}\\[-5pt]
& $n=200$ & \\[-20pt]
\end{tabular}
 \end{center}
\caption{\small Regularization paths for the glasso and nonparanormal
  with $n=500$ (top) and $n=200$ (bottom). 
The paths for the relevant variables (nonzero inverse covariance entries) are
plotted as solid (black) lines; the paths for the irrelevant variables are
plotted as dashed (red) lines.}
\label{fig.paths}
\end{figure*}

\subsubsection{Estimated transformations}

For sample size $n=1000$, we plot the estimated transformations
for three of the variables in Figure \ref{fig.components}.
It is clear that Winsorization plays a significant role for the power transformation. This is intuitive due to the high skewness of the nonparanormal distribution resulting from the power transformations.

\begin{figure}[ht]
\begin{center}
\def\hs{\hskip-11pt}
\begin{tabular}{ccc}
\\[-20pt]
\scriptsize \bf cdf & \scriptsize \bf power &\scriptsize \bf linear \\[-30pt]
\hs\includegraphics[width=.33\textwidth=-90,angle=-90]{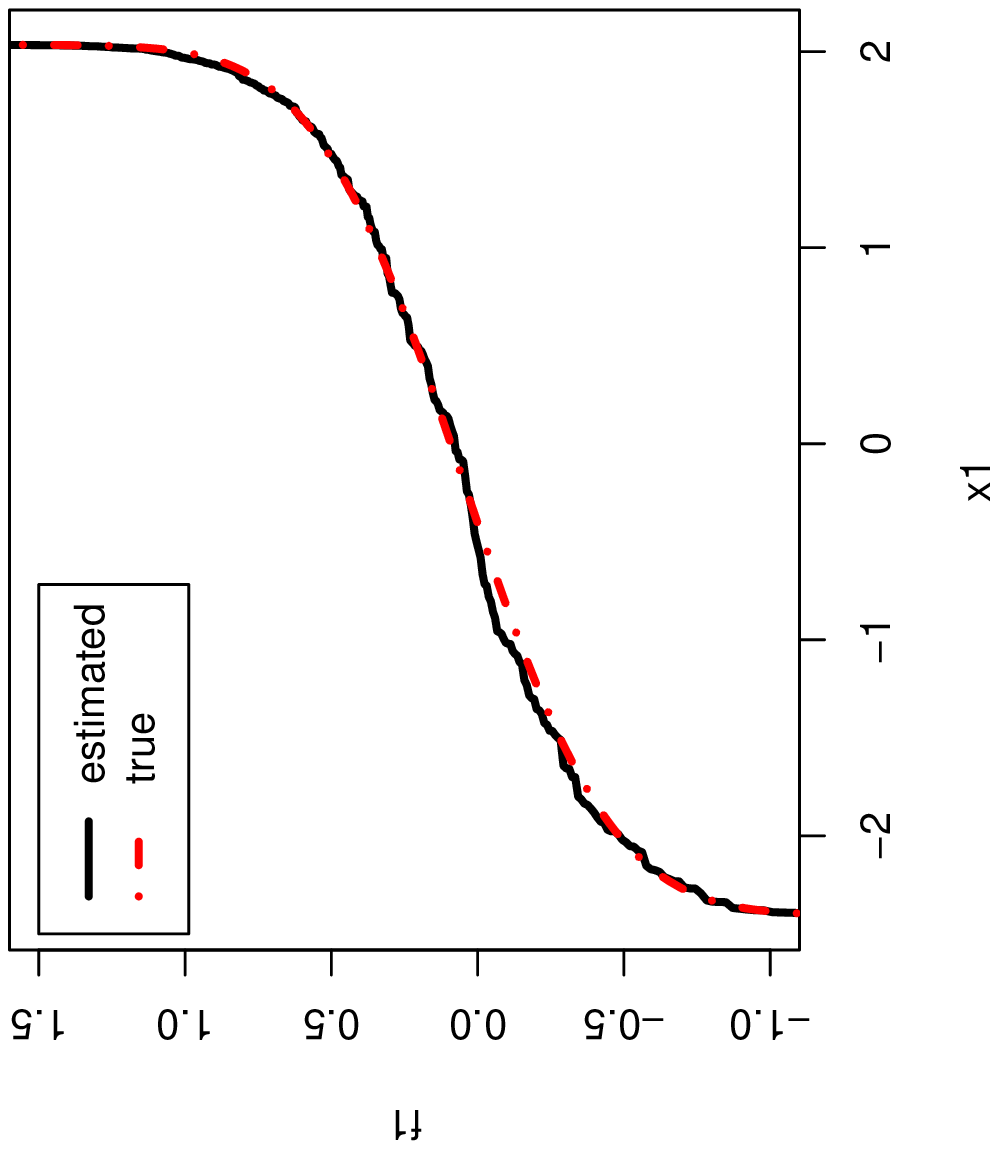} &
\hs\includegraphics[width=.33\textwidth=-90,angle=-90]{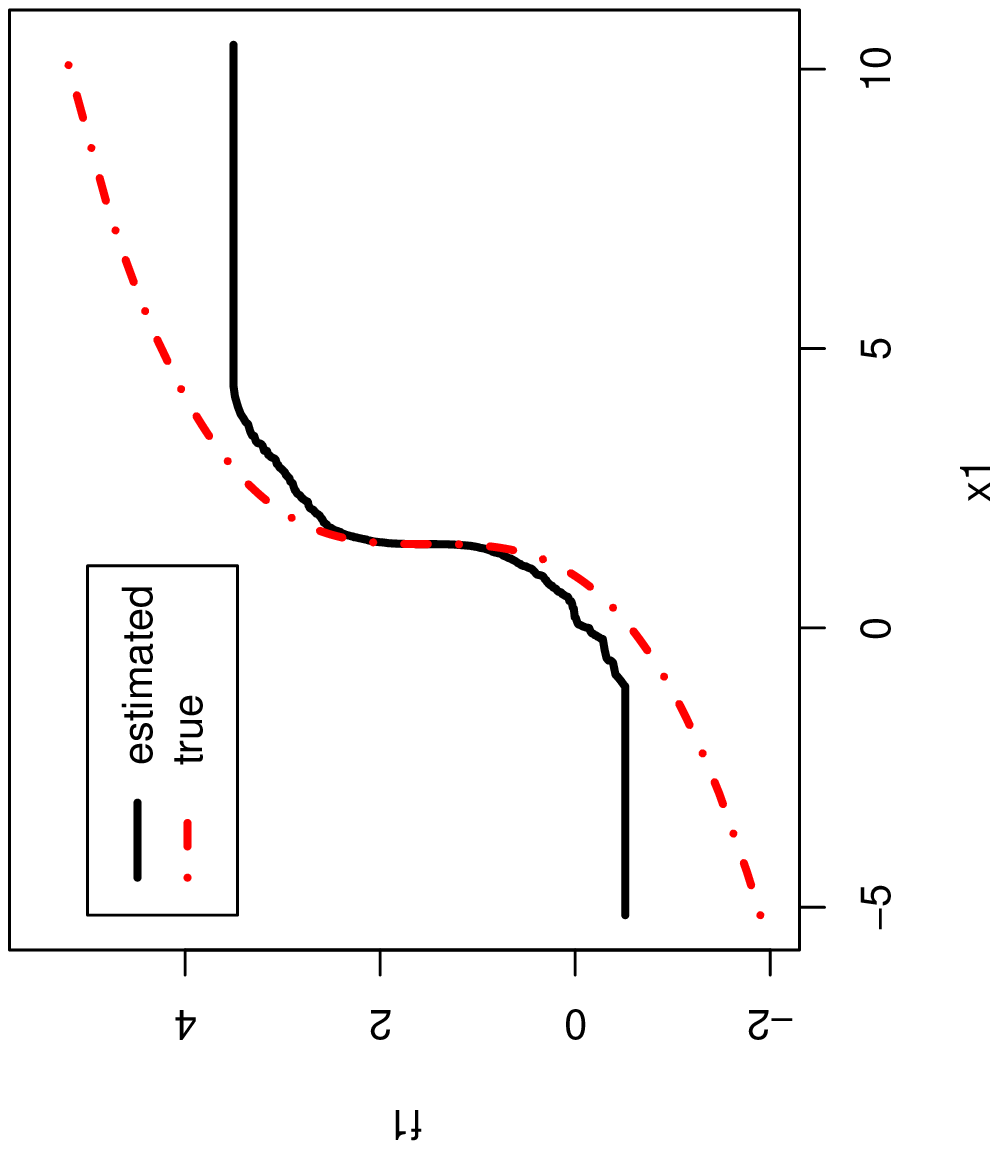} &
\hs\includegraphics[width=.33\textwidth=-90,angle=-90]{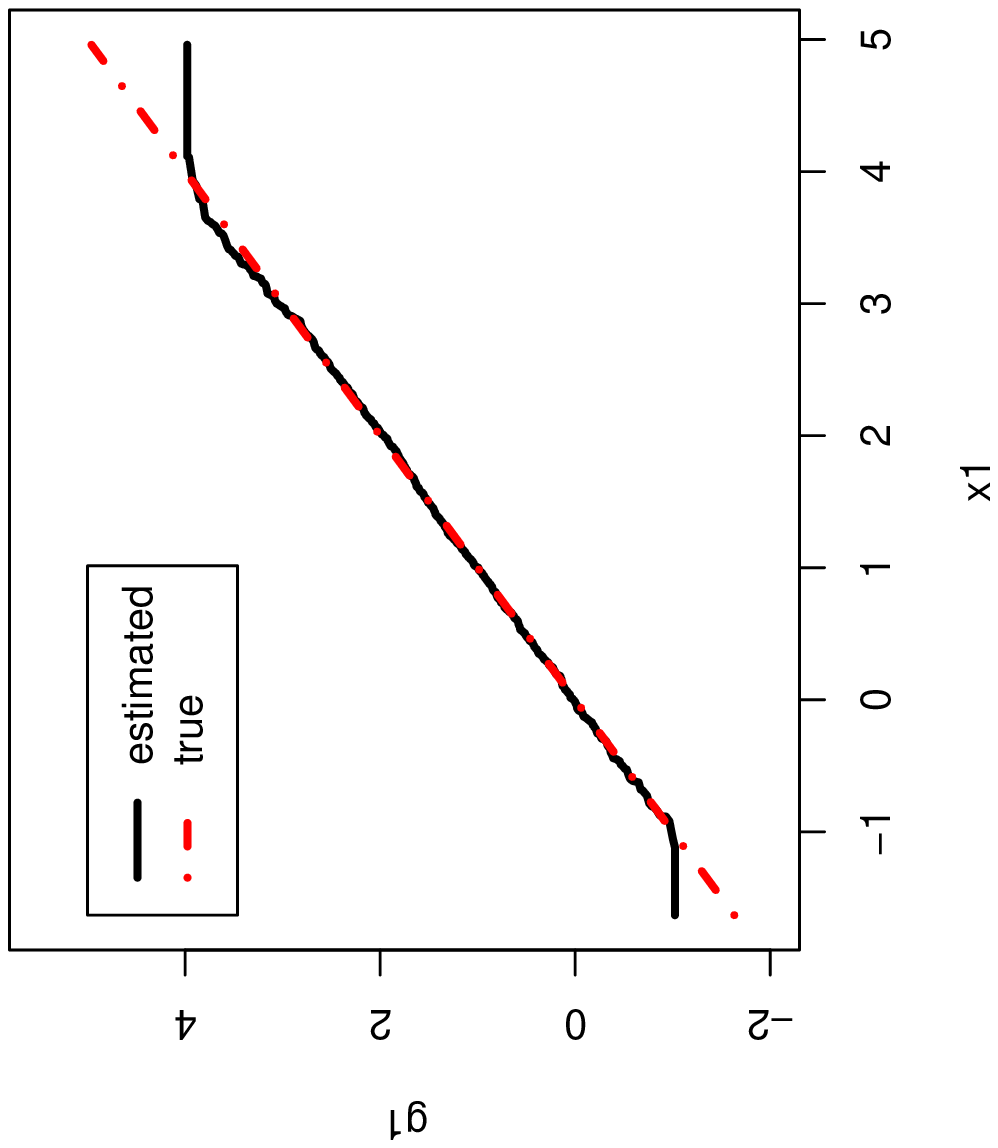} \\[-40pt]
\hs\includegraphics[width=.33\textwidth=-90,angle=-90]{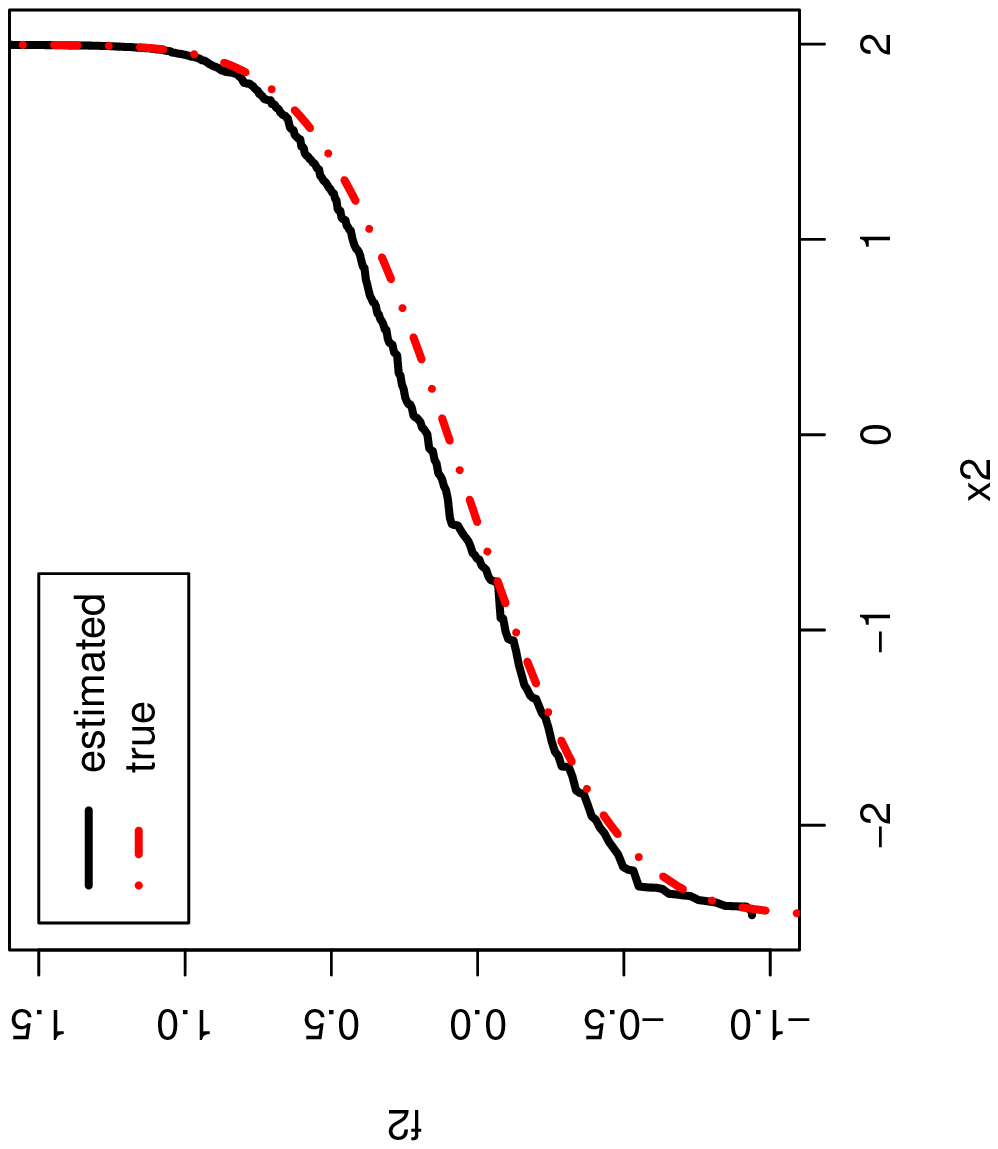} &
\hs\includegraphics[width=.33\textwidth=-90,angle=-90]{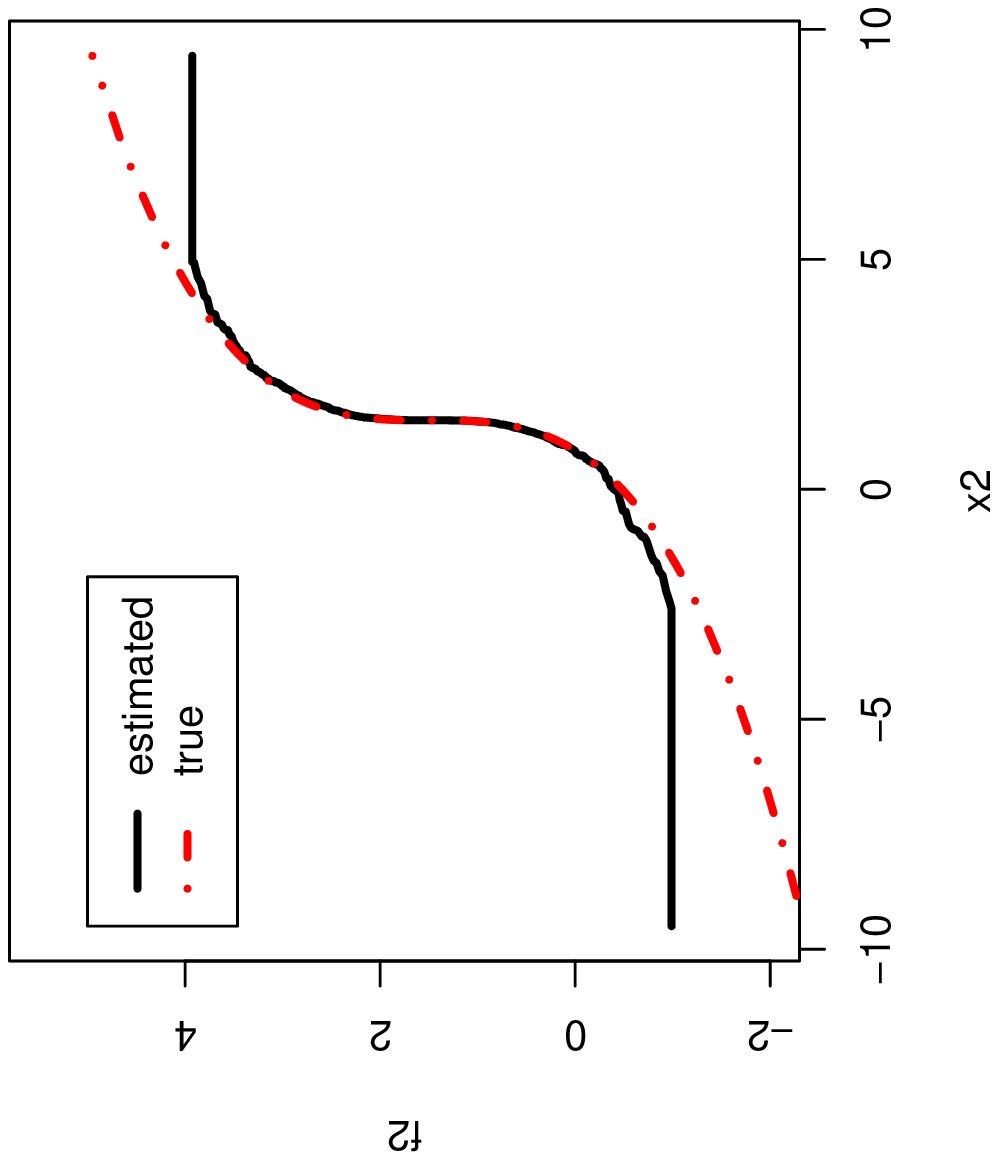} &
\hs\includegraphics[width=.33\textwidth=-90,angle=-90]{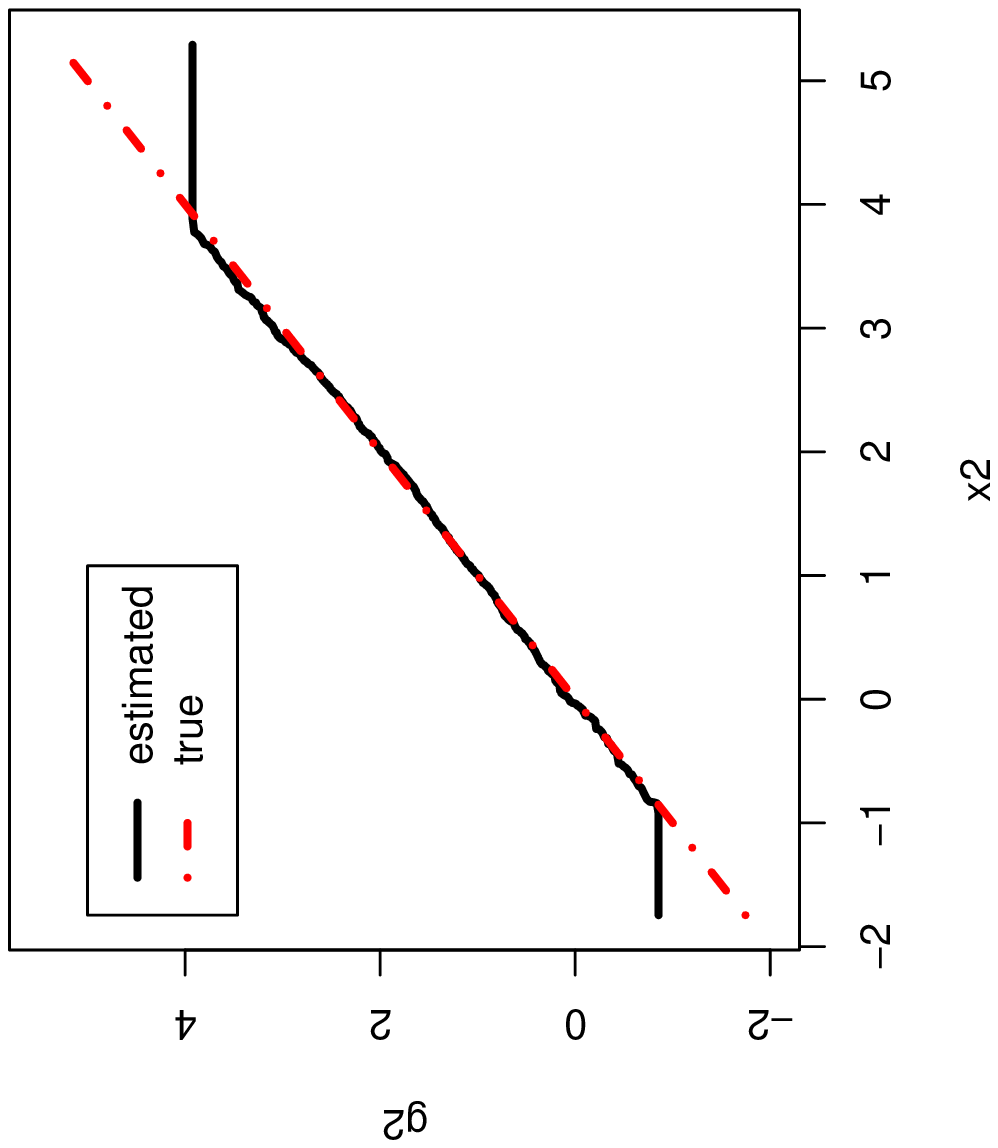} \\[-40pt]
\hs\includegraphics[width=.33\textwidth=-90,angle=-90]{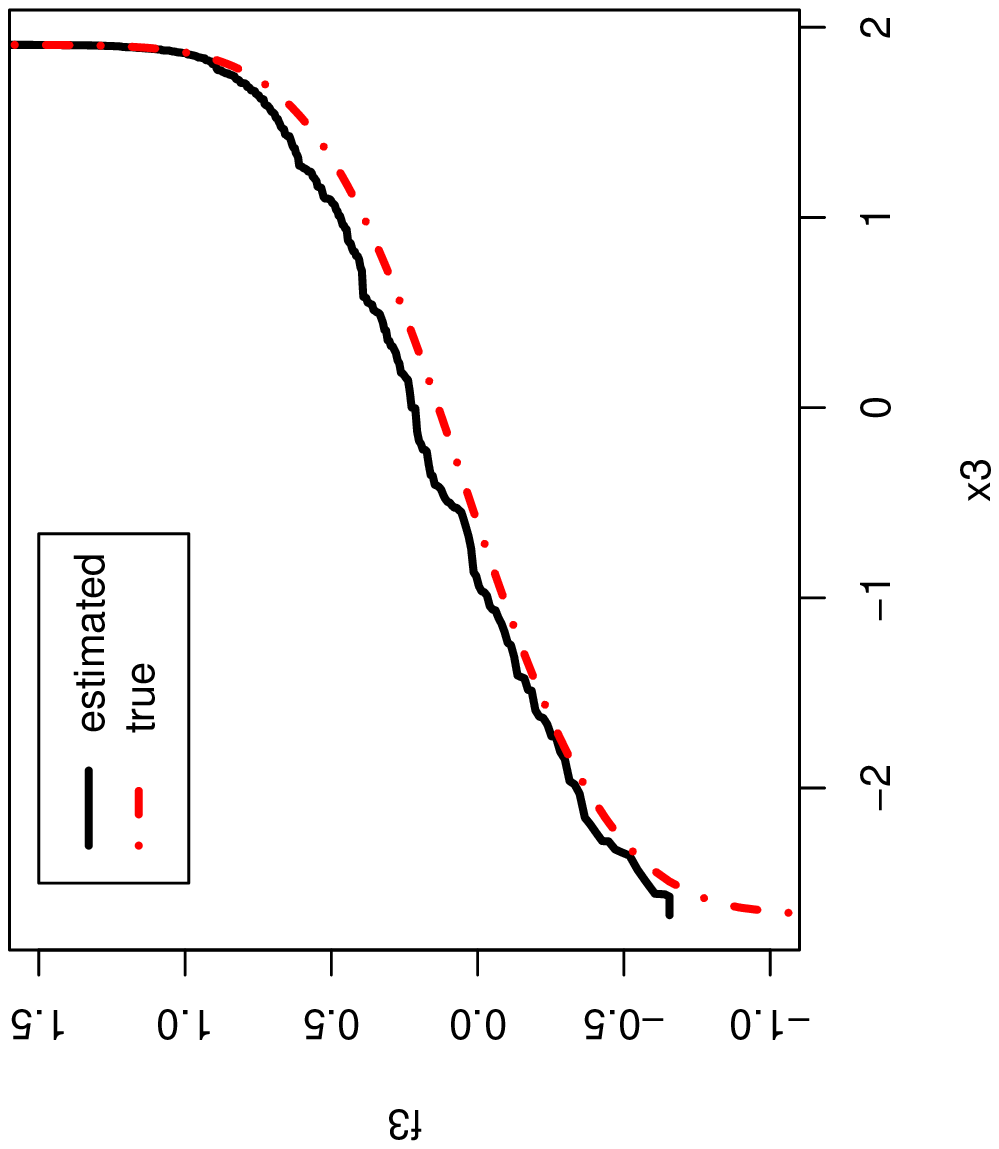} &
\hs\includegraphics[width=.33\textwidth=-90,angle=-90]{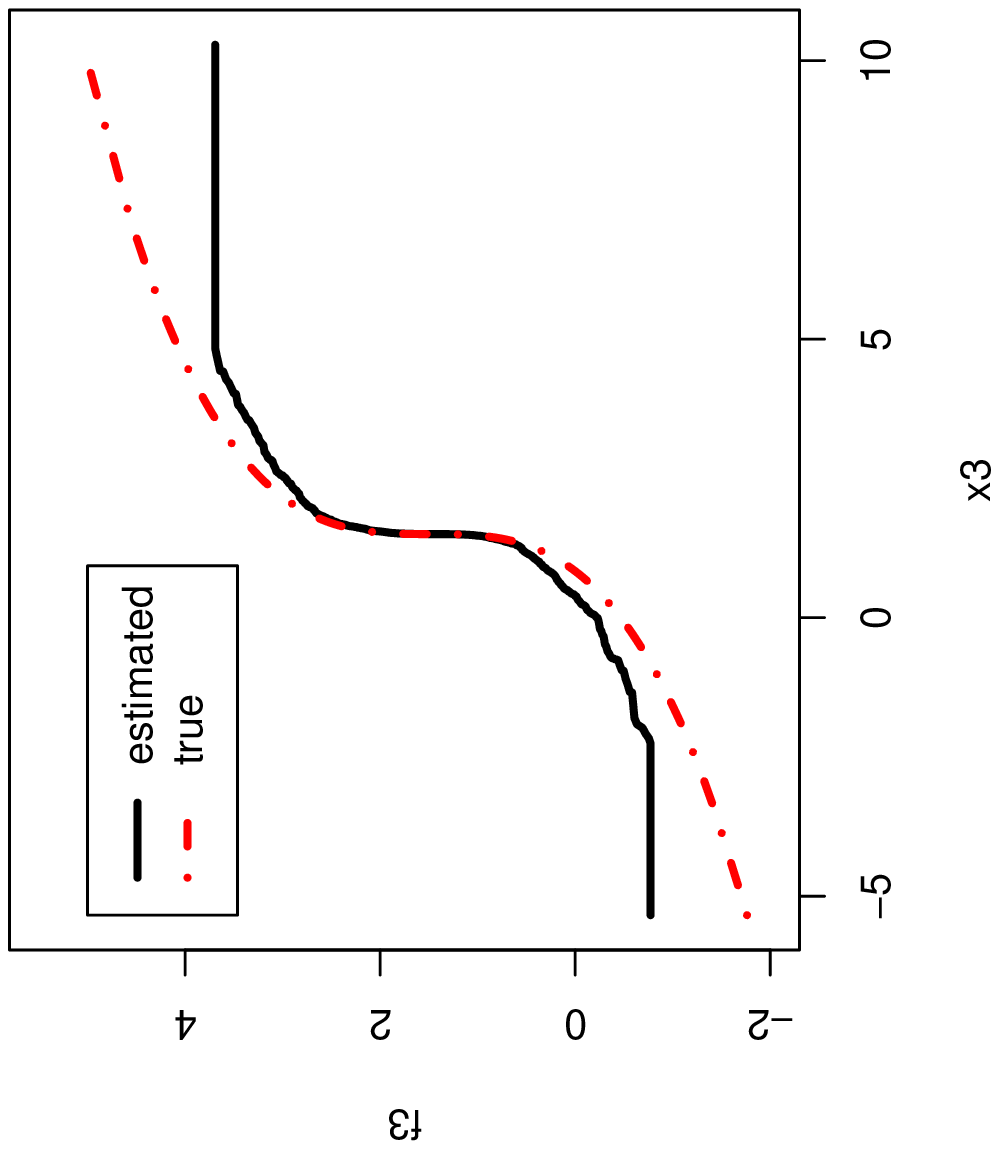} &
\hs\includegraphics[width=.33\textwidth=-90,angle=-90]{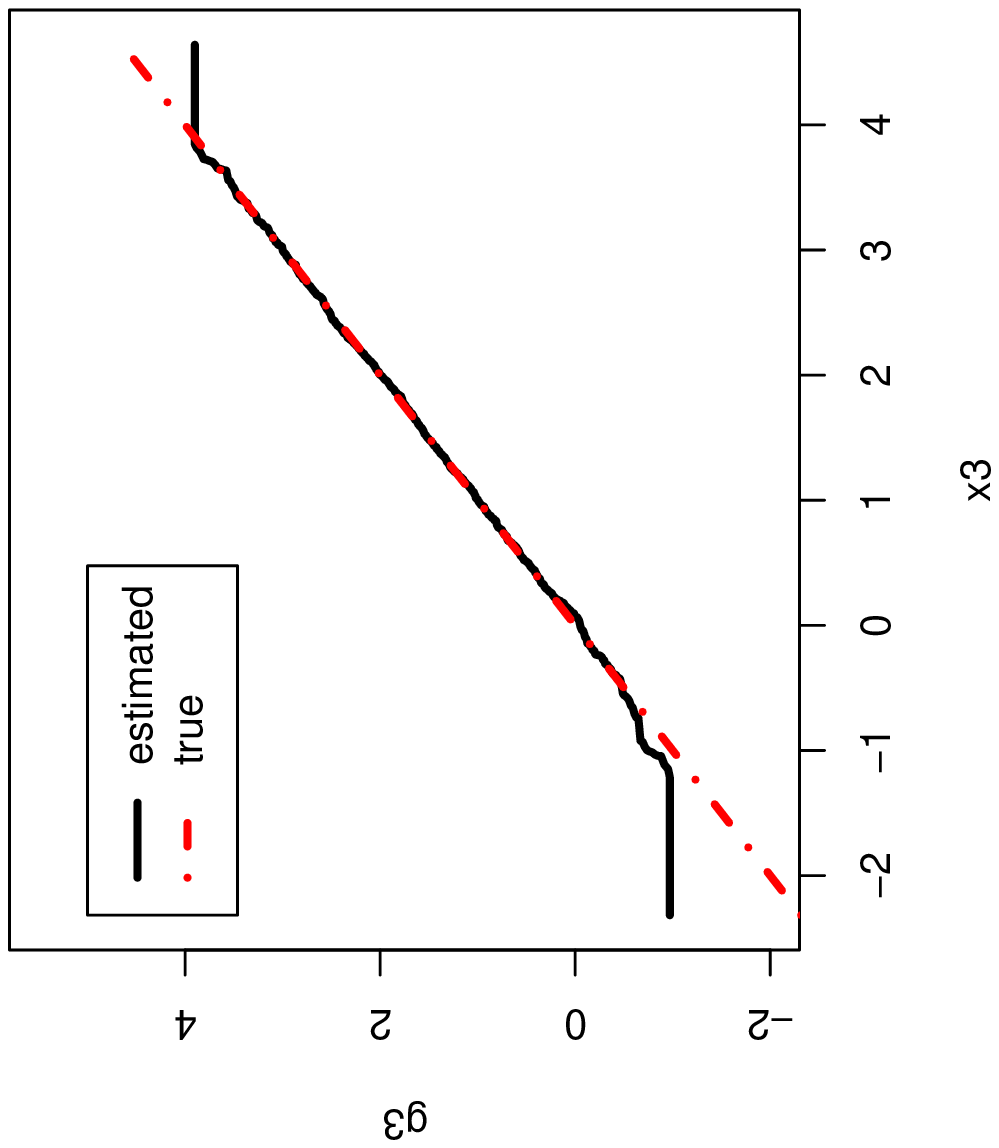} \\[-20pt]
\end{tabular}
\end{center}
\caption{\small Estimated transformations for the first three variables.}
\label{fig.components}
\end{figure}

\begin{figure}[ht]
\begin{center}
\begin{tabular}{ccc}
\\[-30pt]
\scriptsize \bf cdf & \scriptsize \bf power &\scriptsize \bf linear \\[-15pt]
\vspace{0in} 
\hskip-5pt
\includegraphics[width=.3\textwidth,angle=-90]{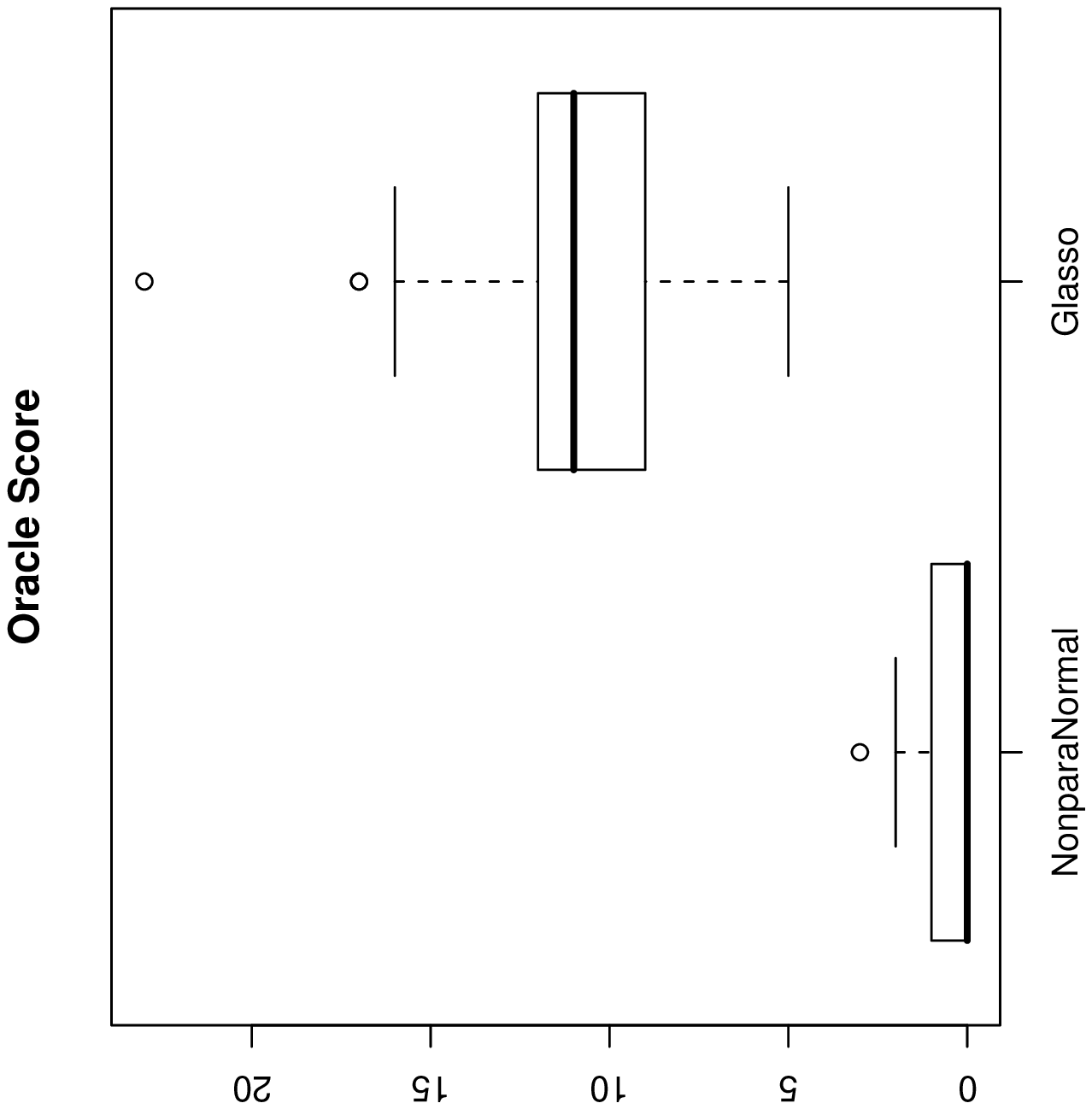} &
\hskip-5pt
\includegraphics[width=.3\textwidth,angle=-90]{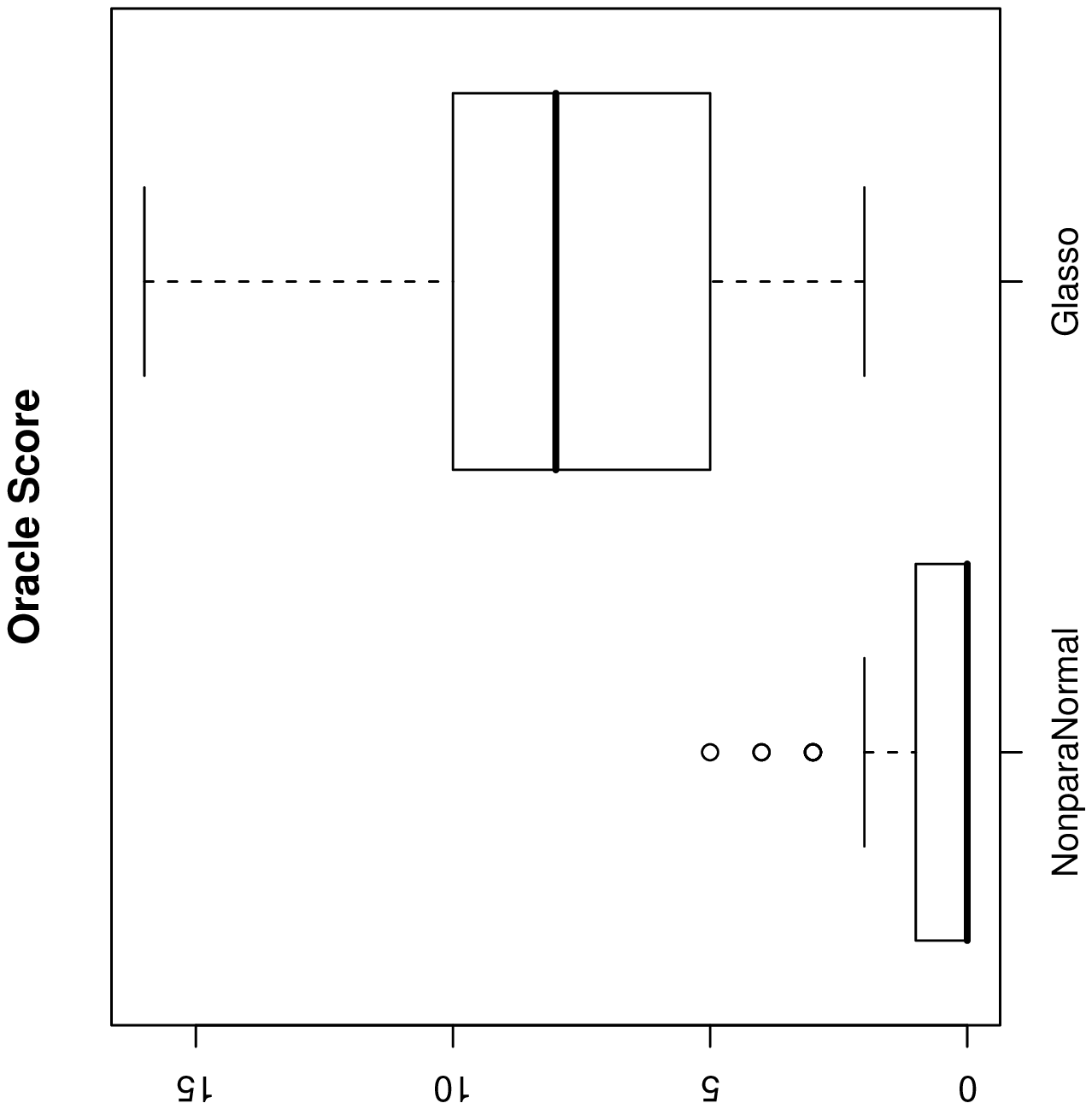}&  
\hskip-5pt
\includegraphics[width=.3\textwidth,angle=-90]{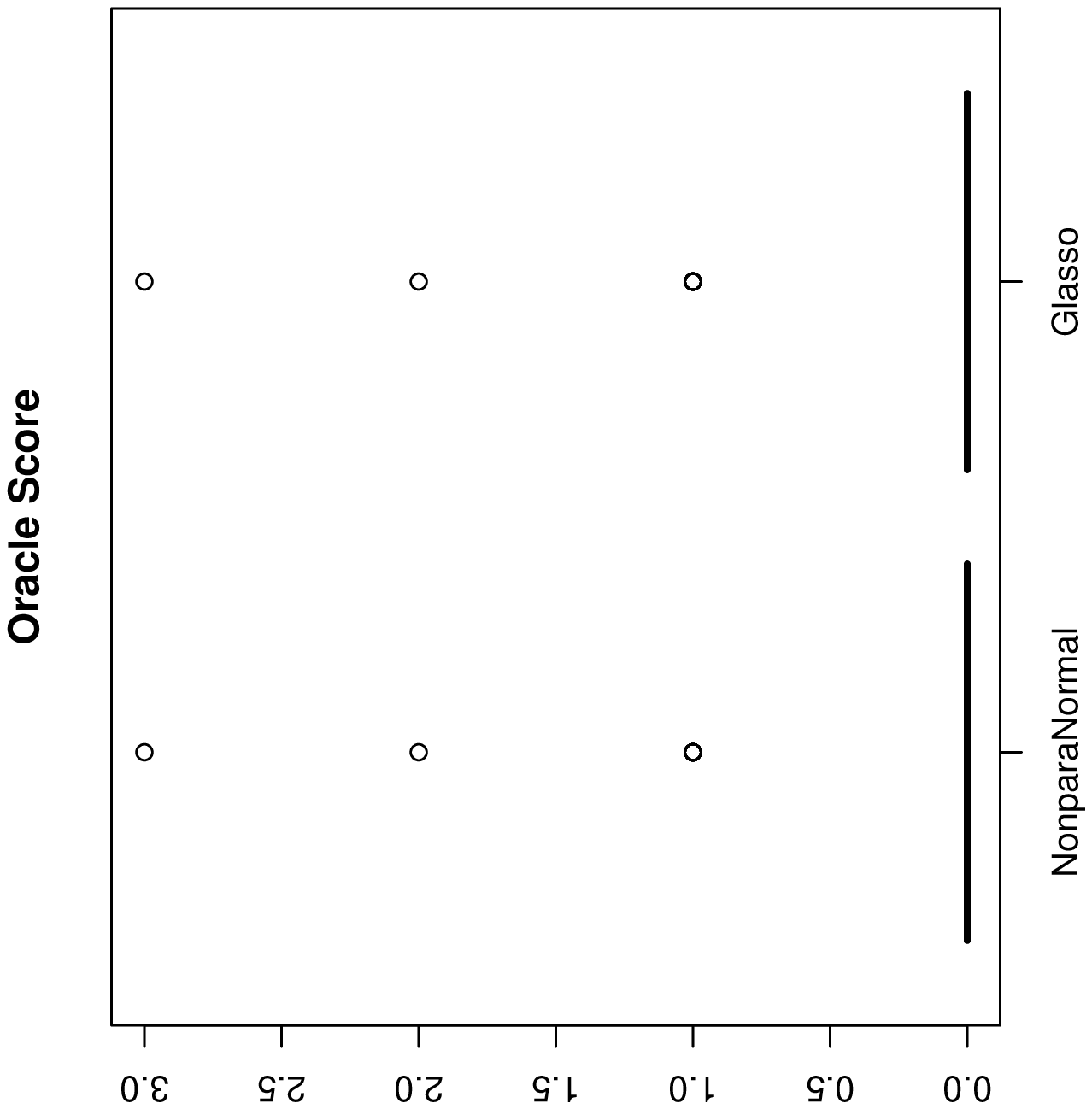}
\\[-20pt]
\hskip-5pt
\includegraphics[width=.3\textwidth,angle=-90]{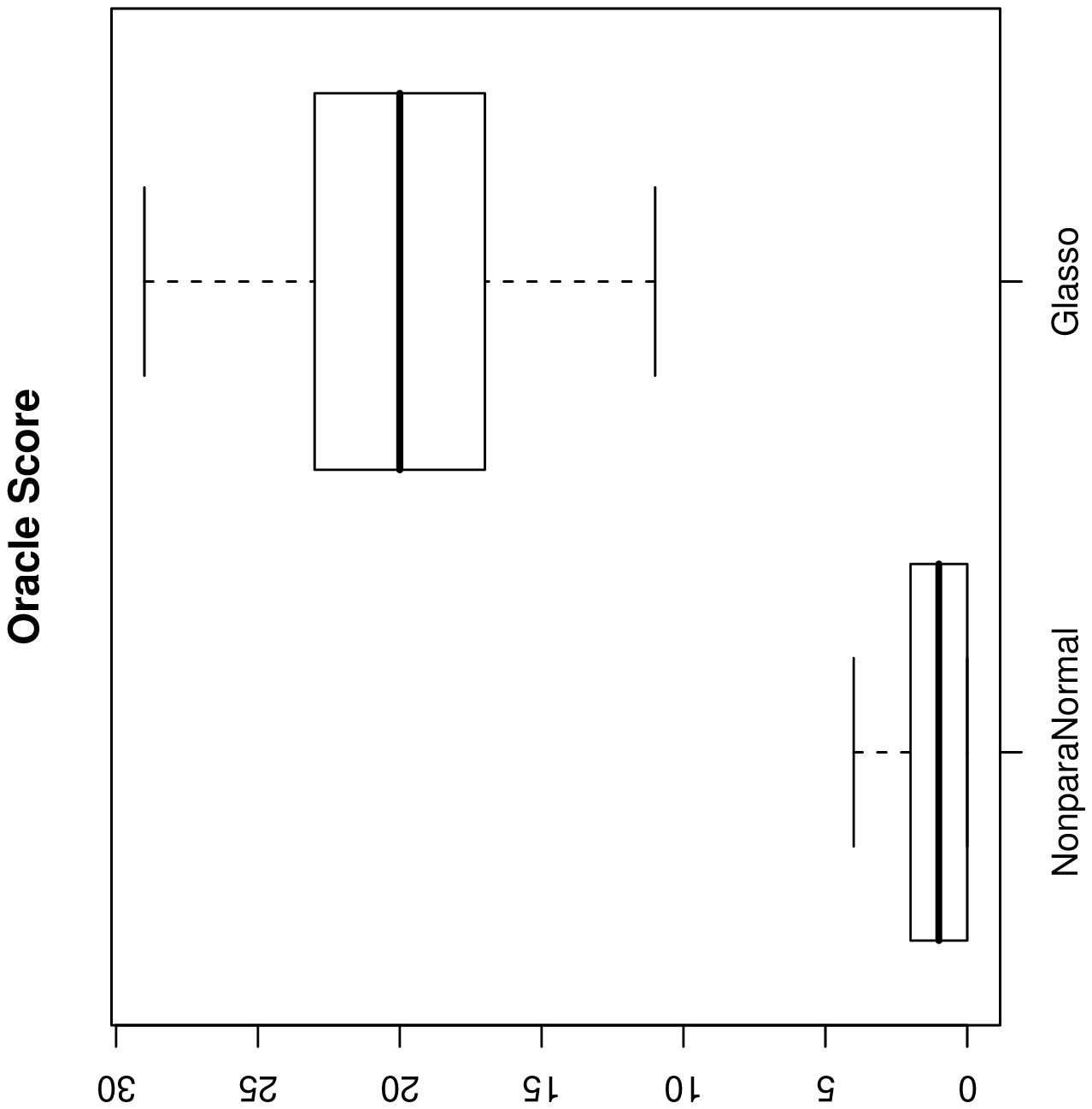} &
\hskip-5pt
\includegraphics[width=.3\textwidth,angle=-90]{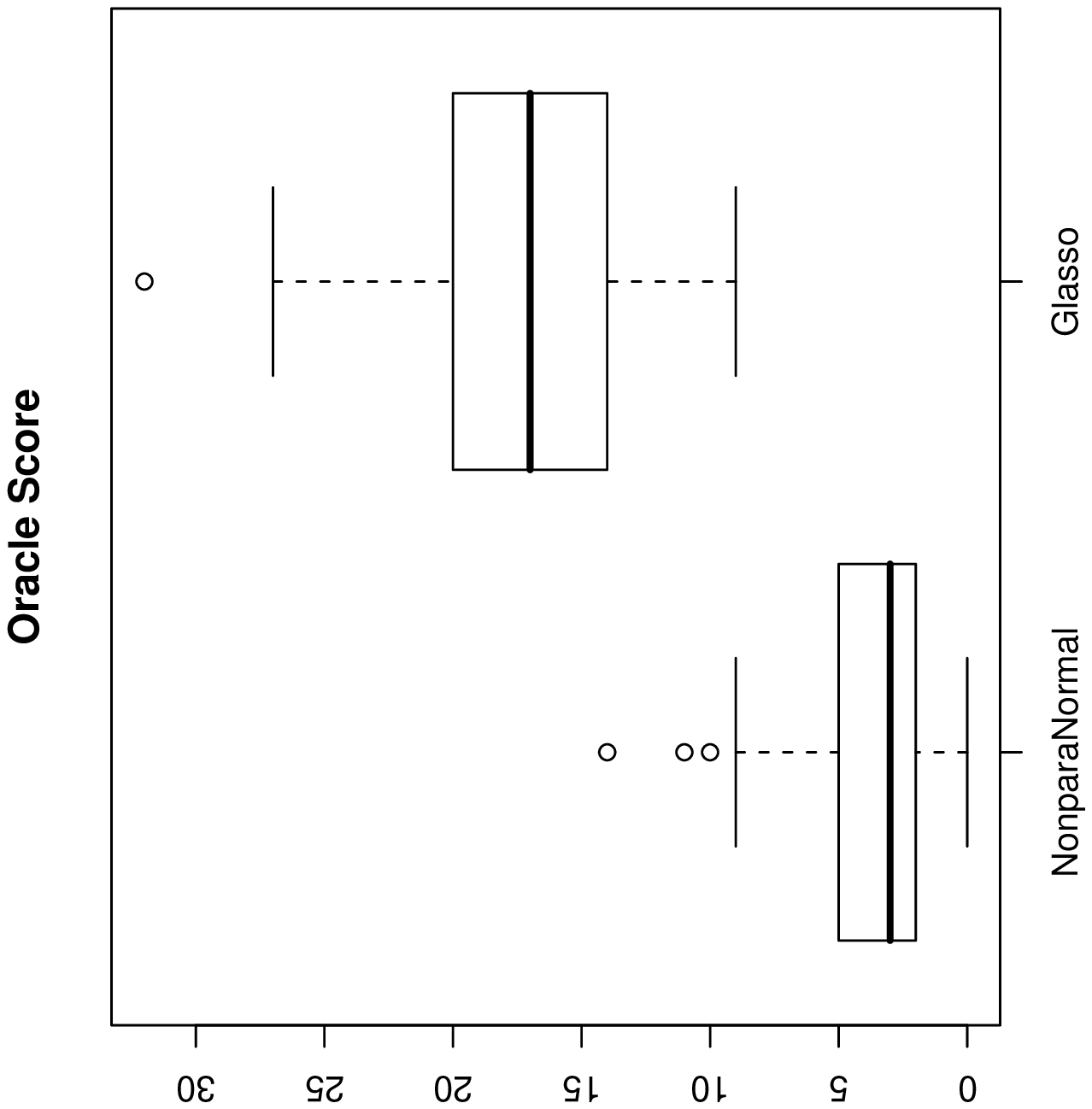}& 
\hskip-5pt
\includegraphics[width=.3\textwidth,angle=-90]{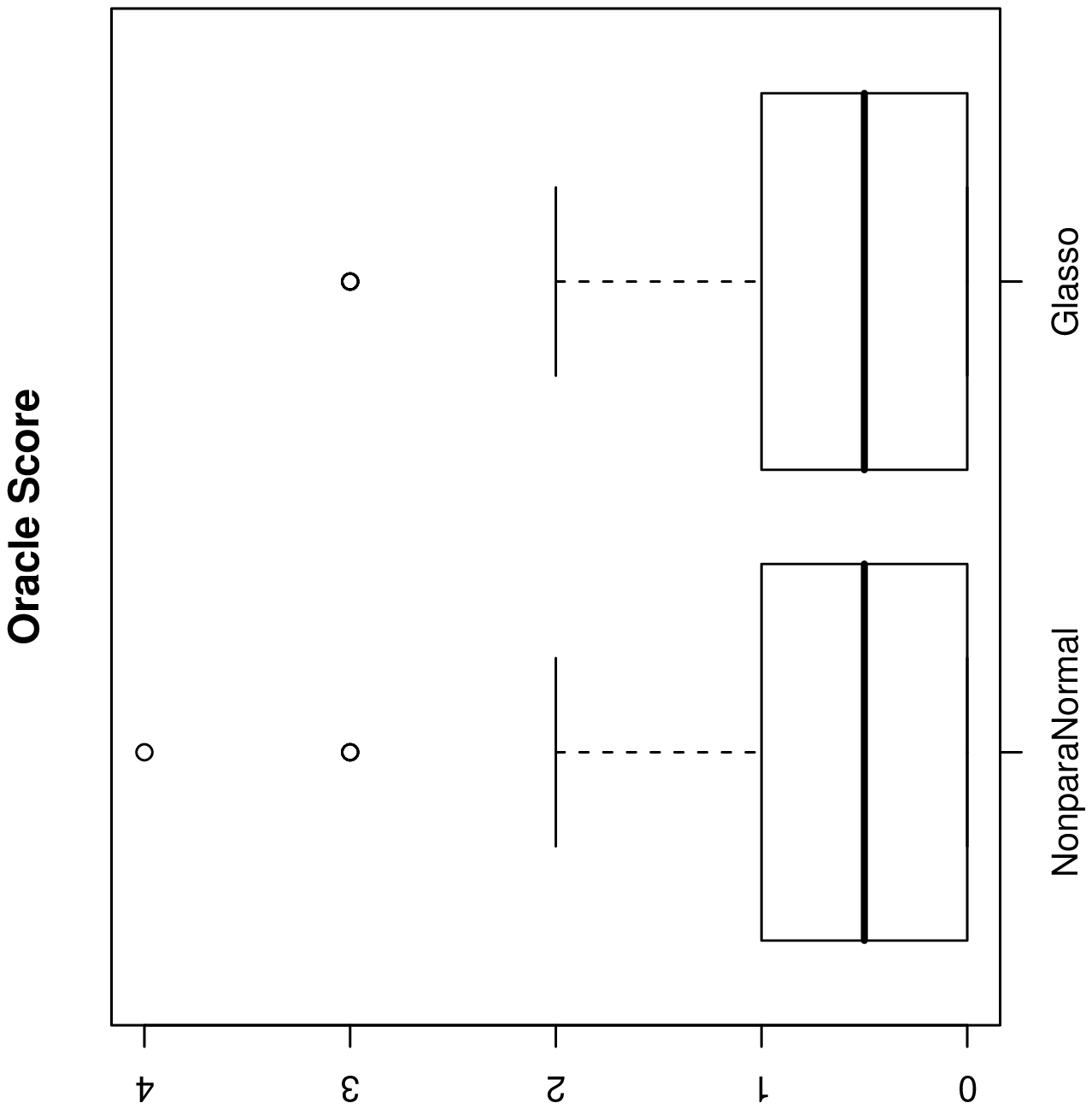}
\\[-20pt]
\hskip-5pt
\includegraphics[width=.3\textwidth,angle=-90]{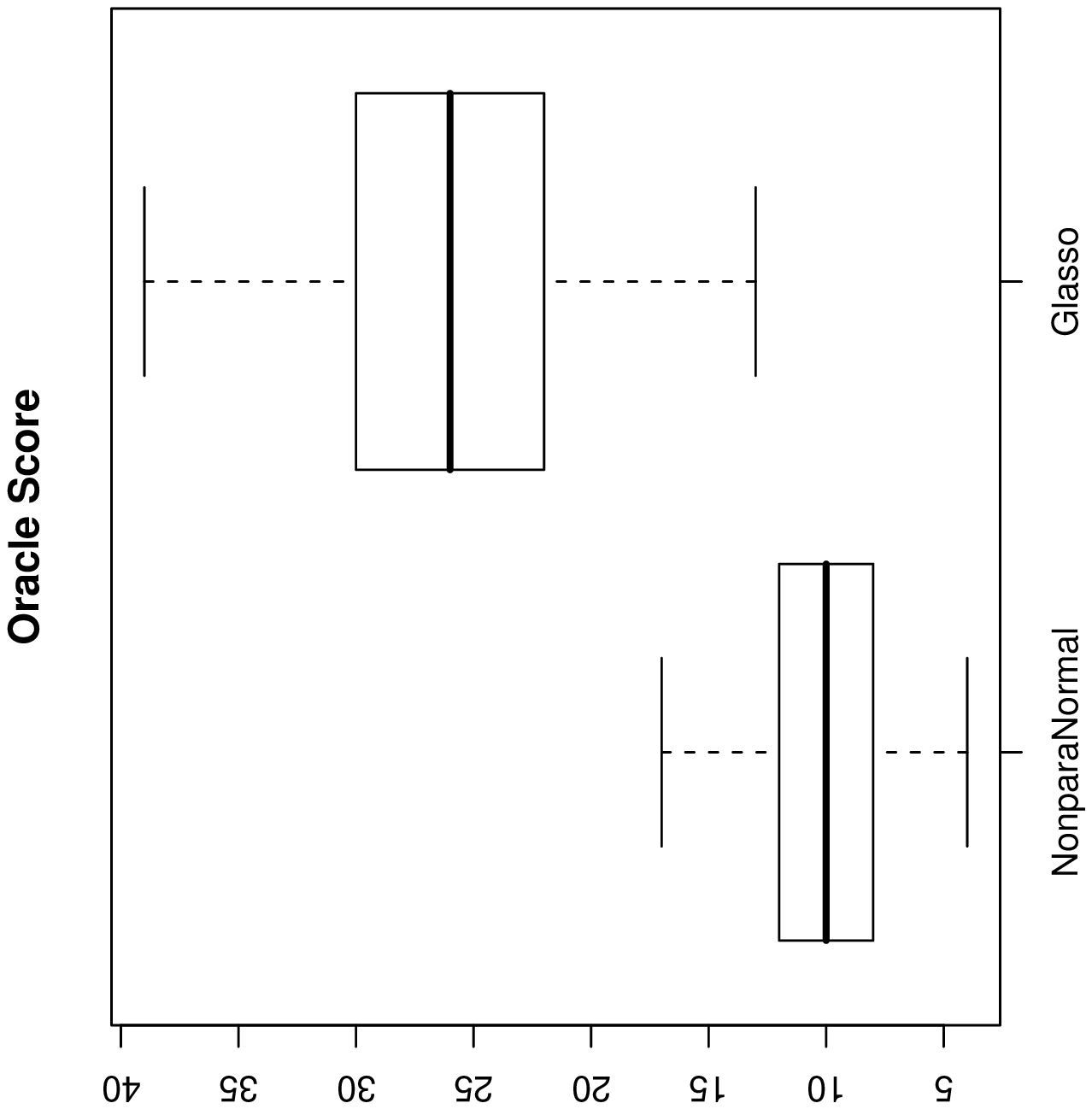} &
\hskip-5pt
\includegraphics[width=.3\textwidth,angle=-90]{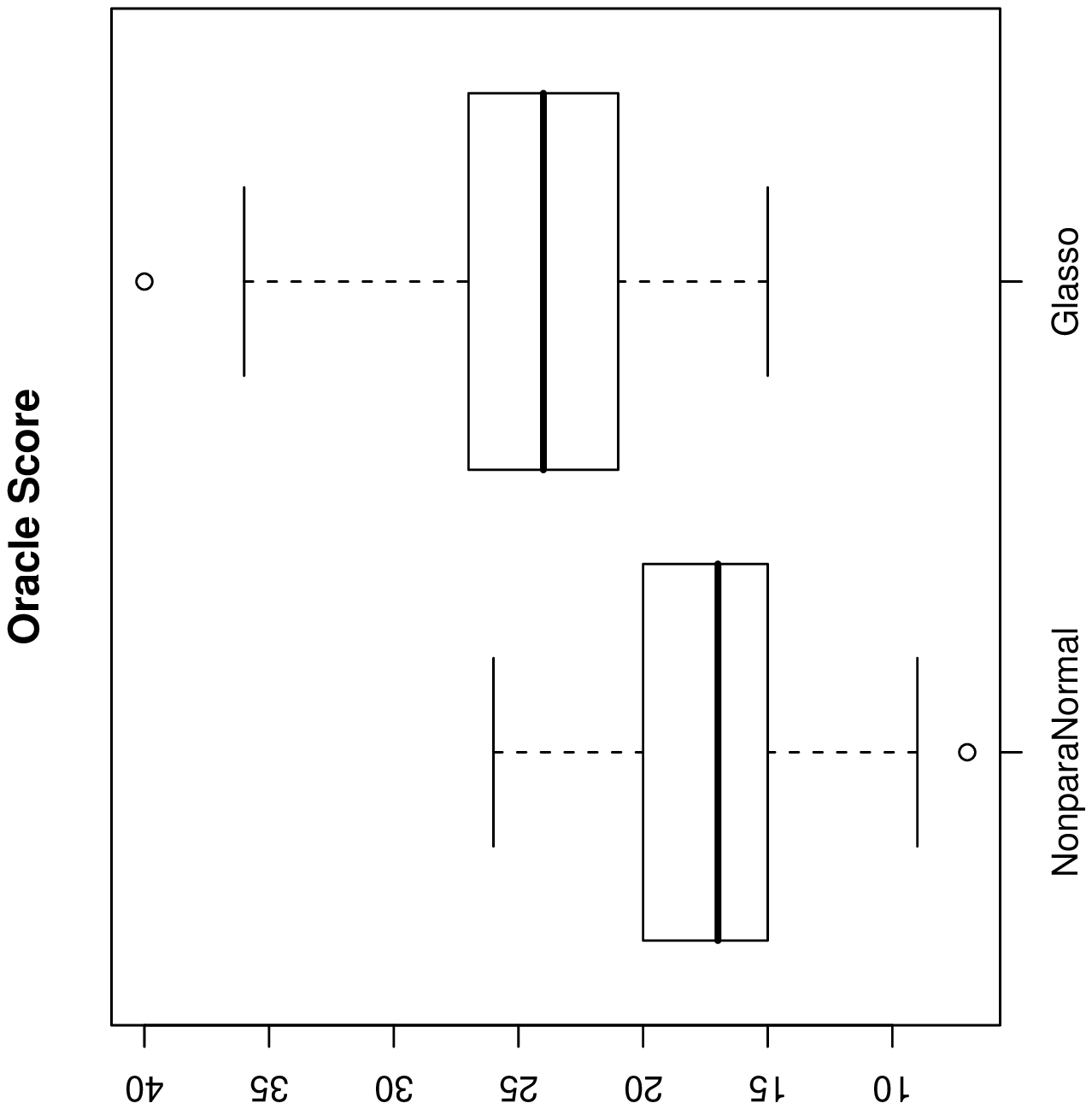}& 
\hskip-5pt
\includegraphics[width=.3\textwidth,angle=-90]{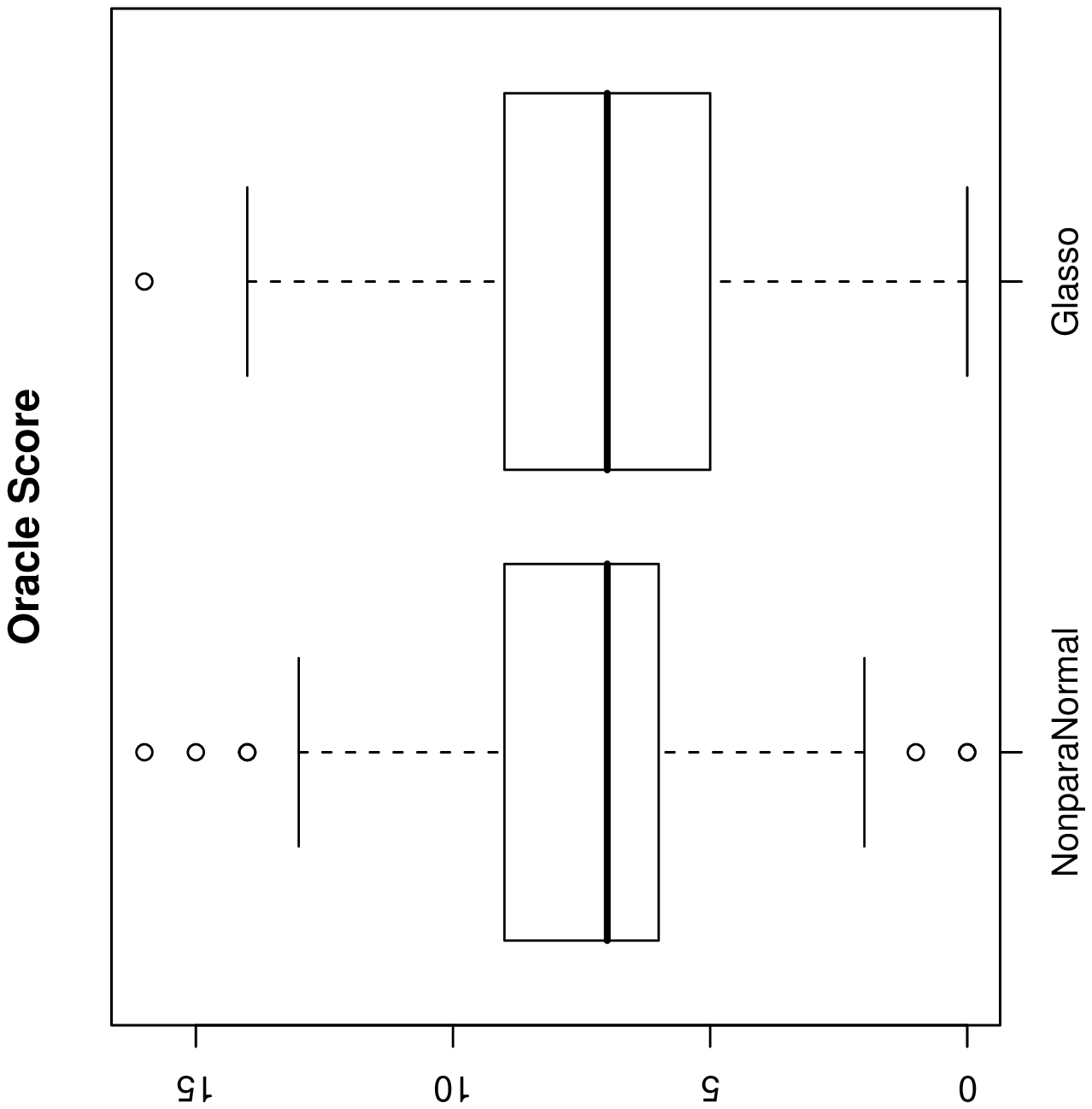}
\\[-15pt]
\end{tabular}
 \end{center}
\caption{\small Boxplots of the oracle scores for $n=1000, 500, 200$ (top, center,
  bottom).}
\vskip-5pt
\label{fig.Boxplots}
\end{figure}

\subsubsection{Quantitative comparison}

To evaluate the performance for structure estimation quantitatively,
we use false positive and false negative rates.
Let $G=(V,E)$ be a $p$-dimensional graph (which has at most
$\binom{p}{2}$ edges) in which there are $|E|=r$
edges, and let $\hat{G}^\lambda = (V, \hat{E}^\lambda)$ be an estimated graph using the
regularization parameter $\lambda$. The number of false positives at
$\lambda$ is 
\begin{eqnarray}
\mathrm{FP(\lambda)} \equiv \text{number of edges
in $\hat{E}^\lambda$ not in $E$}
\nonumber
\end{eqnarray}
The number of false negatives at
$\lambda$ is defined as
\begin{eqnarray}
\mathrm{FN(\lambda)} \equiv \text{number of edges in $E$
not in $\hat{E}^\lambda$}. \nonumber
\end{eqnarray}
The oracle regularization level $\lambda^*$ is then
\begin{eqnarray}
\lambda^* = \argmin_{\lambda \in \Lambda}\left\{ \mathrm{FP(\lambda)}
 + \mathrm{FN(\lambda)}
\right\}. \nonumber
\end{eqnarray}
The oracle score is $\mathrm{FP}(\lambda^*)+\mathrm{FN}(\lambda^*)$.
Figure~\ref{fig.Boxplots} shows boxplots of the oracle scores
for the two methods, calculated using 100 simulations.

To illustrate the overall performance of these two methods over the
full paths, ROC curves are shown in Figure \ref{fig.Rocs}, using
\begin{eqnarray}
\left(1-\frac{\mathrm{FN}(\lambda)}{r},
1-\frac{\mathrm{FP}(\lambda)}{\binom{p}{2}-r}\right).
\nonumber
\end{eqnarray}
The curves clearly show how the performance of both methods improves with
sample size, and that the nonparanormal is superior to the Gaussian
model in most cases.

\begin{figure}[htp]
\begin{center}
\begin{tabular}{ccc}
\\[-30pt]
\scriptsize \bf cdf & \scriptsize \bf power &\scriptsize \bf linear \\[-15pt]
\vspace{0in} 
\hskip-5pt
\includegraphics[width=.3\textwidth,angle=-90]{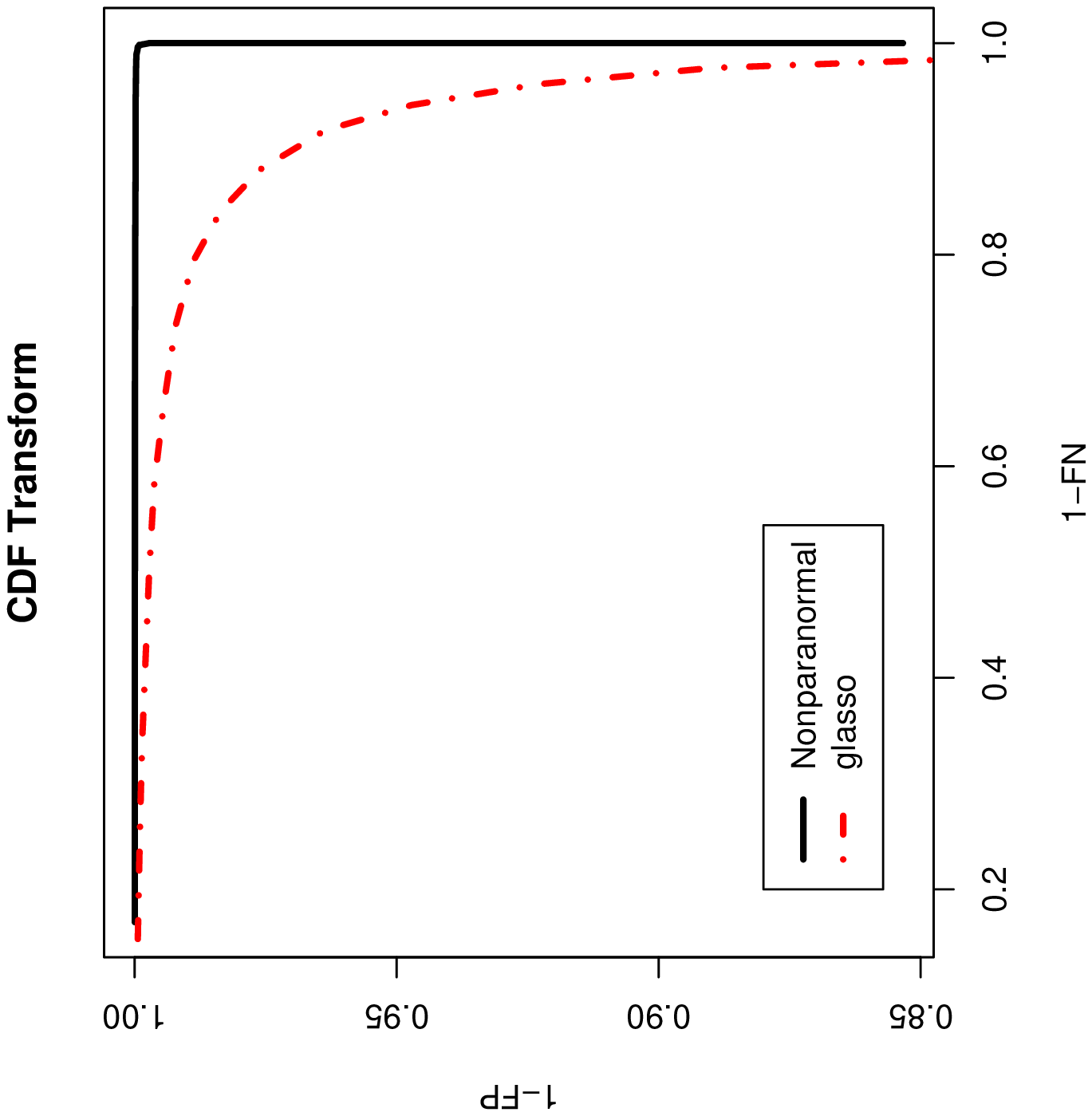} &
\hskip-5pt
\includegraphics[width=.3\textwidth,angle=-90]{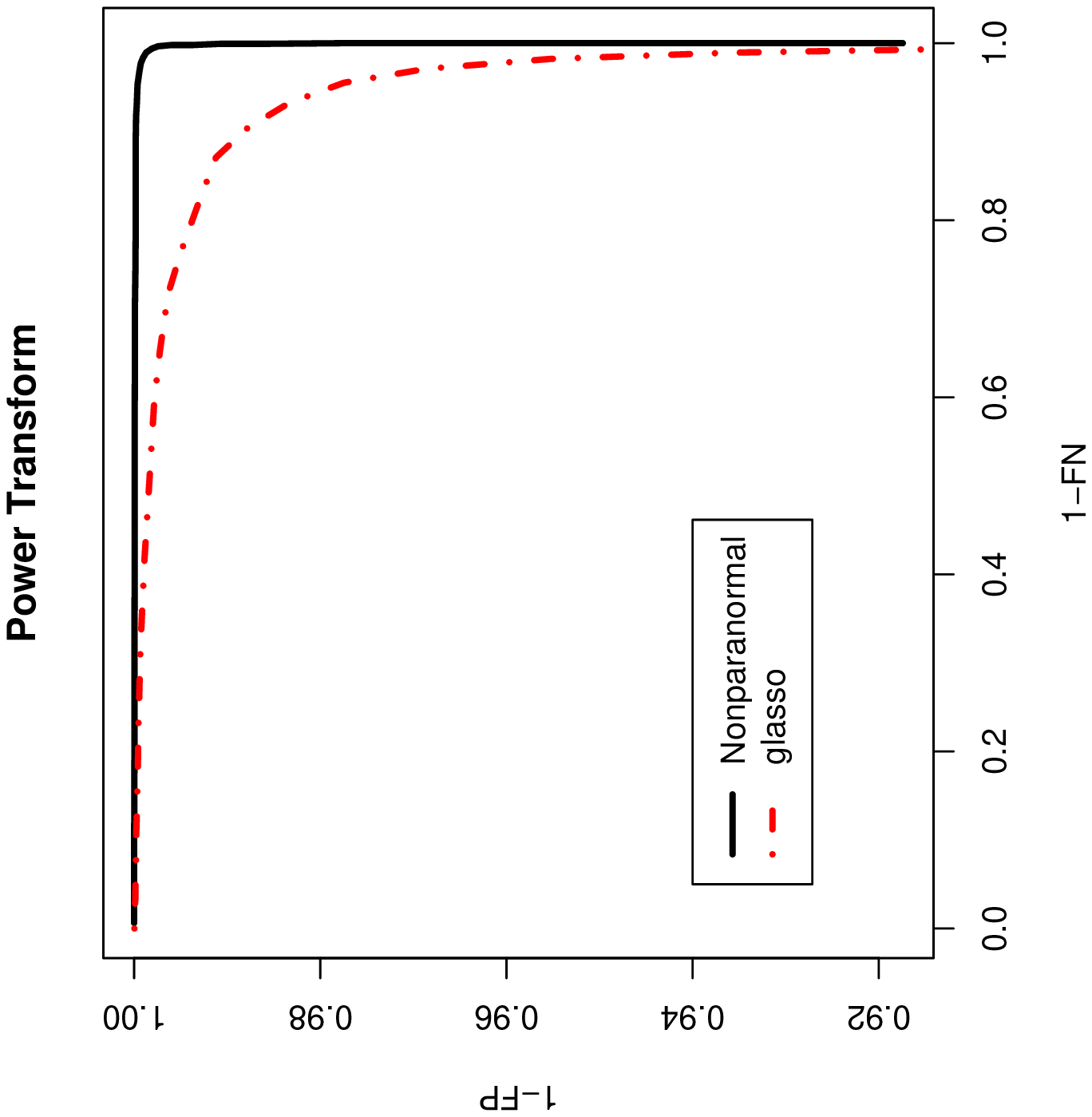}&
\hskip-5pt
\includegraphics[width=.3\textwidth,angle=-90]{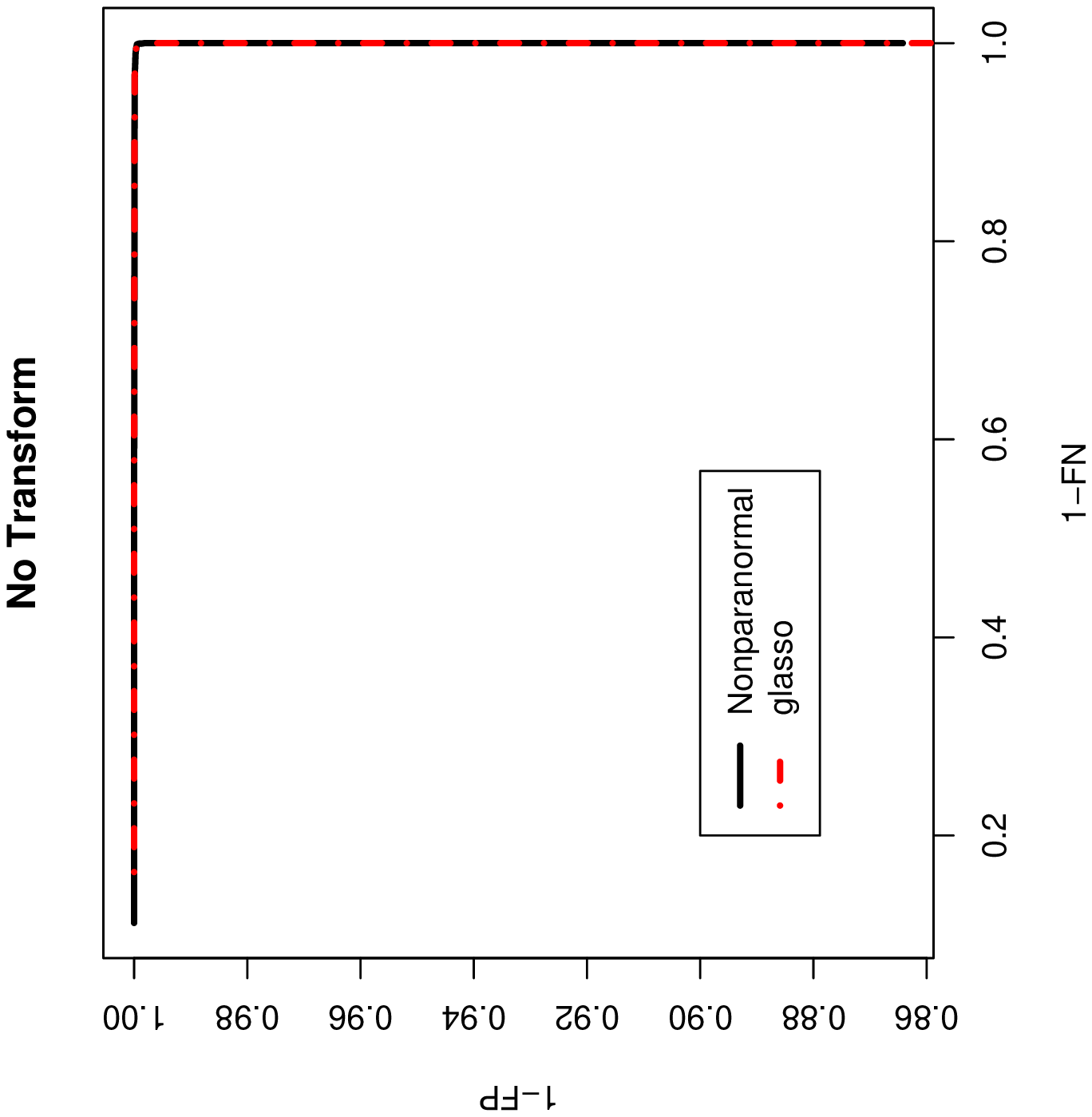} 
\\[-20pt]
\hskip-5pt
\includegraphics[width=.3\textwidth,angle=-90]{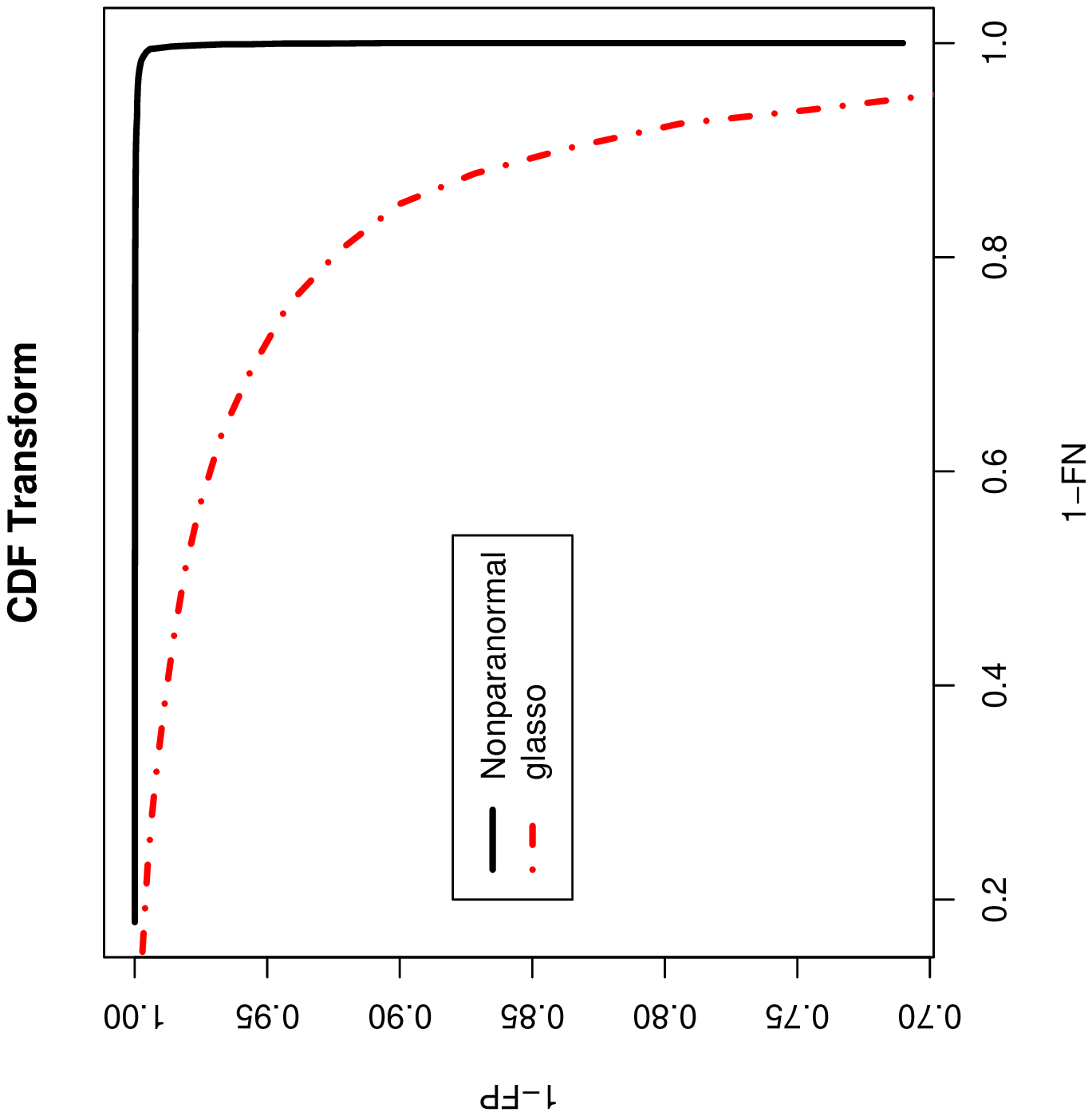} &
\hskip-5pt
\includegraphics[width=.3\textwidth,angle=-90]{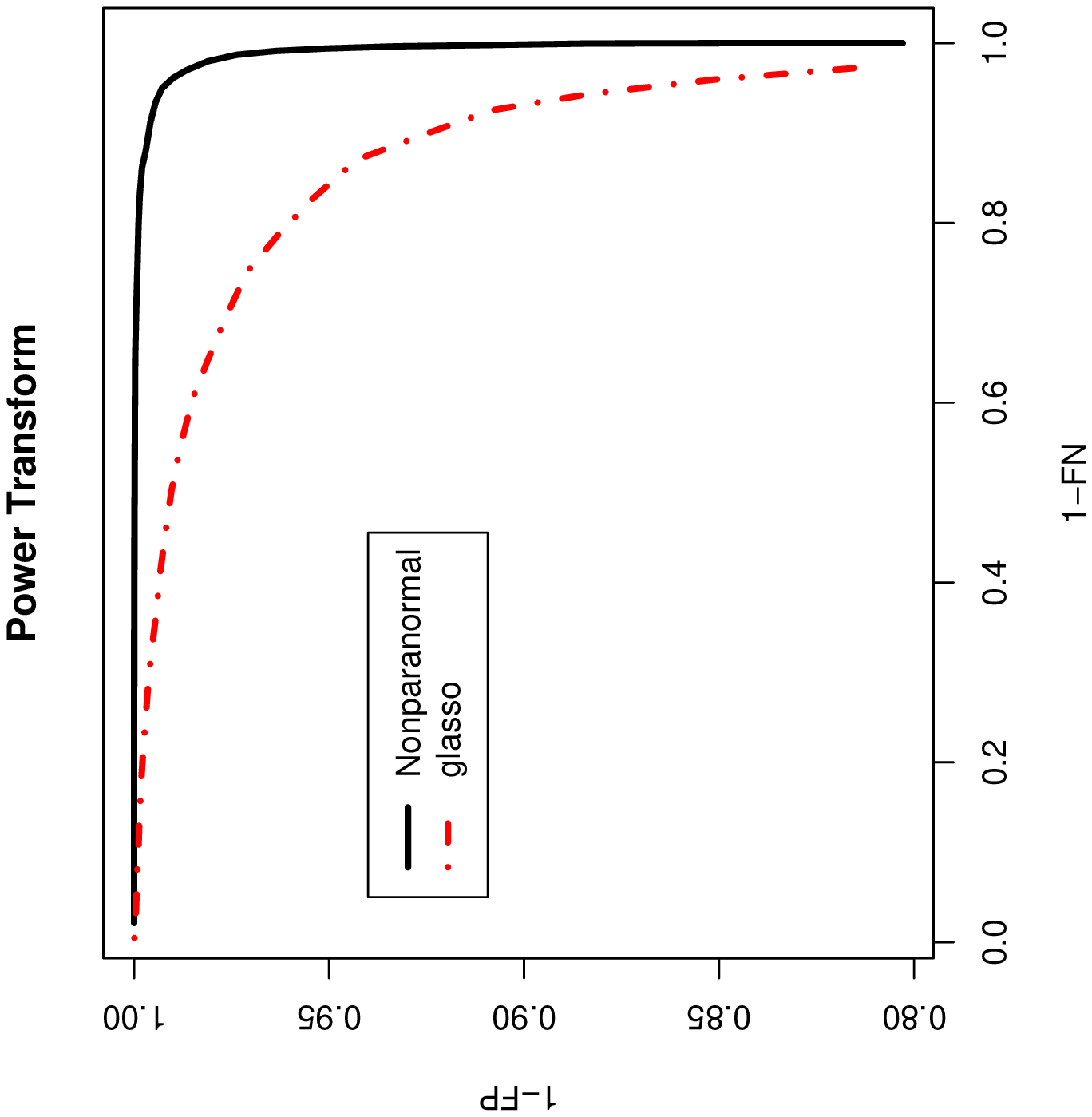}&
\hskip-5pt
\includegraphics[width=.3\textwidth,angle=-90]{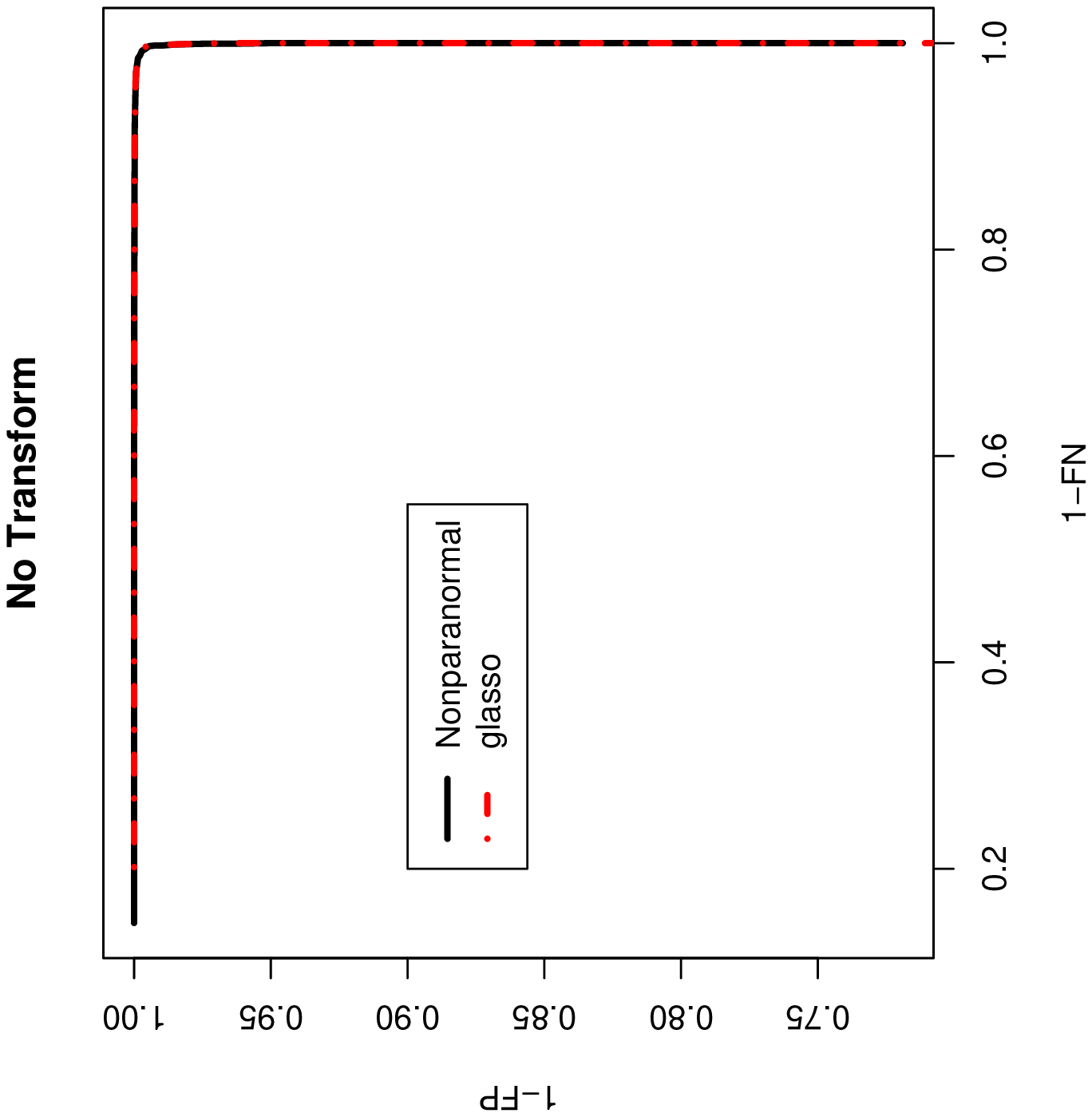}
\\[-20pt]
\hskip-5pt
\includegraphics[width=.3\textwidth,angle=-90]{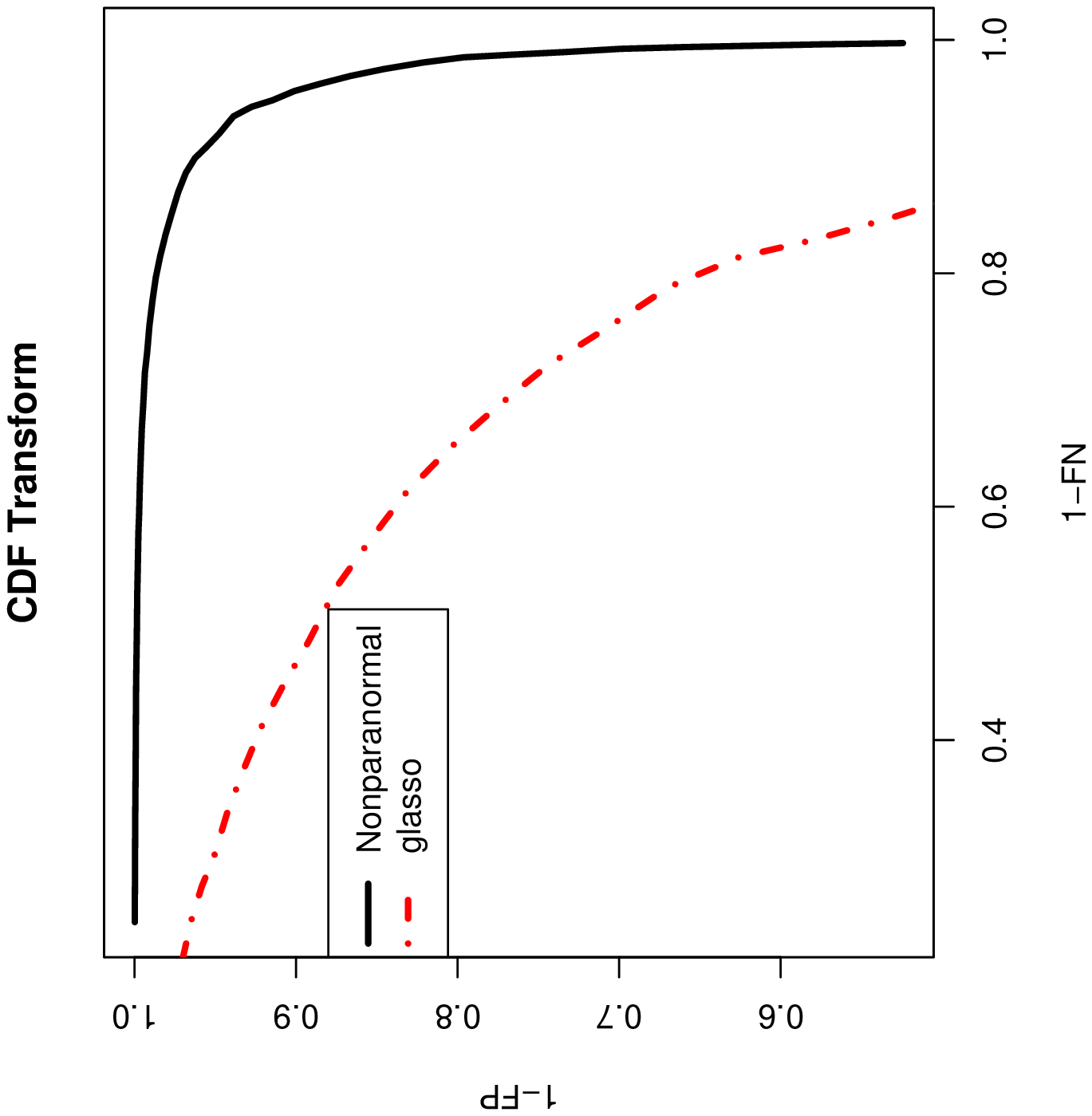} &
\hskip-5pt
\includegraphics[width=.3\textwidth,angle=-90]{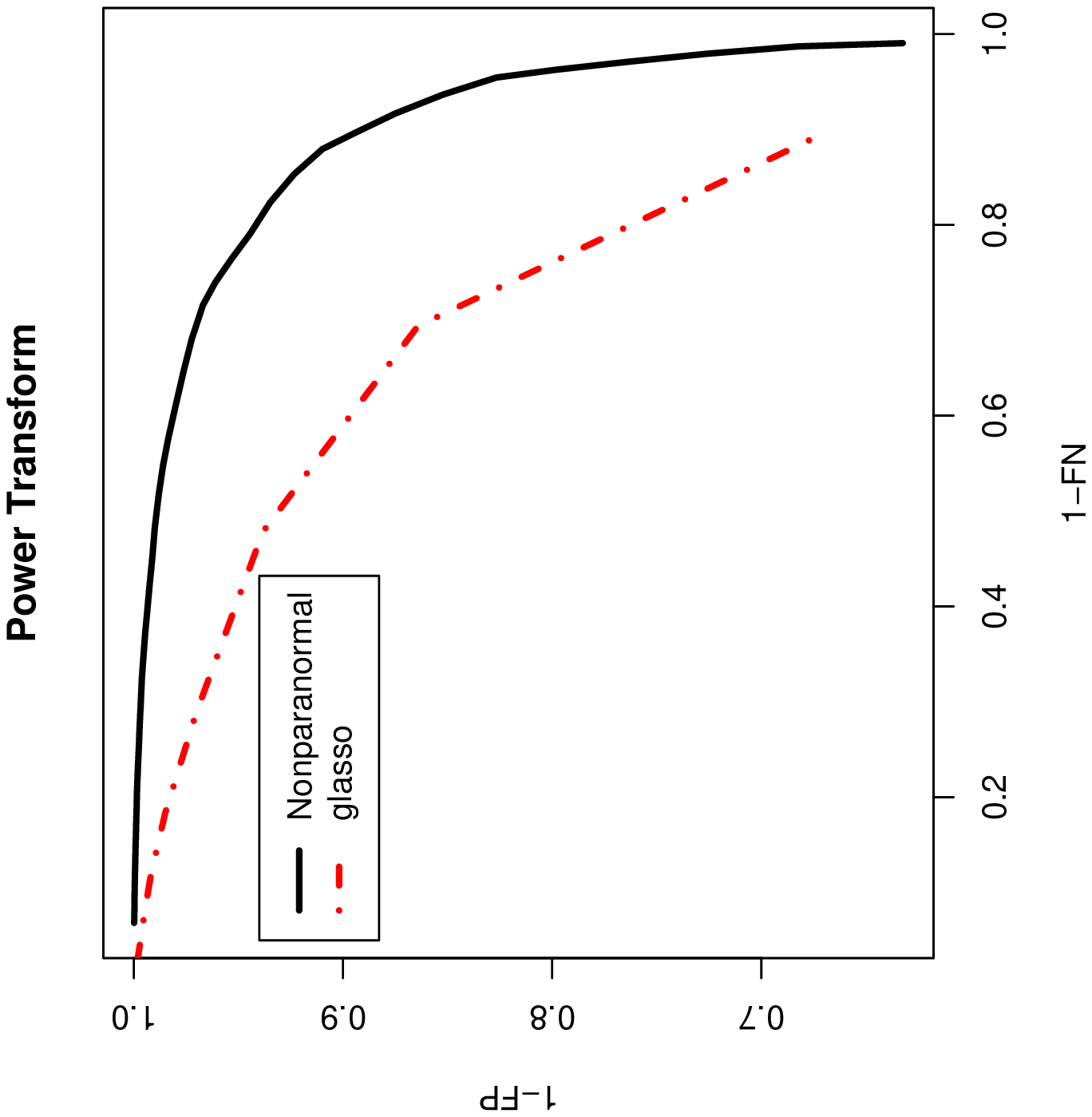}&
\hskip-5pt
\includegraphics[width=.3\textwidth,angle=-90]{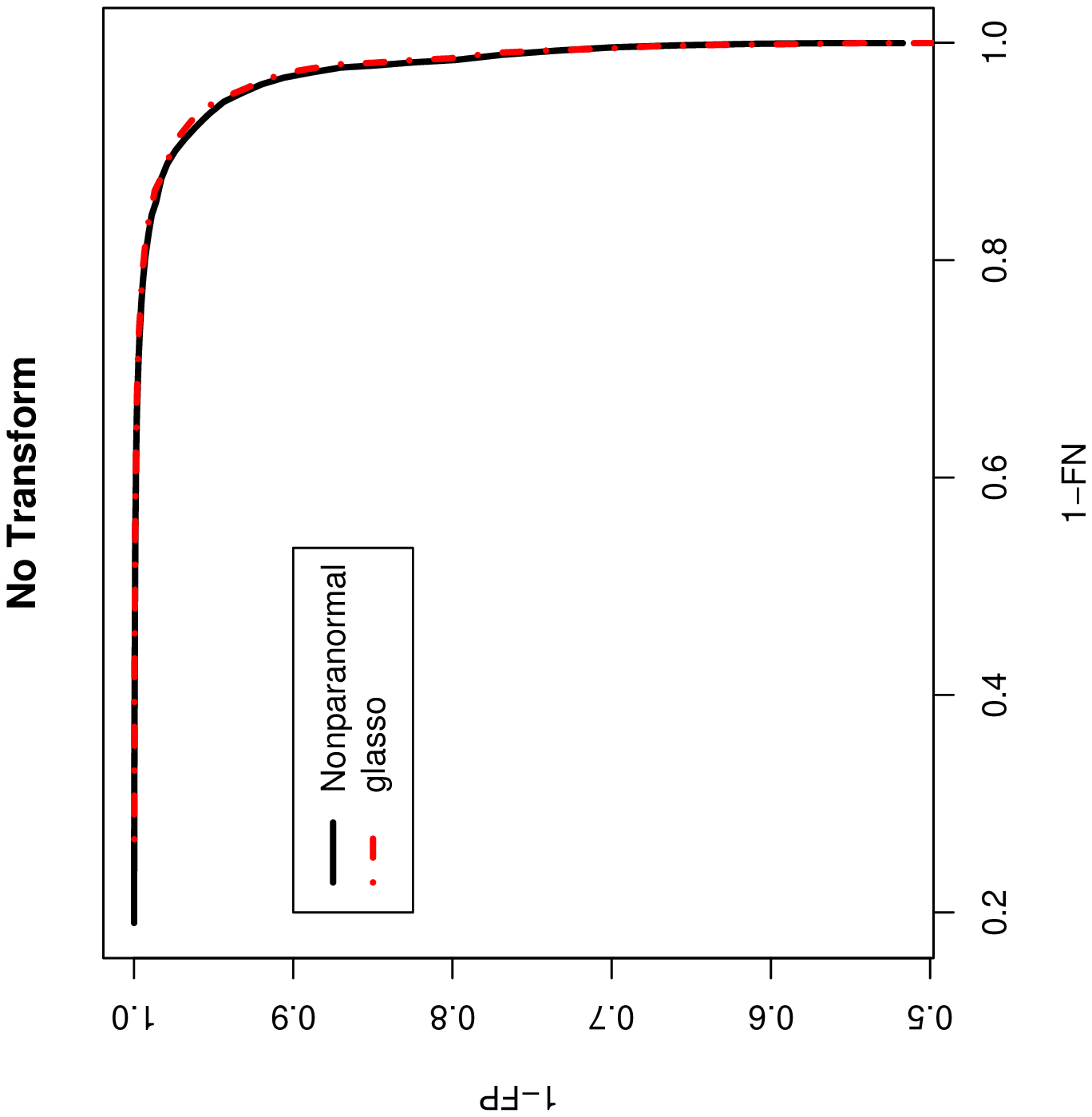}
\\[-20pt]
\end{tabular}
 \end{center}
\caption{\small ROC curves for sample sizes $n=1000, 500, 200$ (top,
  middle, bottom).}\label{fig.Rocs}
\end{figure}

Let   $\mathrm{FPE} \equiv \mathrm{FP}(\lambda^{*})$ and  $\mathrm{FNE} \equiv \mathrm{FN}(\lambda^{*})$,  Tables \ref{tab.CDFFPE}, \ref{tab.PowerFPE}, and \ref{tab.GaussFPE}
provide numerical comparisons of both methods on datasets
with different transformations, where we repeat the experiments 100
times and report the average {\rm FPE} and $\mathrm{FNE}$
values with the corresponding standard deviations. It's clear from
the tables that the nonparanormal achieves significantly smaller
errors than the glasso if the true distribution of the data is not multivariate
Gaussian and achieves comparable performance as the glasso when the
true distribution is exactly multivariate Gaussian.

 \newcolumntype {Q}{>{$\displaystyle}l<{$}}
 \newcolumntype {A}{>{$}c <{$}}
 \begin{table}[htp]
 \caption{Quantitative comparison on the dataset using the cdf
 transformation}\label{tab.CDFFPE}
 \begin{center}
{\sf \small
 \begin{tabular}{QAAAAAAAA}
 \toprule %
 \multicolumn{8}{c}{ Nonparanormal ~~~~~~} {glasso}  \\
 \cmidrule(r){2-5}\cmidrule(r){6-9}
      n       & {\rm FPE} &  {\rm (sd(FPE))} & {\rm FNE} & {\rm (sd(FNE))}  & {\rm FPE} &  {\rm (sd(FPE))} & {\rm FNE} & {\rm (sd(FNE))}  \\
 \midrule
 1000 & 0.10 & (0.3333) & 0.05 & (0.2190) & 3.73 & (2.3904) & 7.24 & (3.2910)\\
 \midrule
  900 & 0.18 & (0.5389) & 0.16 & (0.4197) & 3.31 & (2.4358) & 8.94 & (3.2808)\\
 \midrule
  800 & 0.16 & (0.5069) & 0.23 & (0.5659) & 3.80 & (2.9439) & 9.91 & (3.4789)\\
  \midrule
  700 & 0.26 & (0.6295) & 0.43 & (0.7420) & 3.45 & (2.5519) & 12.26 & (3.5862)\\
  \midrule
  600 & 0.33 & (0.6039) & 0.41 & (0.6371) & 3.31 & (2.8804) & 14.25 & ( 4.0735)\\
  \midrule
  500 & 0.58 & (0.9658) & 1.10 & (1.0396) & 3.18 & (2.9211) & 17.54 & (4.4368)\\
  \midrule
  400 & 0.71 & (1.0569) & 1.52 & (1.2016) & 1.58 & (2.3535) & 21.18 & (4.9855)\\
  \midrule
  300 & 1.37 & (1.4470) & 2.97 & (2.0123) & 0.67 & (1.6940) & 23.14 & (5.0232)\\
   \midrule
  200 & 2.03 & (1.9356) & 7.13 & (3.4514) & 0.01 & (0.1000) & 24.03 & ( 4.9816)\\
\bottomrule
 \end{tabular}\label{table.coef} }
 \end{center}
 \end{table}

 \begin{table}[htp]
 \caption{Quantitative comparison on the dataset using the power
 transformation}\label{tab.PowerFPE}
 \begin{center}
{\sf \small
 \begin{tabular}{QAAAAAAAA}
 \toprule %
 \multicolumn{8}{c}{ Nonparanormal ~~~~~~} {glasso}  \\
 \cmidrule(r){2-5}\cmidrule(r){6-9}
      n       & {\rm FPE} &  {\rm (sd(FPE))} & {\rm FNE} & {\rm (sd(FNE))}  & {\rm FPE} &  {\rm (sd(FPE))} & {\rm FNE} & {\rm (sd(FNE))}  \\
 \midrule
 1000 & 0.27 & (0.7086) & 0.35 & (0.6571) & 2.89 & (1.9482) & 4.97 & (2.9213)\\
 \midrule
  900 & 0.38 & (0.6783) & 0.41 & (0.6210) & 2.98 & (2.3697) & 5.99 & (3.0467)\\
 \midrule
  800 & 0.25 & (0.5751) & 0.73 & (0.8270) & 4.10 & (2.7834) & 6.39 & (3.3571)\\
  \midrule
  700 & 0.69 & (0.9067) & 0.90 & (1.0200) & 4.42 & (2.8891) & 8.80 & (3.9848)\\
  \midrule
  600 & 0.92 & (1.2282) & 1.59 & (1.5314) & 4.64 & (3.3830) & 10.58 & (4.2168)\\
  \midrule
  500 & 1.17 & (1.3413) & 2.56 & (2.3325) & 4.00 & (2.9644) & 13.09 & (4.4903)\\
  \midrule
  400 & 1.88 & (1.6470) & 4.97 & (2.7687) & 3.14 & (3.4699) & 17.87 & (4.7750)\\
  \midrule
  300 & 2.97 & (2.4181) & 7.85 & (3.5572) & 1.36 & (2.3805) & 21.24 & (4.7505)\\
   \midrule
  200 & 2.82 & (2.6184) & 14.53 & ( 4.3378) & 0.37 & (0.9914) & 24.01& (5.0940)\\
\bottomrule
 \end{tabular}\label{table.coef2} }
 \end{center}
 \end{table}

 \begin{table}[htp]
 \caption{Quantitative comparison on the dataset without any
 transformation}\label{tab.GaussFPE}
 \begin{center}
{\sf \small
 \begin{tabular}{QAAAAAAAA}
 \toprule %
 \multicolumn{8}{c}{ Nonparanormal ~~~~~~} {glasso}  \\
 \cmidrule(r){2-5}\cmidrule(r){6-9}
      n       & {\rm FPE} &  {\rm (sd(FPE))} & {\rm FNE} & {\rm (sd(FNE))}  & {\rm FPE} &  {\rm (sd(FPE))} & {\rm FNE} & {\rm (sd(FNE))}  \\
 \midrule
 1000 & 0.10 & (0.3333) & 0.05 & (0.2190) & 0.09 & (0.3208) & 0.06 & (0.2386)\\
 \midrule
  900 & 0.24 & (0.7537) & 0.14 & (0.4025) & 0.22 & (0.6447) & 0.15 & (0.4113)\\
 \midrule
  800 & 0.17 & (0.4277) & 0.16 & (0.3949) & 0.16 & (0.4431) & 0.19 & (0.4191)\\
  \midrule
  700 & 0.25 & (0.6871) & 0.33 & (0.8534) & 0.29 & (0.8201) & 0.27 & (0.7501)\\
  \midrule
  600 & 0.37 & (0.7740) & 0.36 & (0.7456) & 0.36 & (0.7722) & 0.37 & (0.6459)\\
  \midrule
  500 & 0.28 & (0.5874) & 0.46 & (0.7442) & 0.25 & (0.5573) & 0.45 & (0.6571)\\
  \midrule
  400 & 0.55 & (0.8453) & 1.37 & (1.2605) & 0.47 & (0.7713) & 1.35 & (1.2502)\\
  \midrule
  300 & 1.24 & (1.3715) & 3.07 & (1.7306) & 0.98 & (1.2058) & 3.04 & (1.8905)\\
   \midrule
  200 & 1.62 & (1.7219) & 5.89 & (2.7373) & 1.55 & (1.6779) & 5.62 & (2.6620)\\
\bottomrule
 \end{tabular}\label{table.coef3} }
 \end{center}
 \end{table}

\subsubsection{Visualization of typical runs}

Figure \ref{fig.CDFtypicalRun}
shows typical runs for the cdf and power transformations.
It's clear that when the glasso estimates the graph incorrectly, the mistakes
include both false positives and~negatives.

\begin{figure}[htp]
\vspace{-0.1in}
\begin{center}
\begin{tabular}{c|c}
\scriptsize \bf cdf & \scriptsize \bf power \\[-5pt]
\vspace{0in} 
\hskip-25pt
\includegraphics[width=3.5in,height=3.3in,angle=-90]{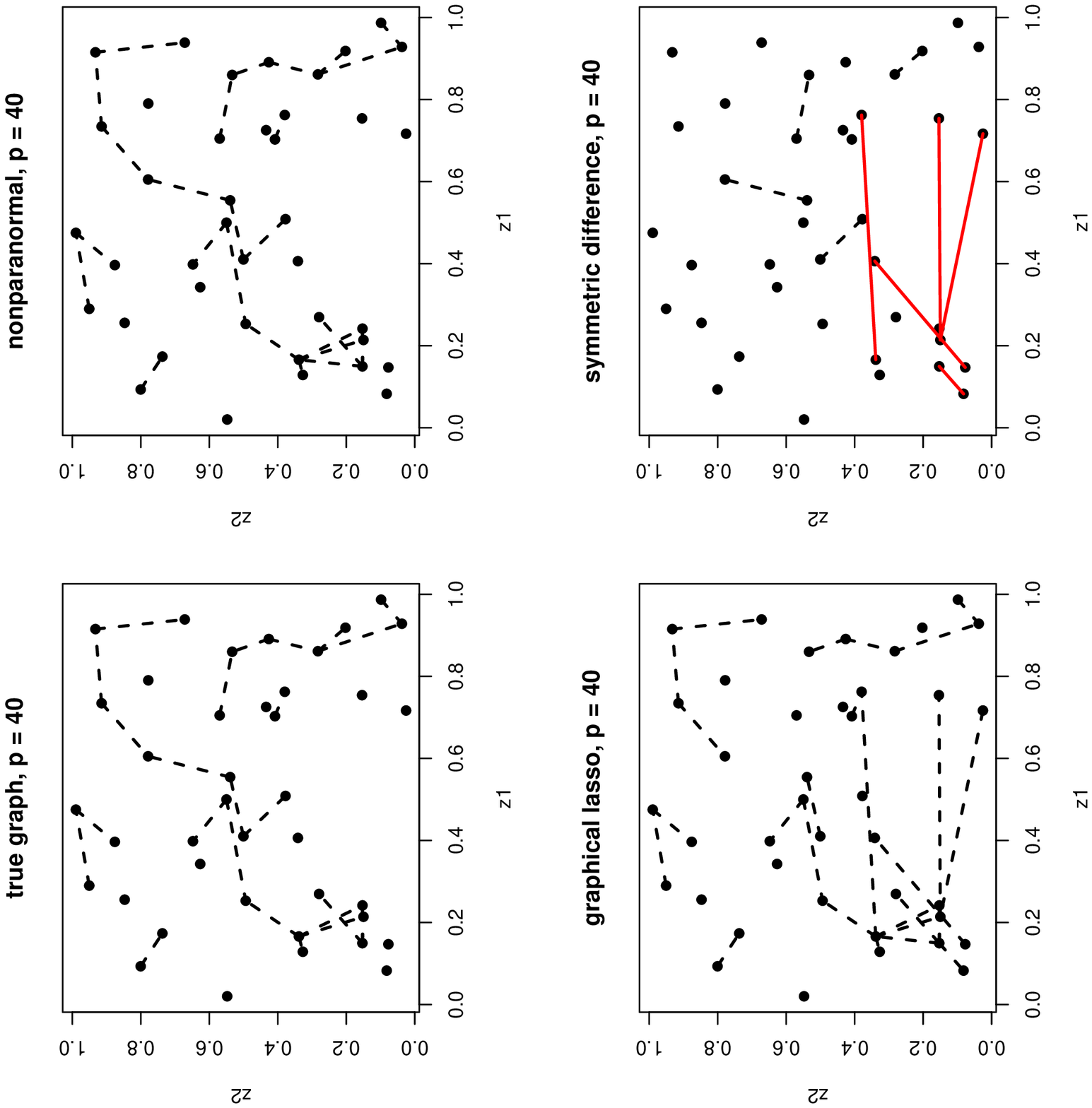} &\vspace{-0.02in}
\hspace{-0.20in}
\includegraphics[width=3.5in,height=3.3in,angle=-90]{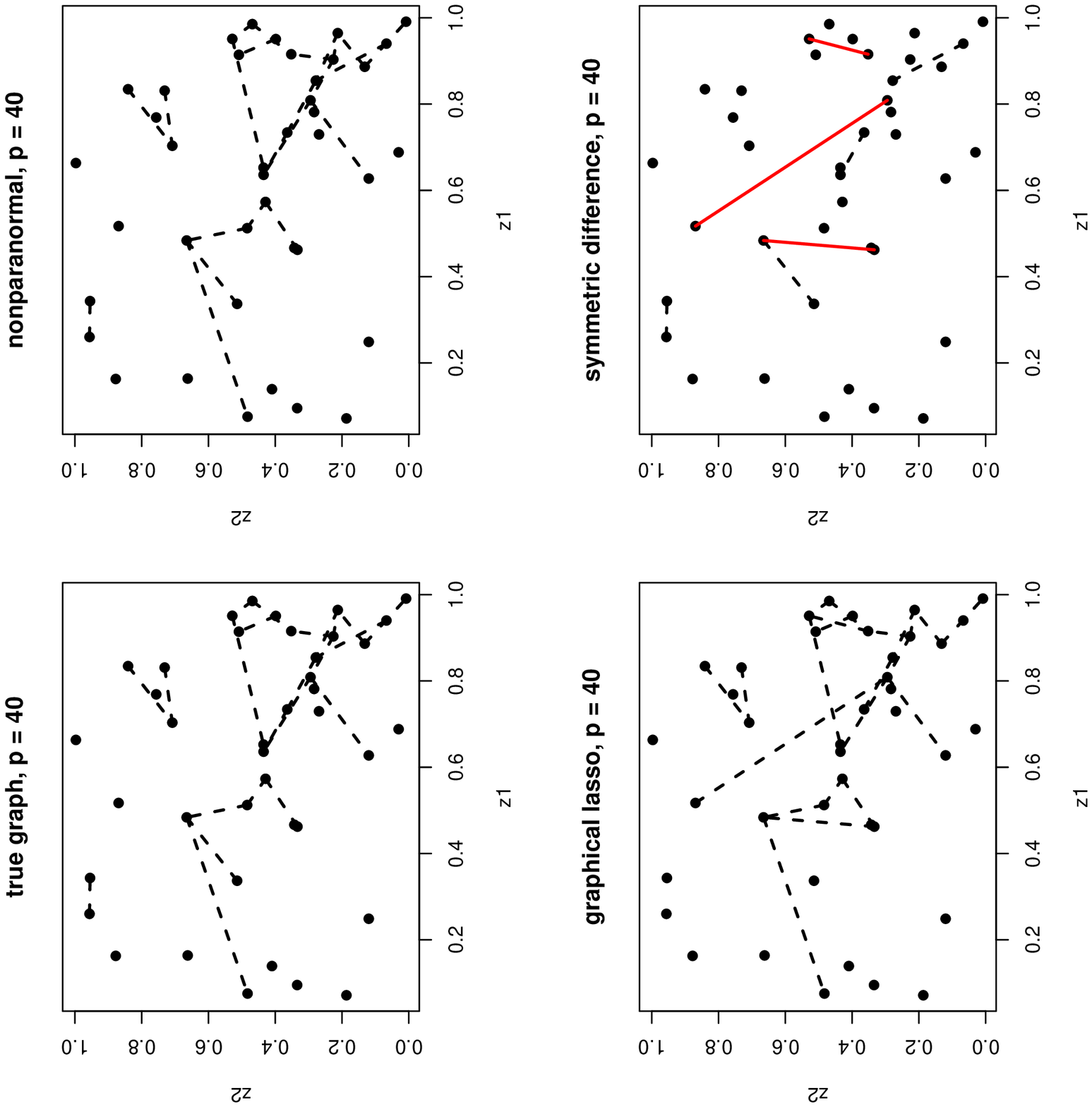}
\vspace{-0.0in}
\\
\hskip-20pt
\includegraphics[width=3.5in,height=3.3in,angle=-90]{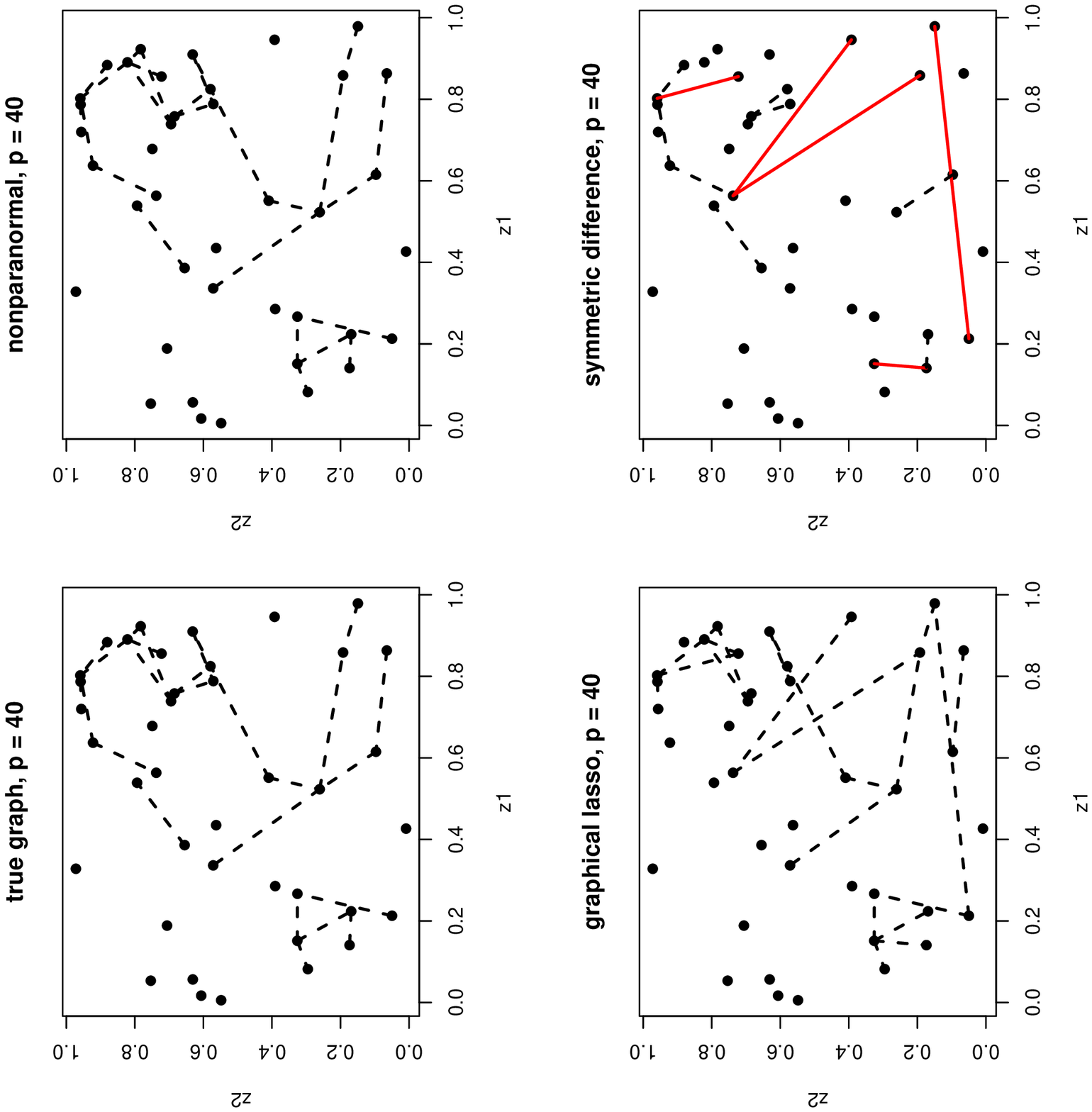} &\vspace{-0.0in}
\hspace{-0.15in}
\includegraphics[width=3.5in,height=3.3in,angle=-90]{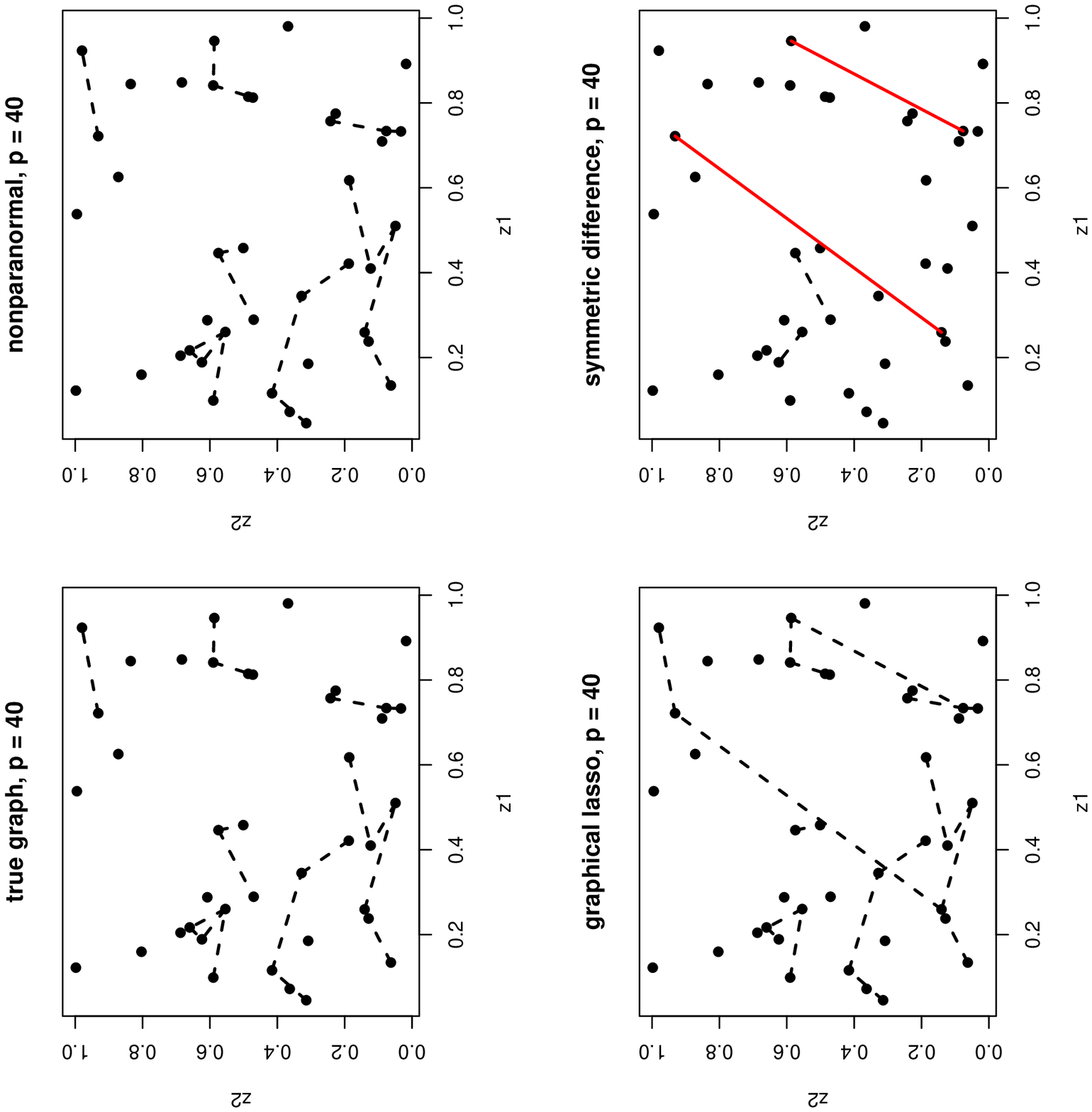}
\vspace{-0.0in}
\end{tabular}
\end{center}
\caption{\small Typical runs for the two methods for $n=1000$ using 
the cdf and power transformations. The dashed (black) lines in the
symmetric difference plots indicate edges found by the glasso but
not the nonparanormal, and vice-versa for the solid (red) lines.}
\label{fig.CDFtypicalRun}
\end{figure}

\subsection{Gene microarray data}

In this study, we consider a dataset based on
Affymetrix GeneChip microarrays 
for the plant \textit{Arabidopsis thaliana},
\citep{wille:04}.  The sample size is $n=118$.  
The expression levels for each chip are pre-processed by log-transformation
and standardization.   A subset of 40 genes from the isoprenoid
pathway are chosen, and we study the associations among them using
both the paranormal and nonparanormal models. Even though these data
are generally treated as multivariate Gaussian in the
previous analysis \citep{wille:04}, our study shows that the results
of the nonparanormal and the glasso are very different over a 
wide range of regularization parameters. This suggests the
nonparanormal could support different scientific conclusions.

\subsubsection{Comparison of the regularization paths}

We first compare the regularization paths of the two methods, in Figure~\ref{fig.GeneFullPath}.
To generate the paths, we select  50 regularization
parameters on an evenly spaced grid in the interval $[0.16, 1.2]$.
Although the paths for the two methods look similar, there are
some subtle differences.  In particular, variables become nonzero in a different order,
especially when the regularization parameter is in the range $\lambda
\in [0.2, 0.3]$.  As shown below, these subtle differences in the paths lead to
different model selection behaviors.

\begin{figure}[htp]
\vspace{-0.1in}
\begin{center}
\begin{tabular}{cc}
\hskip-10pt
\includegraphics[width=.5\textwidth,angle=-90]{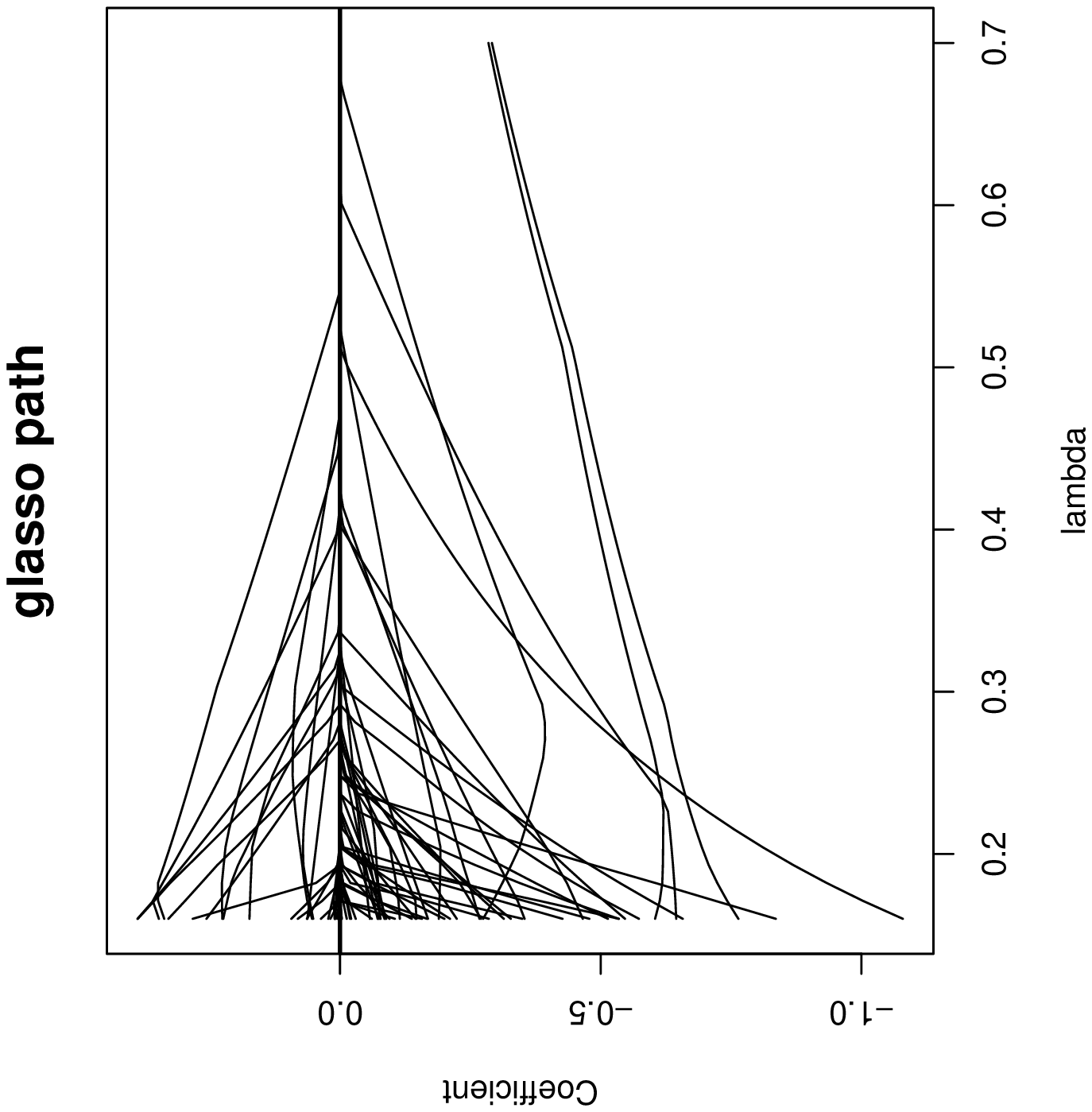}
&
\hskip-10pt
\includegraphics[width=.5\textwidth,angle=-90]{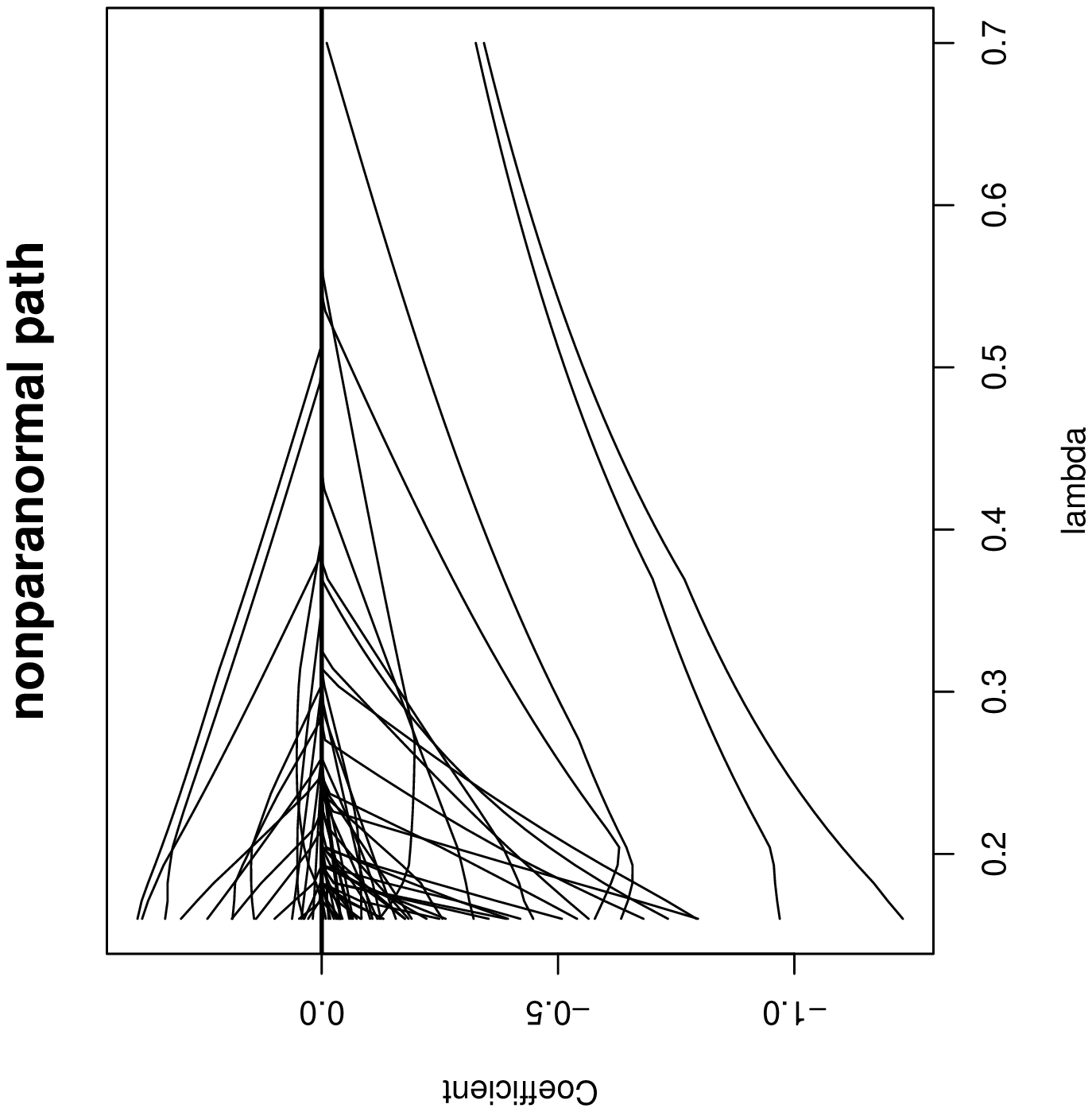}
\\[-30pt]
\end{tabular}
 \end{center}
\caption{\small The regularization paths of both methods on the
microarray dataset.}\label{fig.GeneFullPath}
\end{figure}

\subsubsection{Comparison of the selected graphs}

Figure \ref{fig.GeneSelectedGraph1} compares the estimated graphs for
the two methods at several values of the regularization parameter
$\lambda$ in the range $[0.16, 0.37]$.  For each $\lambda$, we show
the estimated graph from the nonparanormal in the first column. In the
second column we show the graph obtained by scanning the full
regularization path of the glasso fit and finding the graph having the
smallest symmetric difference with the nonparanormal graph.  The
symmetric difference graph is shown in in the third column.  The
closest glasso fit is different, with edges selected by the glasso not
selected by the nonparanormal, and vice-versa.  Several estimated
transformations are plotted in Figure \ref{fig.Genecomponent1}, which are
are nonlinear.  Interestingly, several of the differences
between the fitted graphs are related to these variables.

\begin{figure}[htp]
\vspace{-0.1in}
\begin{center}
\begin{tabular}{ccc}
\\[-50pt]
\hspace{-30pt}
\includegraphics[width=.37\textwidth,angle=-90]{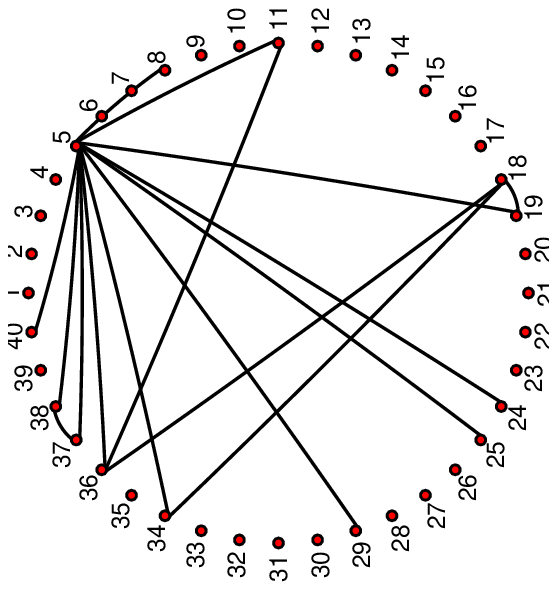} &
\hspace{-0.58in}
\includegraphics[width=.37\textwidth,angle=-90]{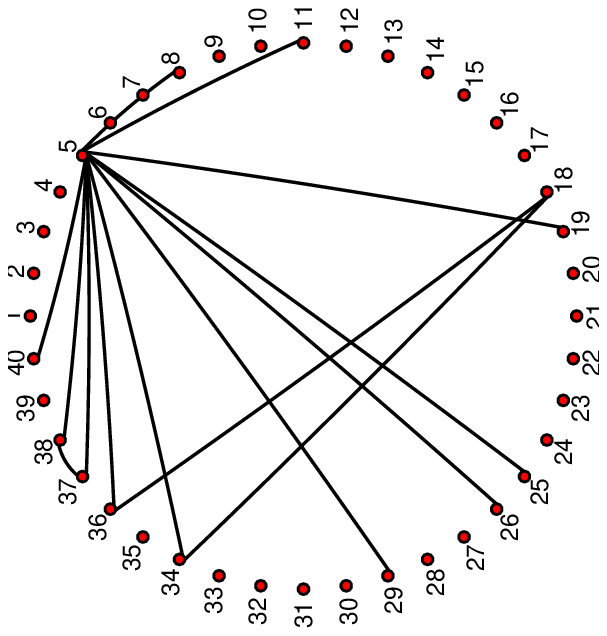} &
\hspace{-0.58in}
\includegraphics[width=.37\textwidth,angle=-90]{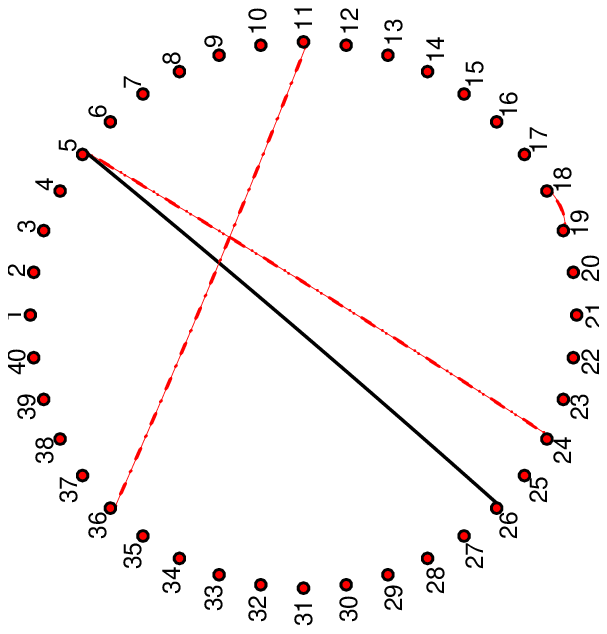} 
\\[-70pt]
\hspace{-30pt}
\includegraphics[width=.37\textwidth,angle=-90]{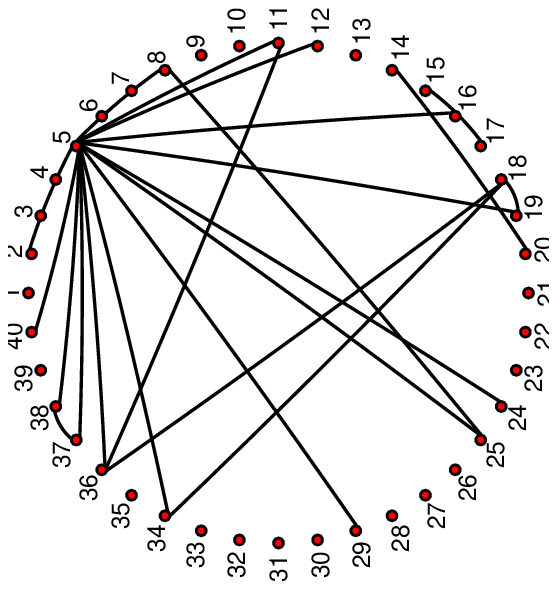} &
\hspace{-0.58in}
\includegraphics[width=.37\textwidth,angle=-90]{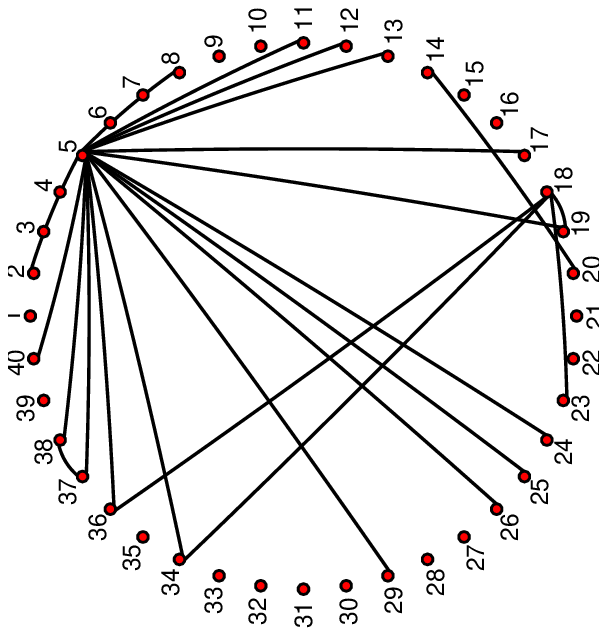} &
\hspace{-0.58in}
\includegraphics[width=.37\textwidth,angle=-90]{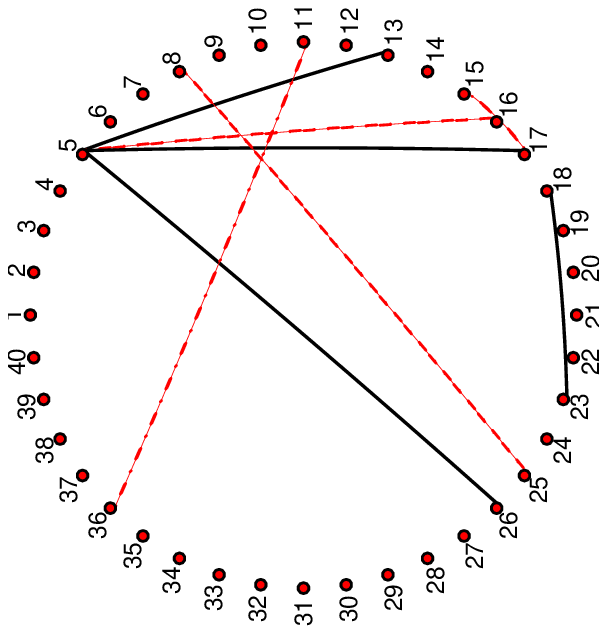} 
\\[-70pt]
\hspace{-30pt}
\includegraphics[width=.37\textwidth,angle=-90]{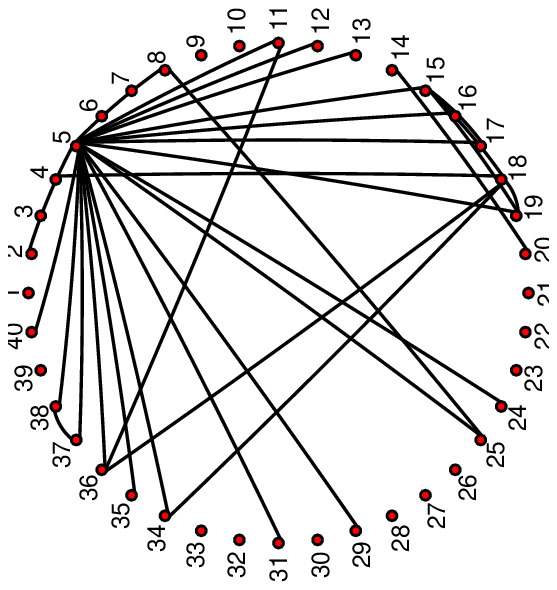} &
\hspace{-0.58in}
\includegraphics[width=.37\textwidth,angle=-90]{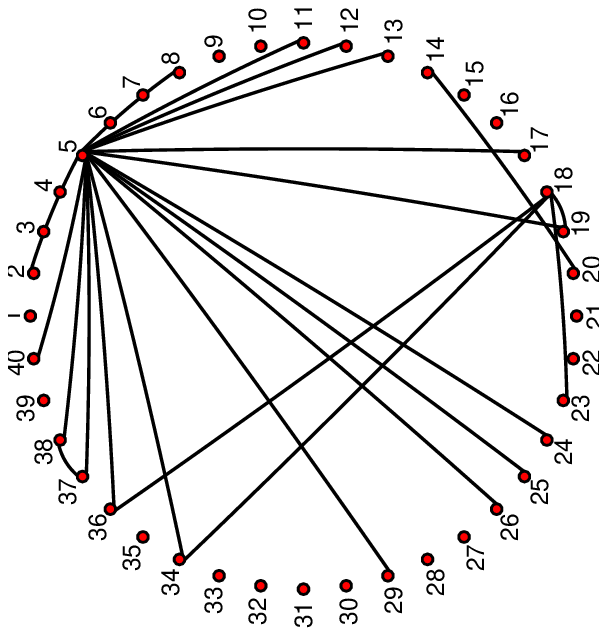} &
\hspace{-0.58in}
\includegraphics[width=.37\textwidth,angle=-90]{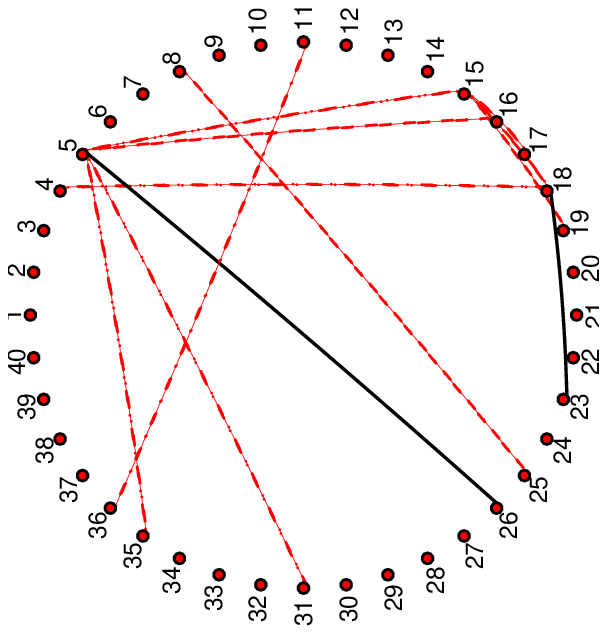} 
\\[-30pt]
\end{tabular}
\end{center}
\caption{\small The nonparanormal estimated graph for three  values of $\lambda =0.2448, 0.2661,  0.30857$ (left column), the glasso
  estimated graph (middle) and the symmetric difference graph (right).}
\label{fig.GeneSelectedGraph1}
\begin{center}
\begin{tabular}{cccc}
\\[-30pt]
\hskip-13pt
\includegraphics[width=.26\textwidth,angle=-90]{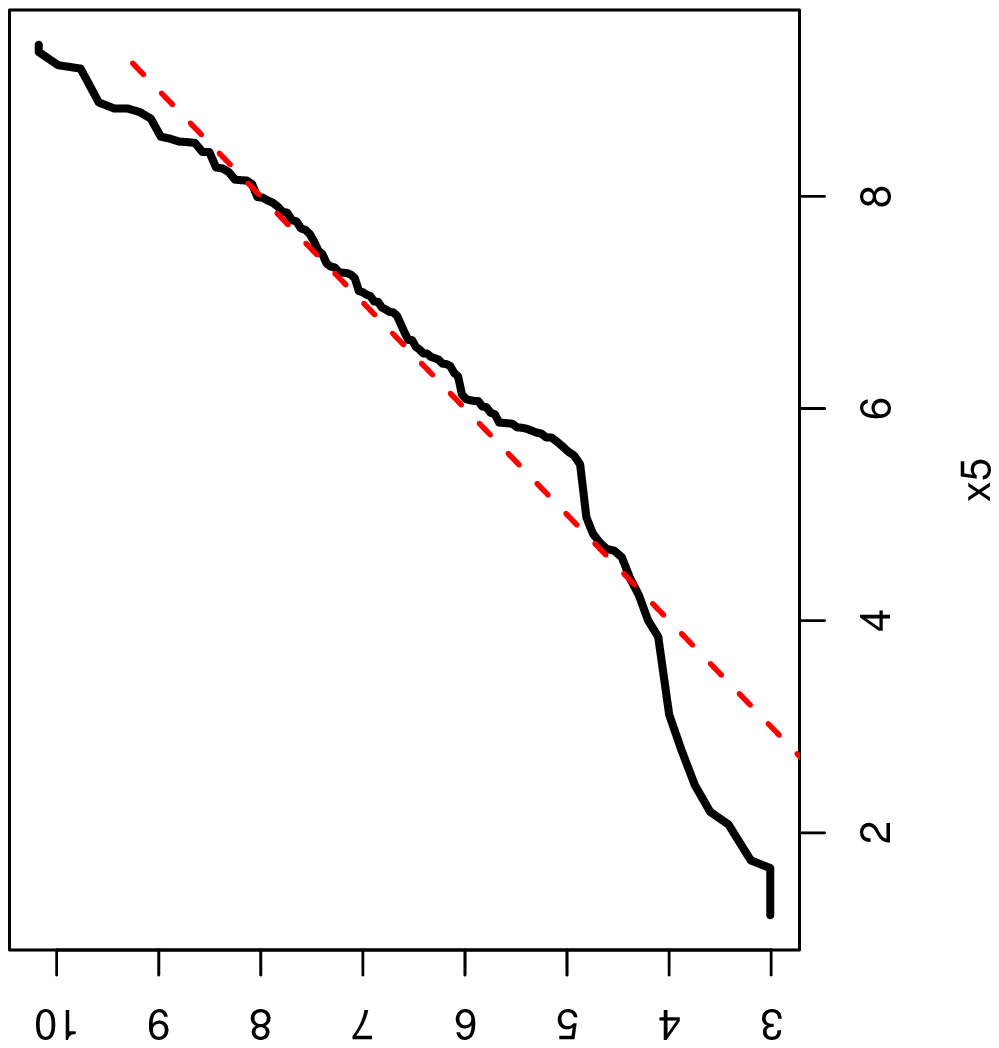} &
\hskip-13pt
\includegraphics[width=.26\textwidth,angle=-90]{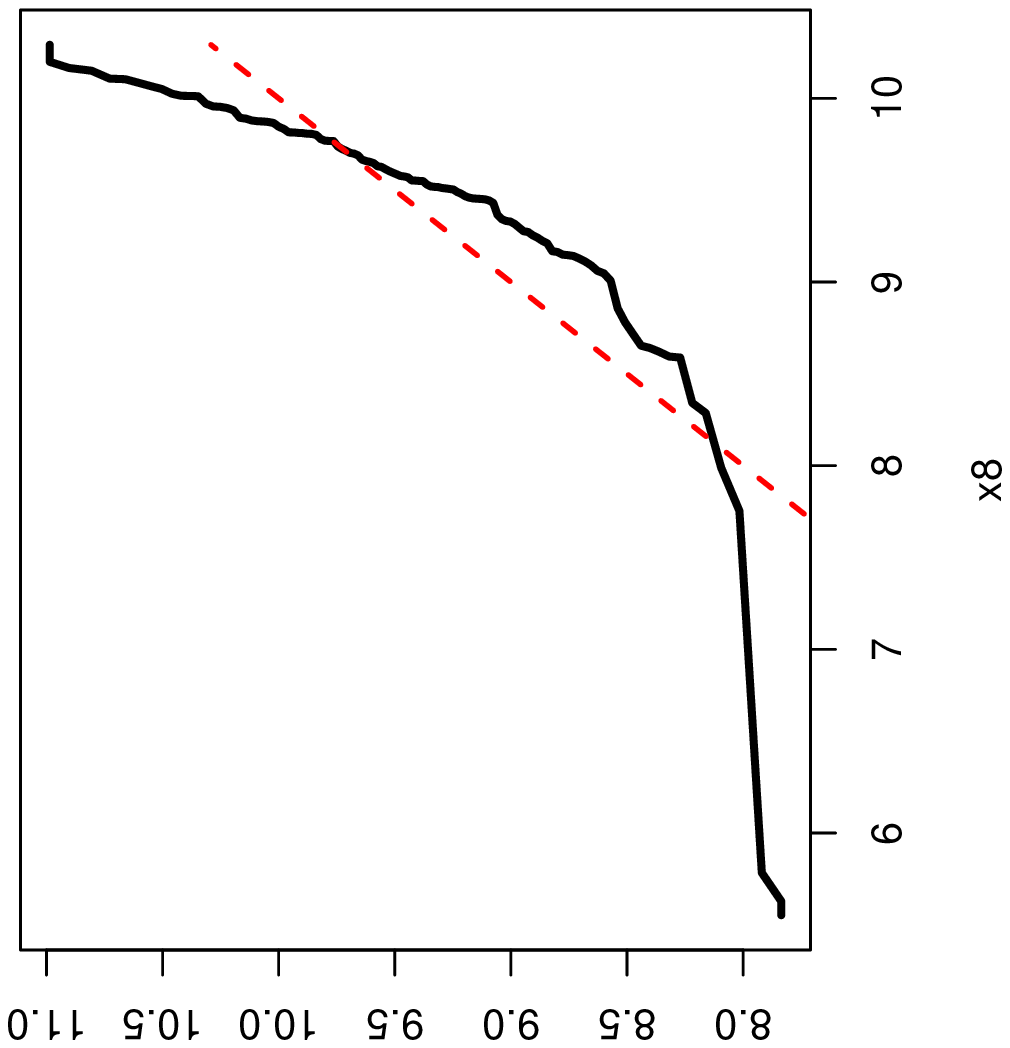} &
\hskip-13pt
\includegraphics[width=.26\textwidth,angle=-90]{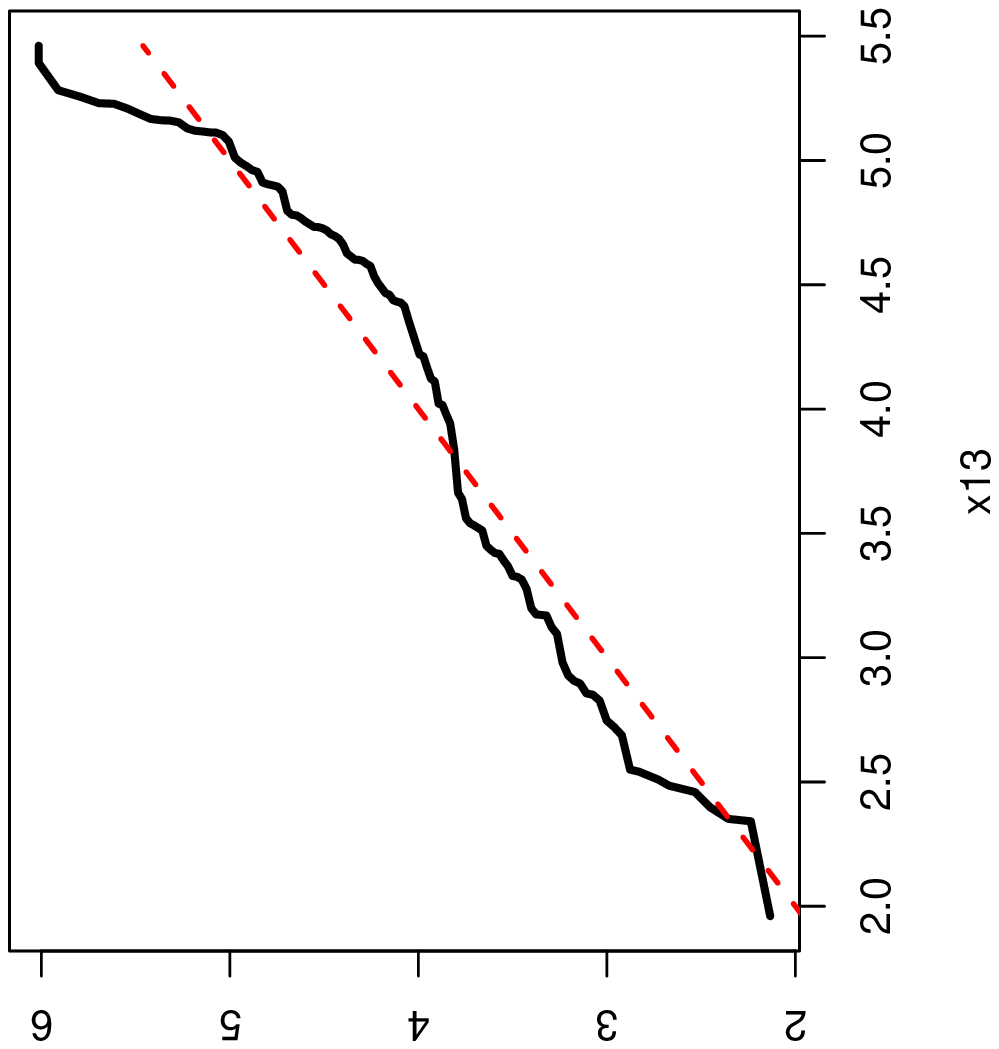} &
\hskip-13pt
\includegraphics[width=.26\textwidth,angle=-90]{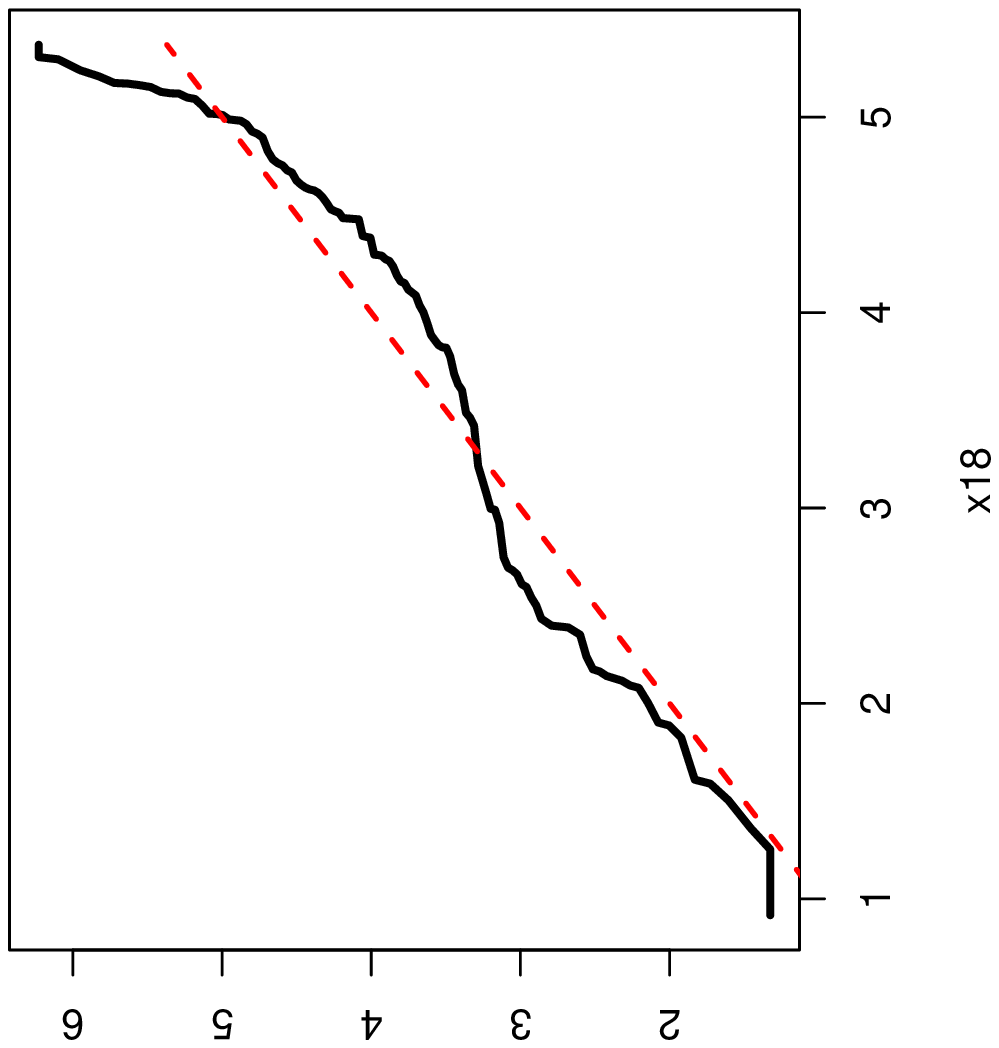}
\\[-10pt]
\end{tabular}
\end{center}
\caption{\small Estimated transformations for the microarray
dataset, indicating non-Gaussian marginals. The corresponding genes are among the
nodes appearing in the symmetric difference graphs of Figure~\ref{fig.GeneSelectedGraph1}.}
\label{fig.Genecomponent1}
\end{figure}

\newpage
\section{Proofs}
\label{sec:proofs}

We assume, without loss of generality from Lemma~\ref{lemma:h}, that
$\mu_{j}=0$  and $\sigma_{j}=1$ for all $j=1,\ldots, p$. Thus, define
$\tilde{f}_j(x) \equiv \Phi^{-1}(\tilde{F}_j(x))$ and ${f}_j(x) \equiv
\Phi^{-1}({F}_j(x))$, and let $g_j\equiv f_j^{-1}$.

\subsection{Proof of Theorem~\ref{thm.keylemma}}


We start with some useful lemmas;
the first is from \cite{Abramovich:06}.
\begin{lemma}\!(Gaussian Distribution function vs.~Quantile function) 
\label{lemma:quantile}
Let $\Phi$ and
$\phi$ denote the distribution and density functions of a
standard Gaussian random variable. Then
\begin{eqnarray}\label{eq.cdfIneq}
\frac{\phi(t)}{2t}\leq 1 - \Phi(t) \leq
\frac{\phi(t)}{t}~~\mathrm{if}~t\geq 1
\end{eqnarray}
and
\begin{eqnarray}\label{eq.quantileDeriv}
(\Phi^{-1})'(\eta) =\frac{1}{\phi\left( \Phi^{-1}(\eta) \right)}.
\end{eqnarray}
Also, for $\eta \geq 0.99$, we have
\begin{eqnarray}\label{eq.evaluatePhiInv}
\Phi^{-1}(\eta) = \ds \sqrt{2\log\left(\frac{1}{1-\eta} \right)} -
r(\eta)
\end{eqnarray}
where $r(\eta) \in [0,1.5]$.
\end{lemma}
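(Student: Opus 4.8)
The plan is to treat the three assertions separately, since the derivative formula \eqref{eq.quantileDeriv} is immediate and the real content lies in the Mills-ratio tail bound \eqref{eq.cdfIneq} and its ``inversion'' \eqref{eq.evaluatePhiInv}. For \eqref{eq.quantileDeriv} I would simply differentiate the identity $\Phi(\Phi^{-1}(\eta)) = \eta$, obtaining $\phi(\Phi^{-1}(\eta))\,(\Phi^{-1})'(\eta) = 1$; since $\phi>0$ this is the claim. Throughout the remaining steps I will use the elementary identity $\phi'(s) = -s\,\phi(s)$, which lets me evaluate tail integrals by parts.

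For the upper bound in \eqref{eq.cdfIneq} I would write $1-\Phi(t) = \int_t^\infty \phi(s)\,ds$ and use $1\leq s/t$ on the range $s\geq t$ to get $1-\Phi(t) \leq \tfrac1t\int_t^\infty s\,\phi(s)\,ds = \tfrac1t\int_t^\infty -\phi'(s)\,ds = \phi(t)/t$. For the lower bound I would first establish the sharper inequality $1-\Phi(t) \geq \frac{t}{1+t^2}\phi(t)$ for $t>0$ by a monotonicity argument: set $R(t) \equiv (1+t^2)\bigl(1-\Phi(t)\bigr) - t\,\phi(t)$, compute $R'(t) = 2t\bigl(1-\Phi(t)\bigr) - 2\phi(t) = 2t\bigl(1-\Phi(t) - \phi(t)/t\bigr)\le 0$ using the upper bound just proved, and note $R(t)\to 0$ as $t\to\infty$, so that $R\geq 0$ everywhere. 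Then for $t\geq 1$ one has $\frac{t}{1+t^2}\geq \frac{1}{2t}$ (equivalently $t^2\geq 1$), which yields the stated lower bound $\phi(t)/(2t)$.

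For \eqref{eq.evaluatePhiInv} I would set $t = \Phi^{-1}(\eta)$, so that $1-\eta = 1-\Phi(t)$ and, because $\eta\geq 0.99$, $t \geq \Phi^{-1}(0.99) \approx 2.33 \geq 1$, placing us in the regime where \eqref{eq.cdfIneq} applies. Taking logarithms of $\frac{\phi(t)}{2t}\leq 1-\eta \leq \frac{\phi(t)}{t}$ and using $\log\phi(t) = -t^2/2 - \tfrac12\log(2\pi)$ gives, on one side, $\log\frac{1}{1-\eta} \geq t^2/2$, hence $u \equiv \sqrt{2\log\frac{1}{1-\eta}} \geq t$ and $r(\eta) = u-t\geq 0$; and on the other side $u^2 \leq t^2 + 2\log t + C$ with $C = \log(2\pi)+2\log 2$. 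Writing $u^2 - t^2 = (u-t)(u+t) \geq 2t\,(u-t)$ (valid since $u\geq t$) converts this into $u-t \leq \frac{\log t}{t} + \frac{C}{2t}$, and I would finish by verifying the numerical inequality $\frac{\log t}{t} + \frac{C}{2t}\leq 1.5$ for all $t\geq \Phi^{-1}(0.99)$.

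I expect this last numerical step to be the main obstacle, not conceptually but because the constant $1.5$ and the threshold $0.99$ are tuned to one another: the quantity $\frac{\log t}{t}+\frac{C}{2t}$ sits just below $1.5$ precisely when $t$ is near $\Phi^{-1}(0.99)$, so one must check that the two-sided tail bound does not lose too much under the logarithm and that the threshold $0.99$ is strong enough to confine $t$ to the region where the bound holds. The monotonicity argument for the sharper Mills bound is the other delicate point, since a naive single integration by parts is too weak to reach $\phi(t)/(2t)$ at $t=1$.
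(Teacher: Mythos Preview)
Your proof is correct. The paper itself does not prove this lemma; it simply attributes it to \cite{Abramovich:06} and states it without argument, so your self-contained treatment is strictly more than what the paper provides.

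One remark on your anticipated ``main obstacle'': the numerical check is in fact comfortable, not tight. With $C=\log(8\pi)\approx 3.22$ the function $h(t)=\frac{\log t}{t}+\frac{C}{2t}$ is strictly decreasing for $t>e^{1-C/2}\approx 0.54$, so on the relevant range $t\ge \Phi^{-1}(0.99)\approx 2.326$ its maximum is $h(2.326)\approx 0.363+0.693\approx 1.06$, well below $1.5$. The constants $0.99$ and $1.5$ in the statement are therefore not delicately matched; there is a wide margin, and you need not worry about losing too much when passing through the logarithm.
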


\begin{lemma}\!(Distribution function of the transformed random
  variable)\label{lemma:phiinv}
For any $\alpha \in (-\infty, \infty)$
\begin{eqnarray}
\Phi^{-1}\left(F_j\left(g_j (\alpha\sqrt{\log n})\right)\right) =
\alpha\sqrt{\log n}.  \nonumber
\end{eqnarray}
\end{lemma}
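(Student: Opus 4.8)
The plan is to observe that the composition $\Phi^{-1}\circ F_j$ is nothing other than the identification-normalized transformation $f_j$ itself, after which the claim reduces to the triviality that $g_j$ inverts $f_j$. Indeed, at the opening of Section~\ref{sec:proofs} one sets $\mu_j=0$, $\sigma_j=1$ and defines ${f}_j(x)\equiv\Phi^{-1}(F_j(x))$ and $g_j\equiv f_j^{-1}$, so the identity $\Phi^{-1}\circ F_j = f_j$ is available by definition.

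First I would reconcile this definition with the marginal computation of Section~\ref{sec:npn}: writing $Z_j=f_j(X_j)$ with $Z_j\sim N(\mu_j,\sigma_j^2)$ gives $F_j(x)=\Phi\big((f_j(x)-\mu_j)/\sigma_j\big)$, and hence $f_j(x)=\mu_j+\sigma_j\,\Phi^{-1}(F_j(x))$. Under the normalization $\mu_j=0$, $\sigma_j=1$ this collapses exactly to the stated definition, so that for every real $x$,
\[
\Phi^{-1}\big(F_j(x)\big)=f_j(x).
\]

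Next I would invoke that each $f_j$ is assumed monotone and maps onto $\reals$, so it is a bijection of its domain onto $\reals$ and $g_j=f_j^{-1}$ is well-defined on all of $\reals$ with $f_j(g_j(t))=t$ for every $t\in\reals$. Substituting $x=g_j(\alpha\sqrt{\log n})$ into the displayed identity and applying this inverse relation yields
\[
\Phi^{-1}\Big(F_j\big(g_j(\alpha\sqrt{\log n})\big)\Big)=f_j\big(g_j(\alpha\sqrt{\log n})\big)=\alpha\sqrt{\log n},
\]
which is the assertion.

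There is essentially no analytic obstacle here; the lemma is a bookkeeping identity whose only nontrivial ingredient is matching $\Phi^{-1}\circ F_j$ with $f_j$ under the chosen identifiability constraints. The one point worth checking is that $\alpha\sqrt{\log n}$ lies in the range of $f_j$, so that $g_j$ is defined at that argument; this is automatic because the nonparanormal assumption forces $f_j$ to be a monotone surjection onto $\reals$. I note for context that this identity is what later lets the analysis evaluate the Winsorized estimator at the truncation threshold, where $\alpha$ is taken of order one so that $\alpha\sqrt{\log n}$ can be matched against the quantile bound $\sqrt{2\log n}$ from Lemma~\ref{lemma:quantile}.
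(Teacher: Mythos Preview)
Your proof is correct and essentially matches the paper's approach. The paper derives the identity $F_j(t)=\Phi(g_j^{-1}(t))$ via the direct probability computation $\mathbb{P}(X_j\le t)=\mathbb{P}(Z_j\le g_j^{-1}(t))$, whereas you invoke the equivalent definition $f_j=\Phi^{-1}\circ F_j$ with $g_j=f_j^{-1}$ from the opening of Section~\ref{sec:proofs}; these are the same identity read in two directions, and both reduce the lemma to $f_j\circ g_j=\mathrm{id}$.
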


\begin{proof}
The statement follows from
\begin{eqnarray}
F_j(t) = \mathbb{P}(X_j \leq t) = \mathbb{P}(g_j(Z_j)\leq t )
=\mathbb{P}(Z_j\leq g^{-1}_j(t) ) =\Phi\left(g_j^{-1}(t) \right). \label{eq.cdfFunction}
\end{eqnarray}
which holds for any $t$.
\end{proof}

\begin{lemma}\!(Gaussian maximal inequality)
\label{lemma.GaussianMaxima}
Let $W_{1},\ldots, W_{n}$ be independently and identically distributed
standard Gaussian random variables.
Then for any $\alpha>0$
\begin{eqnarray}
\mathbb{P}\left( \max_{1\leq i\leq n} W_{i}>\sqrt{\alpha\log n}\right) \leq \frac{1}{n^{\alpha/2-1}}. \nonumber
\end{eqnarray}
\end{lemma}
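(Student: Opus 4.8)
The plan is to control the maximum by a union bound and then apply a sharp Gaussian tail estimate. First I would write
\[
\mathbb{P}\left( \max_{1\leq i\leq n} W_{i}>\sqrt{\alpha\log n}\right) \leq \sum_{i=1}^n \mathbb{P}\left( W_{i}>\sqrt{\alpha\log n}\right) = n\left(1 - \Phi\left(\sqrt{\alpha\log n}\right)\right),
\]
using that the $W_i$ are identically distributed standard Gaussians. This reduces the problem to bounding a single upper tail probability.

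For the tail, I would invoke the upper bound $1 - \Phi(t) \leq \phi(t)/t$ from Lemma~\ref{lemma:quantile}, valid for $t \geq 1$. Taking $t = \sqrt{\alpha\log n}$ and using $\phi(t) = (2\pi)^{-1/2} e^{-t^2/2}$, the exponential factor becomes $e^{-(\alpha\log n)/2} = n^{-\alpha/2}$, so that
\[
1 - \Phi\left(\sqrt{\alpha\log n}\right) \leq \frac{n^{-\alpha/2}}{\sqrt{2\pi\alpha\log n}}.
\]
Multiplying by $n$ gives the claimed bound $n^{1-\alpha/2} = 1/n^{\alpha/2-1}$, since the remaining factor $1/\sqrt{2\pi\alpha\log n}$ is at most one.

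The only point requiring care is the regime of validity: the tail estimate of Lemma~\ref{lemma:quantile} holds only for $t\geq 1$, i.e.\ once $\alpha\log n \geq 1$, and the factor $1/\sqrt{2\pi\alpha\log n}$ is at most one only once $2\pi\alpha\log n\geq 1$. Both conditions hold for all sufficiently large $n$ (equivalently $n \geq e^{1/\alpha}$), which is precisely the asymptotic regime in which the lemma is applied elsewhere in the proof of Theorem~\ref{thm.keylemma}. I would therefore simply remark that the stated inequality holds for large enough $n$. This verification of the valid range is the main (mild) obstacle; everything else is a routine union bound followed by a substitution into the Gaussian density.
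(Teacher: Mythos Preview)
Your proposal is correct and matches the paper's proof essentially line for line: a union bound followed by Mill's inequality $1-\Phi(t)\leq \phi(t)/t$ applied at $t=\sqrt{\alpha\log n}$, then dropping the factor $1/\sqrt{2\pi\alpha\log n}$. Your remark about the range of validity ($\alpha\log n\geq 1$) is a point the paper leaves implicit.
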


\begin{proof} 
Using Mill's inequality, we have
\begin{eqnarray}
\mathbb{P}\left( \max_{1\leq i\leq n} W_{i} > \sqrt{\alpha\log n}\right) \leq \sum_{i=1}^{n} \mathbb{P}\left(W_{i} >\sqrt{\alpha\log n}\right) \leq n\frac{\phi(\sqrt{\alpha\log n})}{\sqrt{\alpha \log n }} = \frac{1}{n^{\alpha/2-1}\sqrt{2\pi\alpha\log n}}, \nonumber
\end{eqnarray}
from which the result follows.
\end{proof}

\begin{lemma}\label{lemma.truncationHoeffding}
For any $\alpha >0$ that satisfies $1 -\delta_{n}-\Phi\left(\sqrt{\alpha \log n} \right) > 0$ for all $n$, we have
\begin{eqnarray}\label{eq.tailbound}
\mathbb{P}\left[\hat{F}_{j}  \left( g_{j}\left(\sqrt{\alpha \log n} \right) \right)  > 1 -\delta_{n}\right] \leq \exp\left\{-2n\left( 1 -\delta_{n}-\Phi\left(\sqrt{\alpha\log n} \right)\right)^{2} \right\}.
\end{eqnarray}
and
\begin{eqnarray}\label{eq.lowertailbound}
\mathbb{P}\left[\hat{F}_{j}  \left( g_{j}\left(-\sqrt{\alpha \log n}
    \right) \right)  <\delta_{n}\right] 
\leq \exp\left\{-2n\left( 1 -\delta_{n}-\Phi\left(\sqrt{\alpha\log n} \right)\right)^{2} \right\}.
\end{eqnarray}
\end{lemma}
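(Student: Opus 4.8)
The plan is to recognize that for any fixed $t$ the quantity $\hat{F}_j(t) = \frac{1}{n}\sum_{i=1}^n \mathbf{1}_{\{X_j^{(i)} \le t\}}$ is an average of $n$ i.i.d.\ Bernoulli indicators, each taking values in $[0,1]$ with common mean $F_j(t)$. Hoeffding's inequality then applies directly, and the only real work is to identify this mean at the two special evaluation points $t = g_j(\pm\sqrt{\alpha\log n})$.

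First I would compute the mean for the upper tail. Using the identity $F_j(t) = \Phi(g_j^{-1}(t))$ established in \eqref{eq.cdfFunction} within the proof of Lemma~\ref{lemma:phiinv}, and setting $t = g_j(\sqrt{\alpha\log n})$, I get $\mathbb{E}[\hat{F}_j(t)] = F_j(g_j(\sqrt{\alpha\log n})) = \Phi(\sqrt{\alpha\log n})$ since $g_j^{-1}\circ g_j$ is the identity. Consequently the event $\{\hat{F}_j(g_j(\sqrt{\alpha\log n})) > 1 - \delta_n\}$ is exactly $\{\hat{F}_j(t) - \mathbb{E}[\hat{F}_j(t)] > \epsilon\}$ with $\epsilon \equiv 1 - \delta_n - \Phi(\sqrt{\alpha\log n})$, which is strictly positive by the hypothesis on $\alpha$. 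The upper-tail Hoeffding bound $\mathbb{P}[\hat{F}_j(t) - \mathbb{E}[\hat{F}_j(t)] > \epsilon] \le \exp(-2n\epsilon^2)$ then yields \eqref{eq.tailbound} immediately.

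For the lower tail I would argue analogously, the extra ingredient being the symmetry of the standard Gaussian cdf. Evaluating at $t = g_j(-\sqrt{\alpha\log n})$ gives $\mathbb{E}[\hat{F}_j(t)] = \Phi(-\sqrt{\alpha\log n}) = 1 - \Phi(\sqrt{\alpha\log n})$, so the event $\{\hat{F}_j(t) < \delta_n\}$ becomes $\{\mathbb{E}[\hat{F}_j(t)] - \hat{F}_j(t) > \epsilon\}$ with the same $\epsilon = 1 - \delta_n - \Phi(\sqrt{\alpha\log n})$. The lower-tail form of Hoeffding's inequality then produces the identical bound $\exp(-2n\epsilon^2)$, establishing \eqref{eq.lowertailbound}.

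There is no genuine obstacle here; the lemma is a clean application of Hoeffding's inequality once the correct centering is pinned down. The one point requiring care is the bookkeeping that makes both deviations equal to the same $\epsilon = 1 - \delta_n - \Phi(\sqrt{\alpha\log n})$: for the upper tail this comes from $1 - \delta_n$ being the threshold, while for the lower tail it comes from combining the threshold $\delta_n$ with the Gaussian-symmetry identity $\Phi(-u) = 1 - \Phi(u)$. The positivity assumption on $\alpha$ is precisely what guarantees $\epsilon > 0$, so that the two bounds are nonvacuous.
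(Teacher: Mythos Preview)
Your proposal is correct and matches the paper's own proof essentially line for line: both center $\hat{F}_j$ at $F_j(g_j(\pm\sqrt{\alpha\log n}))$, invoke \eqref{eq.cdfFunction} to identify this as $\Phi(\pm\sqrt{\alpha\log n})$, and then apply Hoeffding's inequality. Your treatment of the lower tail via the symmetry $\Phi(-u)=1-\Phi(u)$ is slightly more explicit than the paper's, which simply states that ``the proof of equation \eqref{eq.lowertailbound} uses the same argument.''
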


\begin{proof}
Using Hoeffding's inequality,
\begin{eqnarray}
\lefteqn{
\P\left[\hat{F}_{j}  \left( g_{j}\left(\sqrt{\alpha \log n} \right)
  \right) > 1 -\delta_{n}\right]} \nonumber && \\
\nonumber
& = & \P\left[\hat{F}_{j} \left( g_{j}\left(\sqrt{\alpha \log
        n} \right) \right)  - {F}_{j}\left( g_{j}\left(\sqrt{\alpha
        \log n} \right) \right) 
> 1 -\delta_{n}-{F}_{j} \left( g_{j}\left(\sqrt{\alpha \log n}  \right) \right)\right]  \\
& \leq & \exp\left\{-2n\left( 1 -\delta_{n}-{F}_{j} \left( g_{j}\left(\sqrt{\alpha \log n} \right) \right)\right)^{2} \right\}. \nonumber
\end{eqnarray}
Equation \eqref{eq.tailbound} then follows from equation
\eqref{eq.cdfFunction}. The proof of equation
\eqref{eq.lowertailbound} uses the same argument.
\end{proof}

\def\M{{\mathcal M}}
\def\E{{\mathcal E}}

Now let $M > 2$ be some constant and set $\ds \beta = \frac{1}{2}$.
We split the interval 
$$\left[g_j(-\sqrt{M\log n}), g_j(\sqrt{M\log n})\right]$$
into two parts, the middle
\begin{eqnarray}
\M_{n}\equiv  \left(g_j\left(-\sqrt{\beta \log n}\right),g_j\left(\sqrt{\beta \log n}\right)\right) \nonumber
\end{eqnarray}
and ends
\begin{eqnarray}
\E_n \equiv \left[g_j\left(-\sqrt{M\log n}\right), g_j\left(-\sqrt{\beta \log n}\right) \right] \nonumber
\cup \left[g_j\left(\sqrt{\beta \log n}\right),  g_j\left(\sqrt{M\log n}\right) \right].
\end{eqnarray}
The behaviors of the function estimates in these two regions is
different, and so we first establish bounds on the probability that a sample
can fall in the end region $\E_n$. 

\begin{lemma}\label{lemma.proportion}
Let $\ds A\equiv \sqrt{\frac{2}{\pi}}(\sqrt{M} - \sqrt{\beta})$.  Then
\begin{eqnarray}
\mathbb{P}\left( X_{1j} \in \E_n \right) \leq A \sqrt{\frac{\log n}{n^{\beta}}},~~\forall j \in \{1,\ldots, p\}. \nonumber
\end{eqnarray}
\end{lemma}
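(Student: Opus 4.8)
The plan is to pass from the event $\{X_{1j}\in\E_n\}$ to an equivalent event for the underlying Gaussian coordinate $Z_{1j}=f_j(X_{1j})$, which is standard normal under our normalization $\mu_j=0$, $\sigma_j=1$. Since $g_j=f_j^{-1}$ is monotone increasing, applying $g_j^{-1}$ to the endpoints of the two intervals defining $\E_n$ shows that
\begin{equation*}
X_{1j}\in\E_n \iff Z_{1j}\in\left[-\sqrt{M\log n},-\sqrt{\beta\log n}\right]\cup\left[\sqrt{\beta\log n},\sqrt{M\log n}\right],
\end{equation*}
which is precisely the content of the identity \eqref{eq.cdfFunction}. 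Hence the problem reduces to bounding the probability that a standard Gaussian falls into a symmetric pair of bands located in its tails.

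Next I would use the symmetry of the standard normal to collapse the two bands into one, writing
\begin{equation*}
\mathbb{P}\left(X_{1j}\in\E_n\right)=2\left[\Phi\left(\sqrt{M\log n}\right)-\Phi\left(\sqrt{\beta\log n}\right)\right]=2\int_{\sqrt{\beta\log n}}^{\sqrt{M\log n}}\phi(t)\,dt.
\end{equation*}
The key estimate is then to bound this integral by (length of interval) $\times$ (supremum of the integrand). Because $\phi$ is strictly decreasing on the positive half-line, its maximum over the band is attained at the \emph{inner} endpoint $\sqrt{\beta\log n}$, giving
\begin{equation*}
\int_{\sqrt{\beta\log n}}^{\sqrt{M\log n}}\phi(t)\,dt \leq \left(\sqrt{M}-\sqrt{\beta}\right)\sqrt{\log n}\,\cdot\,\phi\left(\sqrt{\beta\log n}\right).
\end{equation*}

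Finally I would evaluate $\phi\left(\sqrt{\beta\log n}\right)=\frac{1}{\sqrt{2\pi}}e^{-\beta\log n/2}=\frac{1}{\sqrt{2\pi}}\,n^{-\beta/2}$ and combine the factors; the constant $2\cdot\frac{1}{\sqrt{2\pi}}$ simplifies to $\sqrt{2/\pi}$, yielding $\mathbb{P}(X_{1j}\in\E_n)\leq\sqrt{\frac{2}{\pi}}\left(\sqrt{M}-\sqrt{\beta}\right)\sqrt{\frac{\log n}{n^{\beta}}}=A\sqrt{\frac{\log n}{n^{\beta}}}$, as claimed, and the bound is uniform in $j$ since nothing beyond monotonicity of $g_j$ and standard-normality of $Z_{1j}$ was used. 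There is no genuine obstacle here: the only points requiring care are the monotonicity reduction (which needs $g_j$ increasing so the endpoints map correctly and the two tail bands remain symmetric) and the observation that the envelope bound on the decreasing density must be taken at the inner endpoint $\sqrt{\beta\log n}$ rather than the outer one—this choice is exactly what produces the favorable $n^{-\beta/2}$ decay that the lemma needs.
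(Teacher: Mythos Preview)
Your proof is correct and essentially identical to the paper's: both reduce to $2\bigl[\Phi(\sqrt{M\log n})-\Phi(\sqrt{\beta\log n})\bigr]$ via \eqref{eq.cdfFunction} and then bound this by $2\phi(\sqrt{\beta\log n})(\sqrt{M}-\sqrt{\beta})\sqrt{\log n}$, the paper phrasing this step as the mean value theorem while you phrase it as (interval length)$\times$(supremum of the decreasing integrand). The subsequent evaluation of $\phi(\sqrt{\beta\log n})$ and the constant $A$ match exactly.
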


\begin{proof}
Using equation \eqref{eq.cdfFunction} and the mean value theorem, we have
 \begin{eqnarray}
\lefteqn{\mathbb{P}\left( X_{1j} \in \E_n \right)} \nonumber \\
&=&
\mathbb{P}\left(
X_{1j} \in\left[g_j(\sqrt{\beta \log n}), g_j(\sqrt{M\log n})\right] \right) + \mathbb{P}\left(
X_{1j} \in\left[g_j(-\sqrt{M\log n}), g_j(-\sqrt{\beta \log n})\right] \right)  \nonumber \\
& = & F_j\left(g_j(\sqrt{M\log n} ) \right) -
F_j\left(g_j(\sqrt{\beta \log
n})\right) + F_j\left(g_j(-\sqrt{\beta\log n} ) \right) -
F_j\left(g_j(-\sqrt{M \log
n})\right) \nonumber \\
& = & 2\left(\Phi(\sqrt{M\log n} ) - \Phi(\sqrt{\beta \log n}) \right) \nonumber \\
& \leq & 2\phi\left(\sqrt{\beta \log n}\right) \left(\sqrt{M\log n} -
\sqrt{\beta \log n} \right). \nonumber 
\end{eqnarray}
The result of the lemma follows directly.
\end{proof}

We next bound the error of the Winsorized estimate of a component
function over the end region.
\begin{lemma}\label{eq.lemma.tails}
For all $n$, we have
\begin{eqnarray} 
\sup_{t\in\E_n}\left|\Phi^{-1}(\tilde{F}_j(t)) - \Phi^{-1}\left( F_j(t)\right) \right| <
\sqrt{2(M+2)\log n},~~\forall j \in \{1,\ldots, p\}. \nonumber 
\end{eqnarray}
\end{lemma}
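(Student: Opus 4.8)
The plan is to bound the two quantile values $\Phi^{-1}(\tilde F_j(t))$ and $\Phi^{-1}(F_j(t))$ separately and uniformly over the end region $\E_n$, and then combine them by the triangle inequality, so that the whole statement reduces to an elementary inequality in $M$. First I would write
$$\left|\Phi^{-1}(\tilde F_j(t)) - \Phi^{-1}(F_j(t))\right| \le \left|\Phi^{-1}(\tilde F_j(t))\right| + \left|\Phi^{-1}(F_j(t))\right|$$
and control each term on the right for every $t \in \E_n$ and every $j$.

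For the second term I would invoke Lemma~\ref{lemma:phiinv} together with the monotonicity of $g_j$ (hence of $f_j = \Phi^{-1}\circ F_j$). Since each $t\in\E_n$ lies in $[g_j(-\sqrt{M\log n}),g_j(-\sqrt{\beta\log n})]\cup[g_j(\sqrt{\beta\log n}),g_j(\sqrt{M\log n})]$, applying $\Phi^{-1}(F_j(\cdot))$ and using the identity $\Phi^{-1}(F_j(g_j(\alpha\sqrt{\log n})))=\alpha\sqrt{\log n}$ at the endpoints shows that $\Phi^{-1}(F_j(t))$ ranges in $[-\sqrt{M\log n},-\sqrt{\beta\log n}]\cup[\sqrt{\beta\log n},\sqrt{M\log n}]$, so $|\Phi^{-1}(F_j(t))| \le \sqrt{M\log n}$.

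For the first term I would use that the Winsorized estimator $\tilde F_j$ takes values in $[\delta_n,1-\delta_n]$, so by symmetry of $\Phi^{-1}$ about $1/2$ we have $|\Phi^{-1}(\tilde F_j(t))| \le \Phi^{-1}(1-\delta_n)$. Applying the expansion \eqref{eq.evaluatePhiInv} of Lemma~\ref{lemma:quantile} with $\eta = 1-\delta_n$ gives $\Phi^{-1}(1-\delta_n) = \sqrt{2\log(1/\delta_n)} - r(\eta) \le \sqrt{2\log(1/\delta_n)}$ since $r(\eta)\ge 0$; substituting $\delta_n = 1/(4n^{1/4}\sqrt{\pi\log n})$ and observing that $\log(1/\delta_n)\le \log n$ for large $n$ yields $|\Phi^{-1}(\tilde F_j(t))| \le \sqrt{2\log n}$. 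This is exactly the uniform bound $\sup_x\Phi^{-1}(\tilde F_j(x)) \le \sqrt{2\log n}$ already recorded before Theorem~\ref{thm:persist}, so I would cite it directly.

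Combining, $|\Phi^{-1}(\tilde F_j(t)) - \Phi^{-1}(F_j(t))| \le (\sqrt{M}+\sqrt{2})\sqrt{\log n}$ uniformly in $j$ and $t\in\E_n$, and it remains only to verify $\sqrt{M}+\sqrt{2} < \sqrt{2(M+2)}$. Squaring, this reads $M+2+2\sqrt{2M} < 2M+4$, i.e. $2\sqrt{2M} < M+2$, which after a second squaring is $(M-2)^2 > 0$, strict precisely because $M>2$. I expect the main obstacle to be the middle step: obtaining the clean constant $\sqrt{2\log n}$ for $\Phi^{-1}(1-\delta_n)$ rests on the sharp near-boundary expansion of $\Phi^{-1}$ in Lemma~\ref{lemma:quantile} and on the specific calibration of $\delta_n$ (and forces the ``large $n$'' qualification, since \eqref{eq.evaluatePhiInv} needs $1-\delta_n\ge 0.99$); the target constant $\sqrt{2(M+2)}$ is then exactly the value that makes the final comparison hold with the slack $(M-2)^2$.
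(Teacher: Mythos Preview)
Your proposal is correct and follows essentially the same route as the paper: bound $|\Phi^{-1}(F_j(t))|\le\sqrt{M\log n}$ on $\E_n$ via Lemma~\ref{lemma:phiinv}, bound $|\Phi^{-1}(\tilde F_j(t))|<\sqrt{2\log n}$ using the Winsorization range together with \eqref{eq.evaluatePhiInv}, and conclude by the triangle inequality and $\sqrt{M}+\sqrt{2}\le\sqrt{2(M+2)}$. Your write-up is in fact a bit more careful than the paper's in two places: you supply the algebraic verification $(M-2)^2>0$ for the final constant comparison, and you flag that invoking \eqref{eq.evaluatePhiInv} requires $1-\delta_n\ge 0.99$, so the argument as written really needs $n$ large despite the lemma's ``for all $n$'' phrasing.
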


\begin{proof}
From Lemma \ref{lemma:phiinv} and the definition of $\E_{n}$, we have
$$\sup_{t\in\E_{n}}\left|\Phi^{-1}\left( F_j(t)\right)\right| \in \left[0, \sqrt{M\log n}\right].$$ 
Given the fact that $\delta_{n} = \ds \frac{1}{4n^{1/4}\sqrt{\pi\log n}}$, we have $
\tilde{F}_j(t) \in\ds  \left(\frac{1}{n}, 1-\frac{1}{n}\right)$. Therefore, from equation \eqref{eq.evaluatePhiInv},
\begin{eqnarray}
\sup_{t\in\E_{n}} \left|\Phi^{-1}\left(\tilde{F}_j(t)\right)\right| \in \left[0, \sqrt{2\log n}\right). \nonumber
\end{eqnarray}
The result follows
from the triangle inequality and $\sqrt{M}+\sqrt{2} \leq
\sqrt{2(M+2)}$. 
\end{proof}

Now for any $\epsilon > 0$, we have
\begin{eqnarray}
\lefteqn{\mathbb{P}\left( \max_{j,k} \left| S_n(\tilde{f})_{jk} - 
S_n(f)_{jk}\right| > \epsilon \right)}\nonumber \\
&=& \mathbb{P}\left( \max_{j,k}\left| \frac{1}{n}\sum_{i=1}^n
\left\{\tilde{f}_j(X_{ij})\tilde{f}_k(X_{ik}) - f_j(X_{ij})f_k(X_{ik}) -
\mu_n(\tilde{f}_j)\mu_n(\tilde{f}_k) + \mu_n(f_j)\mu_n(f_k)\right\} \right|>\epsilon
\right) \nonumber \\
&\leq & \mathbb{P}\left( \max_{j,k}\left| \frac{1}{n}\sum_{i=1}^n
\left(\tilde{f}_j(X_{ij})\tilde{f}_k(X_{ik}) - f_j(X_{ij})f_k(X_{ik})\right)
\right|>\frac{\epsilon}{2} \right) \nonumber \\
& &  +\; \mathbb{P}\left(
\max_{j,k}\left| \mu_n(\tilde{f}_j)\mu_n(\tilde{f}_k) - \mu_n(f_j)\mu_n(f_k)
\right|>\frac{\epsilon}{2}\right). \label{eq.decomposition1}  \nonumber
\end{eqnarray}
We only need to analyze the rate for the first term above,
since the second one is of higher order \citep{Cai:08}.
Let
\begin{eqnarray}
\Delta_i(j,k) \equiv \tilde{f}_j(X_{ij})\tilde{f}_k(X_{ik}) -
f_j(X_{ij}) f_k(X_{ik}) \nonumber
\end{eqnarray}
and
\begin{eqnarray}
\Theta_{t,s}(j,k) \equiv \tilde{f}_j(t)\tilde{f}_k(s) - f_j(t) f_k(s). \nonumber
\end{eqnarray}
We define the event $\mathcal{A}_{n}$ as
\begin{eqnarray}
\mathcal{A}_{n}\equiv \left\{ g_{j}\left(-\sqrt{M\log n}\right)\leq X_{1j}, \ldots, X_{nj} \leq g_{j}\left(\sqrt{M\log n}\right), j=1,\ldots, p\right\}. \nonumber
\end{eqnarray}
Then
\begin{eqnarray}
\mathbb{P}(\mathcal{A}_{n}^{c}) \leq p\mathbb{P}\left( \min_{i} f_{j}\left(X_{ij}\right) <-\sqrt{M\log n}\right) + p\mathbb{P}\left( \max_{i} f_{j}\left(X_{ij}\right) > \sqrt{M\log n}\right) \leq \frac{c_{1}n^{\xi}}{n^{M/2-1}}.\nonumber
\end{eqnarray}
with $c_{1}$ as a generic positive constant.
Therefore
\begin{eqnarray}
\mathbb{P}\left(\max_{j,k}\left|\frac{1}{n}\sum_{i=1}^n
\Delta_i(j,k) \right| >\epsilon \right) 
 \leq \mathbb{P}\left(\max_{j,k}\left|\frac{1}{n}\sum_{i=1}^n
\Delta_i(j,k) \right| >\epsilon, \mathcal{A}_{n}\right)  &+& \mathbb{P}(\mathcal{A}_{n}^{c}) \nonumber \\
 \leq \mathbb{P}\left(\max_{j,k}\left|\frac{1}{n}\sum_{i=1}^n
\Delta_i(j,k) \right| >\epsilon, \mathcal{A}_{n}\right)  &+&  \frac{c_{1}n^{\xi}}{n^{M/2-1}}. \nonumber
\end{eqnarray}
Thus, we only need to carry out our analysis on the event
$\mathcal{A}_{n}$.  On this event, we have the following decomposition:
\begin{eqnarray}
\lefteqn{\mathbb{P}\left(\max_{j,k}\left|\frac{1}{n}\sum_{i=1}^n
\Delta_i(j,k) \right| >\epsilon, \,\mathcal{A}_n \right)   } \nonumber \\
& \leq & \mathbb{P}\left(\max_{j,k} \frac{1}{n}\sum_{X_{ij} \in
\M_n, X_{ik}\in \M_n}\left| \Delta_i(j,k) \right|
>\frac{\epsilon}{4} \right) + \mathbb{P}\left(\max_{j,k} \frac{1}{n}\sum_{X_{ij} \in
\E_n, X_{ik}\in \E_n}\left| \Delta_i(j,k) \right|
>\frac{\epsilon}{4} \right) \nonumber \\
&  & +\; 2 \mathbb{P}\left(\max_{j,k} \frac{1}{n}\sum_{X_{ij} \in
\M_n, X_{ik}\in \E_n}\left| \Delta_i(j,k) \right|
>\frac{\epsilon}{4} \right).  \nonumber
\end{eqnarray}

We now analyze each of these terms separately.
\begin{lemma} \label{lemma.term1} 
On the event $\mathcal{A}_{n}$,  let $\beta=1/2$ and  $\ds \epsilon \geq C(M,\xi)\sqrt{\frac{\log p \log^2 n}{n^{1/2}}}$, then
\begin{eqnarray}
\mathbb{P}\left(\max_{j,k}
\frac{1}{n}\sum_{X_{ij} \in \E_n, X_{ik}\in
\E_n}\left| \Delta_i(j,k) \right|
>\frac{\epsilon}{4} \right) = o(1). \nonumber
\end{eqnarray}
\end{lemma}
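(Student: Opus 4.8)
The plan is to reduce the random sum to a binomial counting variable, bound its summands deterministically on the end region, and apply a multiplicative Chernoff bound that is calibrated (through the definition of $C(M,\xi)$) to beat the union bound over the $p^2$ index pairs. First I would record a deterministic bound on $|\Delta_i(j,k)|$ valid whenever both $X_{ij}\in\E_n$ and $X_{ik}\in\E_n$. By Lemma~\ref{lemma:phiinv} and the definition of $\E_n$ we have $|f_j(X_{ij})| = |\Phi^{-1}(F_j(X_{ij}))| \leq \sqrt{M\log n}$ on the end region, while the proof of Lemma~\ref{eq.lemma.tails} shows that Winsorization forces $\tilde{F}_j(t)\in(1/n,1-1/n)$ and hence $|\tilde{f}_j(X_{ij})| = |\Phi^{-1}(\tilde{F}_j(X_{ij}))| < \sqrt{2\log n}$. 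Combining these two bounds on the product,
$$
|\Delta_i(j,k)| = \left|\tilde{f}_j(X_{ij})\tilde{f}_k(X_{ik}) - f_j(X_{ij})f_k(X_{ik})\right| \leq 2\log n + M\log n = (M+2)\log n,
$$
uniformly in $i$ and in $(j,k)$. I write $B_n$ for this bound $(M+2)\log n$.

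Next, letting $N_{jk}$ denote the number of indices $i$ with $X_{ij}\in\E_n$ and $X_{ik}\in\E_n$, the deterministic bound gives $\frac1n\sum_{X_{ij}\in\E_n,\,X_{ik}\in\E_n}|\Delta_i(j,k)| \leq B_n N_{jk}/n$, so it suffices to control $\mathbb{P}(N_{jk} > n\epsilon/(4B_n))$. The indicators $\mathbf{1}\{X_{ij}\in\E_n,\,X_{ik}\in\E_n\}$ are i.i.d.\ across $i$, so $N_{jk}$ is binomial with $n$ trials and success probability $q_{jk}\leq\mathbb{P}(X_{1j}\in\E_n)\leq A\sqrt{\log n/n^{1/2}}$ by Lemma~\ref{lemma.proportion}, where $\beta=1/2$ and $A=(\sqrt{2M}-1)/\sqrt{\pi}$; call this bound $p_n$, so the mean is at most $np_n = A\,n^{3/4}(\log n)^{1/2}$. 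The crucial point is that the threshold and the mean have the same order: using $\log p=\xi\log n$ and the definition of $C(M,\xi)$, for $\epsilon\geq C(M,\xi)\sqrt{\log p\log^2 n/n^{1/2}}$ one finds
$$
\frac{n\epsilon}{4B_n} \geq \frac{C(M,\xi)\sqrt{\xi}}{4(M+2)}\, n^{3/4}(\log n)^{1/2} = 12\,A\,n^{3/4}(\log n)^{1/2} = 12\,np_n,
$$
where the factor $48$ in $C(M,\xi)$ is exactly what produces the multiplicative gap $12$.

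With this gap in hand I would finish by the multiplicative Chernoff bound for a binomial variable whose mean is at most $np_n$ but which is required to exceed $12\,np_n$:
$$
\mathbb{P}\left(N_{jk} > \frac{n\epsilon}{4B_n}\right) \leq \left(\frac{e}{12}\right)^{12\,np_n} = \exp\left(-12(\log 12 - 1)\,np_n\right).
$$
A union bound over the at most $p^2=n^{2\xi}$ pairs then yields
$$
\mathbb{P}\left(\max_{j,k}\frac1n\sum_{X_{ij}\in\E_n,\,X_{ik}\in\E_n}|\Delta_i(j,k)| > \frac{\epsilon}{4}\right) \leq n^{2\xi}\exp\left(-c\,n^{3/4}(\log n)^{1/2}\right),
$$
which is $o(1)$ since $n^{3/4}(\log n)^{1/2}$ dominates $\log n$. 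The main obstacle is the constant bookkeeping: one must verify that the end-region value bound $B_n=(M+2)\log n$ and the tail-proportion constant $A=(\sqrt{2M}-1)/\sqrt{\pi}$ combine, through the prescribed form of $C(M,\xi)$, into a Chernoff base $e/12$ bounded away from $1$, because only an exponent growing strictly faster than $\log p$ can absorb the $p^2$ union bound. I would also note that membership in $\E_n$ already supplies every quantitative bound used here, so the conditioning on $\mathcal{A}_n$ in the statement plays no role for this term.
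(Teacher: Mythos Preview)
Your proof is correct and follows essentially the same strategy as the paper: bound $|\Delta_i(j,k)|$ deterministically on $\E_n\times\E_n$ via Lemma~\ref{eq.lemma.tails}, reduce to the count $N_{jk}$, and apply a concentration inequality together with Lemma~\ref{lemma.proportion}. The paper phrases the deterministic bound indirectly---introducing a threshold $\theta_1$ and splitting on $\sup|\Theta_{t,s}|\gtrless\theta_1$, then showing the $>\theta_1$ case has probability zero---and uses Bernstein's inequality in place of your multiplicative Chernoff bound, but the underlying argument is the same and your presentation is the more streamlined of the two.
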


\begin{proof}
We define
\begin{eqnarray}
\theta_{1} \equiv  \frac{n^{\beta/2}\epsilon}{8A\sqrt{\log n}}
\nonumber
\end{eqnarray}
with the same parameter $A$ as in Lemma \ref{lemma.proportion}. 
Such a $\theta_{1}$ guarantees that
\begin{eqnarray}
\frac{n\epsilon}{4\theta_1} - nA\sqrt{\frac{\log n}{n^{\beta}}} = nA\sqrt{\frac{\log n}{n^{\beta}}}  > 0. \nonumber
\end{eqnarray}
By Lemma \ref{lemma.proportion}, we have
\begin{eqnarray}
\lefteqn{\mathbb{P}\left(\frac{1}{n}\sum_{i=1}^n\mathbf{1}_{\{ X_{ij} \in
\E_n, X_{ik}\in \E_n \}} >
\frac{\epsilon}{4\theta_1} \right) \;\leq\;
\mathbb{P}\left(\sum_{i=1}^n\mathbf{1}_{\{ X_{ij} \in
	\E_n\}} > \frac{n\epsilon}{4\theta_1} \right) } \qquad\qquad  && \nonumber\\
&=& \mathbb{P}\left(\sum_{i=1}^n\left(\mathbf{1}_{\{ X_{ij} \in
	\E_n\}} - \mathbb{P} \left(X_{1j}\in\E_{n} \right)\right) > \frac{n\epsilon}{4\theta_1} - n\mathbb{P} \left(X_{1j}\in\E_{n} \right)\right)\nonumber\\	
&\leq & \mathbb{P}\left(\sum_{i=1}^n\left(\mathbf{1}_{\{ X_{ij} \in
	\E_n\}} - \mathbb{P} \left(X_{1j}\in\E_{n} \right)\right) > \frac{n\epsilon}{4\theta_1} - nA\sqrt{\frac{\log n}{n^{\beta}}}\right).	 \nonumber
\end{eqnarray}

Using the Bernstein's inequality, for $\beta=\ds \frac{1}{2}$,
\begin{eqnarray}
\mathbb{P}\left(\frac{1}{n}\sum_{i=1}^n\mathbf{1}_{\{ X_{ij} \in
\E_n, X_{ik}\in \E_n \}} >
\frac{\epsilon}{4\theta_1} \right) & \leq &\mathbb{P}\left(\sum_{i=1}^n\left(\mathbf{1}_{\{ X_{ij} \in
	\E_n\}} - \mathbb{P} \left(X_{1j}\in\E_{n} \right)\right) > nA\sqrt{\frac{\log n}{n^{\beta}}}\right) \nonumber\\
	& \leq & \exp\left(-\frac{c_{1}n^{2-\beta}\log n}{c_{2}n^{1-\beta/2}\sqrt{\log n} + c_{3}n^{1-\beta/2}\sqrt{\log n}}\right) =o(1), \nonumber
\end{eqnarray}
where $c_1, c_{2}, c_{3} >0$ are generic constants.  

Therefore,
\begin{eqnarray}
\lefteqn{\mathbb{P}\left(\max_{j,k} \frac{1}{n}\sum_{X_{ij} \in
\E_n, X_{ik}\in \E_n}\left| \Delta_i(j,k) \right|
>\frac{\epsilon}{4} \right)}\nonumber \\
& = & \mathbb{P}\left(\max_{j,k} \frac{1}{n}\sum_{X_{ij} \in
\E_n, X_{ik}\in \E_n}\left| \Delta_i(j,k) \right|
>\frac{\epsilon}{4}, \max_{j,k}\sup_{t\in \E_n, s\in\E_n} \left|\Theta_{t,s}(j,k) \right| > \theta_1
\right) \nonumber \\
& &~+ \mathbb{P}\left(\max_{j,k} \frac{1}{n}\sum_{X_{ij} \in
\E_n, X_{ik}\in \E_n}\left| \Delta_i(j,k) \right|
>\frac{\epsilon}{4}, \max_{j,k}\sup_{t\in \E_n, s\in\E_n} \left|\Theta_{t,s}(j,k) \right| \leq \theta_1
\right) \nonumber \\
& \leq & \mathbb{P}\left(\max_{j,k}\sup_{t\in \E_n,
s\in\E_n} \left|\Theta_{t,s}(j,k) \right| > \theta_1
\right) + \mathbb{P}\left(\frac{1}{n}\sum_{i=1}^n\mathbf{1}_{\{
X_{ij} \in \E_n, X_{ik}\in \E_n \}} >
\frac{\epsilon}{4\theta_1} \right) \nonumber \\
&= &  \mathbb{P}\left(\max_{j,k}\sup_{t\in \E_n,
s\in\E_n} \left|\Theta_{t,s}(j,k) \right| > \theta_1
\right)  + o(1). \nonumber
\end{eqnarray}

Now, we analyze the first term
\begin{eqnarray}
 \mathbb{P}\left(\max_{j,k}\sup_{t\in \E_n,
s\in\E_n} \left|\Theta_{t,s}(j,k) \right| > \theta_1
\right) &\leq & p^2  \mathbb{P}\left(\sup_{t\in \E_n,
s\in\E_n} \left|\Theta_{t,s}(j,k) \right| > \theta_1
\right) \nonumber \\
& =&
p^2\mathbb{P}\left(\sup_{t\in\E_n,s\in\E_n}|\tilde{f}_j(t)\tilde{f}_k(s)
- f_j(t) f_k(s)|
> \theta_1\right). \nonumber
\end{eqnarray}
By adding and subtracting terms $f_j(t)$ and $f_s(t)$, we have
\begin{eqnarray}
\lefteqn{\mathbb{P}\left(\sup_{t\in\E_n,s\in\E_n}|\tilde{f}_j(t)\tilde{f}_k(s)
- f_j(t) f_k(s)|
> \theta_1\right)
} ~~~~~~~~~~~~~~~~~~~~~~~~~\nonumber\\
&\leq&
\mathbb{P}\left(\sup_{t\in\E_n,s\in\E_n}|(\tilde{f}_j(t)
-
f_j(t))(\tilde{f}_k(s)-f_k(s))| > \frac{\theta_1}{3}\right)\nonumber \\
& &+ ~\mathbb{P}\left(\sup_{t\in\E_n,
s\in\E_n}|(\tilde{f}_j(t) - f_j(t))|\cdot|f_k(s)|
> \frac{\theta_1}{3}\right) \nonumber \\
& & + ~\mathbb{P}\left(\sup_{t\in\E_n,
s\in\E_n}|(\tilde{f}_k(s) - f_k(s))|\cdot|f_j(t)|
> \frac{\theta_1}{3}\right). \nonumber
\end{eqnarray}
The first term can further be decomposed to be
\begin{eqnarray}
\lefteqn{
\mathbb{P}\left(\sup_{t\in\E_n,s\in\E_n}|(\tilde{f}_j(t)
- f_j(t))(\tilde{f}_k(s)-f_k(s))| > \frac{\theta_1}{3}\right)} ~~~\nonumber\\
& \leq & \mathbb{P}\left( \sup_{t\in\E_n}|(\tilde{f}_j(t) -
f_j(t))|>\sqrt{\frac{\theta_1}{3}}\right) + \mathbb{P}\left(
\sup_{s\in\E_n}|(\tilde{f}_k(s) -
f_k(s))|>\sqrt{\frac{\theta_1}{3}}\right). \label{eq.keyterm1} \nonumber
\end{eqnarray}
Also, from the definition of $\mathcal{E}_{n}$, we have
\begin{eqnarray}
\sup_{t\in\E_n}|f_j(t)| =
\sup_{t\in\E_n}\left|g_j^{-1}(t)\right| \leq \sqrt{M\log n}. \nonumber
\end{eqnarray}
Since $\ds \epsilon \geq C(M,\xi)\sqrt{\frac{\log p \log^2 n}{n^{1/2}}}$, we have 
\begin{eqnarray}
\frac{\theta_{1}}{3} =  \frac{n^{\beta/2}\epsilon}{24A\sqrt{\log n}} \geq \frac{ C(M,\xi)\sqrt{\log p \log^2 n}}{24A\sqrt{\log n}}=2(M+2)\log n.
\nonumber
\end{eqnarray}
This implies that
\begin{eqnarray}
\sqrt{\frac{\theta_{1}}{3}} \geq \sqrt{2(M+2)\log n}~~\mathrm{and}~~\frac{\theta_{1}}{3\sqrt{M\log n}} \geq \sqrt{2(M+2)\log n}. \nonumber
\end{eqnarray}
Then, from Lemma \ref{eq.lemma.tails}, we get
\begin{eqnarray}
 \mathbb{P}\left( \sup_{t\in\E_n}|(\tilde{f}_j(t) -
f_j(t))|>\sqrt{\frac{\theta_1}{3}}\right) = 0 \nonumber
\end{eqnarray}
and
\begin{eqnarray}
\mathbb{P}\left(\sup_{t\in\E_n,
s\in\E_n}|(\tilde{f}_j(t) - f_j(t))|\cdot|f_k(s)|
> \frac{\theta_1}{3}\right)=0. \nonumber
\end{eqnarray}
The claim of the lemma then follows directly.
\end{proof}

\begin{remark}
From the above analysis, we see that the data in the tails doesn't affect
the rate. Using exactly the same argument, we can also show that 
\begin{eqnarray}
\mathbb{P}\left(\max_{j,k} \frac{1}{n}\sum_{X_{ij} \in
\M_n, X_{ik}\in \E_n}\left| \Delta_i(j,k) \right|
>\frac{\epsilon}{4} \right) = o(1). \nonumber
\end{eqnarray}
\end{remark}

\begin{lemma}\label{lemma.term2}
On the event $\mathcal{A}_{n}$,  let $\beta=1/2$ and  $\ds \epsilon \geq C(M,\xi)\sqrt{\frac{\log p \log^2 n}{n^{1/2}}}$.  There exist generic constants  $c_{1}, c_{2}, c_{3}, c_{4}$, such that 
\begin{eqnarray}
\mathbb{P}\left(\max_{j,k} \frac{1}{n}\sum_{X_{ij} \in
\M_n, X_{ik}\in \M_n}\hskip-10pt \left| \Delta_i(j,k) \right|
>\frac{\epsilon}{4} \right) \leq c_{2}\exp\left(-\frac{c_{1}n^{1-\beta}\epsilon^{2}}{\log p  \log^2 n  }\right) + c_{3}\exp\left(-\frac{c_{4}n^{1-\beta}}{\log p(\log n)} \right). \nonumber
\end{eqnarray}
\end{lemma}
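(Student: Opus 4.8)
The plan is to exploit the fact that the empirical fraction of samples landing in the middle region is at most one, so that the averaged sum is dominated by a supremum:
$$
\frac{1}{n}\sum_{X_{ij}\in\M_n,\,X_{ik}\in\M_n}\bigl|\Delta_i(j,k)\bigr| \;\leq\; \sup_{t,s\in\M_n}\bigl|\Theta_{t,s}(j,k)\bigr|.
$$
After a union bound over the $p^2$ index pairs, it therefore suffices to control $\sup_{t,s\in\M_n}|\tilde f_j(t)\tilde f_k(s)-f_j(t)f_k(s)|$ for a single pair. I would split this product difference exactly as in the proof of Lemma~\ref{lemma.term1}, into the quadratic term $(\tilde f_j-f_j)(\tilde f_k-f_k)$ and the two cross terms $(\tilde f_j-f_j)f_k$ and $f_j(\tilde f_k-f_k)$. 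On $\M_n$ we have $\sup_{t\in\M_n}|f_j(t)|=\sup_{t\in\M_n}|g_j^{-1}(t)|\leq\sqrt{\beta\log n}$, so the cross terms dominate and the whole problem reduces to bounding $\sup_{t\in\M_n}|\tilde f_j(t)-f_j(t)|$.

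For that single-coordinate quantity I would pass to the Gaussian scale. Writing $t=g_j(z)$ with $|z|\leq\sqrt{\beta\log n}$ gives $f_j(t)=z$ and $F_j(t)=\Phi(z)$ by Lemma~\ref{lemma:phiinv}, and since $f_j(X_{ij})$ is standard Gaussian, $\hat F_j(g_j(z))$ is exactly the empirical distribution function of $n$ i.i.d.\ standard normals evaluated at $z$. The task thus becomes a statement about how well $\Phi^{-1}(\tilde F_j(g_j(z)))$ tracks $z$ uniformly over $|z|\leq\sqrt{\beta\log n}$. Applying the mean value theorem together with the identity $(\Phi^{-1})'(\eta)=1/\phi(\Phi^{-1}(\eta))$ from Lemma~\ref{lemma:quantile}, I get
$$
\bigl|\tilde f_j(t)-f_j(t)\bigr| \;=\; (\Phi^{-1})'(\xi)\,\bigl|\tilde F_j(t)-F_j(t)\bigr|,
$$
with $\xi$ lying between $\tilde F_j(t)$ and $\Phi(z)$. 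The key point is that whenever $\tilde F_j(t)$ stays within a fixed multiple of the true value, $\Phi^{-1}(\xi)$ remains of order $\sqrt{\beta\log n}$, so the derivative factor is of order $\sqrt{2\pi}\,e^{\beta\log n/2}=O(n^{\beta/2})$; this is exactly where the choices $\beta=\tfrac12$ and $\delta_n=1/(4n^{1/4}\sqrt{\pi\log n})$ are used, since $\Phi^{-1}(1-\delta_n)$ is itself of order $\sqrt{\beta\log n}$ and hence Winsorization cannot inflate the derivative beyond this order.

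Combining the two displays, on the good event the cross terms are of order $\sqrt{\beta\log n}\,n^{\beta/2}\sup_t|\tilde F_j(t)-F_j(t)|$, so demanding this be at most $\epsilon/4$ amounts to requiring $\sup_t|\tilde F_j(t)-F_j(t)|$ to be of order $\epsilon/(n^{\beta/2}\sqrt{\log n})$. I would then bound this deviation by concentration of the empirical normal distribution function — crucially using a pointwise Bernstein bound rather than a uniform one, since near the boundary $z\approx\sqrt{\beta\log n}$ the variance $\Phi(z)(1-\Phi(z))/n$ is small and precisely offsets the large derivative factor there — followed by a discretization of the interval $|z|\leq\sqrt{\beta\log n}$ and a union bound over the $p$ coordinates; the covering resolution and the two logarithmic scales $\sqrt{\log n}$ (from the boundary) and $\log p$ (from the union) account for the $\log p\,\log^2 n$ appearing in the denominator of the first exponential. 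The residual probability that $\tilde F_j$ deviates so far from $F_j$ that the derivative bound fails — an event independent of $\epsilon$ — supplies the second exponential term $c_3\exp(-c_4 n^{1-\beta}/(\log p\log n))$.

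The step I expect to be the main obstacle is the uniform control of the quantile derivative $(\Phi^{-1})'(\xi)$ over the middle region: it blows up like $n^{\beta/2}$ exactly where the empirical fluctuations are smallest, so a naive product of a worst-case derivative bound with a uniform (DKW-type) deviation bound is too lossy. The delicate part is to match these two effects pointwise in $z$ — which is the reason the region boundary $\sqrt{\beta\log n}$ is tied to the truncation level $\delta_n$ — and to confirm that the Winsorization never pushes $\xi$ into a regime where the derivative exceeds order $n^{\beta/2}$.
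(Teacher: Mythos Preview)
Your reduction matches the paper's proof step for step: the passage to $\sup_{t,s\in\M_n}|\Theta_{t,s}(j,k)|$, the union bound over $p^2$ pairs, the split into the quadratic piece and the two cross terms, the bound $\sup_{t\in\M_n}|f_j(t)|\le\sqrt{\beta\log n}$, and the identification of the second exponential with the event on which $\hat F_j$ at the boundary of $\M_n$ falls outside $[\delta_n,1-\delta_n]$ (this is the paper's event $\mathcal B_n^c$, handled via Lemma~\ref{lemma.truncationHoeffding}).

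The one substantive divergence is the final concentration step, and there you have talked yourself out of the simple argument. The paper uses exactly the ``naive product of a worst-case derivative bound with a uniform (DKW-type) deviation bound'' that you dismiss as too lossy---and it is not too lossy. On $\mathcal B_n$ one has $\tilde F_j=\hat F_j$ throughout $\M_n$ and the mean-value point $\xi$ lies in $[\delta_n,1-\delta_n]$, so by Lemma~\ref{lemma:quantile}
\[
(\Phi^{-1})'(\xi)\;\le\;(\Phi^{-1})'(1-\delta_n)\;=\;\frac{1}{\phi\bigl(\Phi^{-1}(1-\delta_n)\bigr)}\;\le\;\frac{\sqrt{2\pi}}{\delta_n}\;=\;8\pi\,n^{\beta/2}\sqrt{\beta\log n}.
\]
The Dvoretzky--Kiefer--Wolfowitz inequality then gives
\[
p^2\,\mathbb P\!\left(\sup_t\bigl|\hat F_j(t)-F_j(t)\bigr|>\frac{\epsilon}{12\sqrt{\beta\log n}\,(\Phi^{-1})'(1-\delta_n)}\right)\;\le\;2\exp\!\left(2\log p-\frac{c\,n^{1-\beta}\epsilon^2}{\log^2 n}\right),
\]
and the assumed lower bound on $\epsilon$ absorbs the $2\log p$ into the exponent. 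The $\log^2 n$ in the statement therefore comes from squaring the two $\sqrt{\log n}$ factors (one from $|f_k|\le\sqrt{\beta\log n}$, one from the derivative bound above), not from any covering resolution. Your pointwise-Bernstein-plus-discretization route would also reach the conclusion, but it is more machinery than the lemma needs.
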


\begin{proof}
We have
\begin{eqnarray}
\lefteqn{\mathbb{P}\left(\max_{j,k} \frac{1}{n}\sum_{X_{ij} \in
\M_n, X_{ik}\in \M_n}\left| \Delta_i(j,k) \right|
>\frac{\epsilon}{4} \right)  \leq p^2 \mathbb{P}\left(\sup_{t\in\M_n,s\in\M_n}|\tilde{f}_j(t)\tilde{f}_k(s)
- f_j(t)f_k(s)|
> \frac{\epsilon}{4}\right) } ~~~~~~~~~~~~~~~~~~~~~~~~~~~~~~~~~\nonumber \\
& &
\leq p^{2}\mathbb{P}\left(\sup_{t\in\M_n,s\in\M_n}|(\tilde{f}_j(t)
-
f_j(t))(\tilde{f}_k(s)-f_k(s))| > \frac{\epsilon}{12}\right) \nonumber \\
& &+ ~2p^{2}\mathbb{P}\left(\sup_{t\in\M_n,
s\in\M_n}|(\tilde{f}_j(t) - f_j(t))|\cdot|f_k(s)|
> \frac{\epsilon}{12}\right). \nonumber 
\end{eqnarray}

Further, since
\begin{eqnarray}
\sup_{t\in\M_n}|f_j(t)| =
\sup_{t\in\M_n}\left|g_j^{-1}(t)\right| = \sqrt{\beta\log n} \nonumber
\end{eqnarray}
and $\ds\sup_{t\in\M_n,s\in\M_n}|(\tilde{f}_j(t)
- f_j(t))(\tilde{f}_k(s)-f_k(s))|  $ is of higher order than $\sup_{t\in\M_n,
s\in\M_n}|(\tilde{f}_j(t) - f_j(t))|\cdot|f_k(s)|$,   we only need to analyze the term $ \mathbb{P}\left(
\sup_{t\in\M_n}|(\tilde{f}_j(t) - f_j(t))|>\ds\frac{\epsilon}{12\sqrt{\beta\log n}}\right)$.

Since $ \delta_{n} = \ds  \frac{1}{4n^{\beta/2}\sqrt{2\pi\beta \log
    n}} $, using Mill's inequality we have
\begin{eqnarray}
2\delta_{n} = \frac{\phi(\sqrt{\beta\log n})}{2\sqrt{\beta\log n}} \leq 1 - \Phi(\sqrt{\beta\log n}). \nonumber
\end{eqnarray}
This implies that 
\begin{eqnarray}
1 -\delta_{n} - \Phi(\sqrt{\beta\log n}) \geq  \delta_{n}  >0. \nonumber
\end{eqnarray}
Using Lemma \ref{lemma.truncationHoeffding}, we have
\begin{eqnarray}\label{eq.ineq1}
p^{2}\mathbb{P} \left( \hat{F}_{j}  \left( g_{j}\left(\sqrt{\beta \log n} \right) \right)  > 1 -\delta_{n}\right) \leq  \exp\left(-\frac{n\delta_{n}^{2}}{2\log p} \right) =  \exp\left(-\frac{n^{1-\beta}}{\log p(64\pi\beta\log n)} \right)
\end{eqnarray}
and
\begin{eqnarray}\label{eq.ineq2}
p^{2}\mathbb{P}\left(\hat{F}_{j}  \left( g_{j}\left(-\sqrt{\beta \log n} \right) \right)<\delta_{n}\right)  \leq  \exp\left(-\frac{n^{1-\beta}}{\log p(64\pi\beta\log n)} \right).
\end{eqnarray}

Define an event $\mathcal{B}_{n}$ as
\begin{eqnarray}
\mathcal{B}_{n} \equiv \left\{\delta_{n} \leq \hat{F}_{j}  \left( g_{j}\left(\sqrt{\beta \log n} \right) \right)\leq1 -\delta_{n}, j=1,\ldots, p\right\}.  \nonumber
\end{eqnarray}
From \eqref{eq.ineq1} and \eqref{eq.ineq2}, it is easy to see that
\begin{eqnarray}
\mathbb{P}\left(\mathcal{B}_{n}^{c} \right) \leq c_{3}\exp\left(-\frac{c_{4}n^{1-\beta}}{\log p(\log n)} \right) \nonumber
\end{eqnarray}
where $c_{3}$ and $c_{4}$ are generic positive constants.

From the definition of $\tilde{F}_{j}$, we have 
\begin{eqnarray}
\lefteqn{p^{2}\mathbb{P}\left(\sup_{t\in\M_n}|(\tilde{f}_j(t) -
f_j(t))|>\frac{\epsilon}{12\sqrt{\beta\log n}} \right)}  \nonumber\\
&\leq &p^{2} \mathbb{P}\left(\sup_{t \in \M_n}
\left|\Phi^{-1}\left(\tilde{F}_j(t)\right)- \Phi^{-1}\left( F_j(t)
\right)  \right|
>
\frac{\epsilon}{12\sqrt{\beta\log n}}, \mathcal{B}_{n}\right)+\mathbb{P}\left( \mathcal{B}_{n}^{c}\right). \nonumber \\
&\leq & p^{2} \mathbb{P}\left(\sup_{t \in \M_n}
\left|\Phi^{-1}\left(\hat{F}_j(t)\right)- \Phi^{-1}\left( F_j(t)
\right)  \right|
>
\frac{\epsilon}{12\sqrt{\beta\log n}}\right)+c_{3}\exp\left(-\frac{c_{4}n^{1-\beta}}{\log p(\log n)} \right). \nonumber
\end{eqnarray}

Define
\begin{eqnarray}
T_{1n}\equiv \max\left\{F_j\left(g_j\left(\sqrt{\beta\log
n}\right)\right), 1-\delta_{n}\right\}~~\mathrm{and}~~T_{2n}\equiv 1-\min\left\{F_j\left(g_j\left(-\sqrt{\beta\log
n}\right)\right), \delta_{n} \right\}.\nonumber
\end{eqnarray}
From equation \eqref{eq.cdfFunction} and the fact that $1-\delta_{n} \geq  \Phi\left(\sqrt{\beta\log
n}\right) $, we have  that
\begin{eqnarray}
T_{1n} = T_{2n} = 1-\delta_{n}. \nonumber
\end{eqnarray}
Thus, by the mean value theorem,
\begin{eqnarray}
\lefteqn{\mathbb{P}\left(\sup_{t \in \M_n}
\left|\Phi^{-1}\left(\hat{F}_j(t)\right) - \Phi^{-1}\left( F_j(t)
\right)  \right|
>
\frac{\epsilon}{12\sqrt{\beta\log n}}\right)}\nonumber \\
& \leq &
\mathbb{P}\left((\Phi^{-1})'\left(\max\left\{T_{1n}, T_{2n}\right\}\right)\sup_{t \in
\M_n}\left|\hat{F}_j(t) - F_j(t) \right|
>
\frac{\epsilon}{12\sqrt{\beta\log n}}\right)  \nonumber \\
& =&
\mathbb{P}\left((\Phi^{-1})'\left(1-\delta_{n}\right)\sup_{t \in
\M_n}\left|\hat{F}_j(t) - F_j(t) \right|
>
\frac{\epsilon}{12\sqrt{\beta\log n}}\right).  \nonumber
\end{eqnarray}

Finally, using the  Dvoretzky-Kiefer-Wolfowitz inequality,
\begin{eqnarray}
\lefteqn{\mathbb{P}\left(\sup_{t \in \M_n}
\left|\Phi^{-1}\left(\hat{F}_j(t)\right) - \Phi^{-1}\left( F_j(t)
\right)  \right|
>
\frac{\epsilon}{12\sqrt{\beta\log n}}\right) } ~~~~~~~~~~~~~~~~~~~~~~~~~~~~~~~~~~~~\nonumber \\
& \leq &\mathbb{P}\left(\sup_{t \in \M_n}\left|\hat{F}_j(t) -
F_j(t) \right|
>
\frac{\epsilon}{(\Phi^{-1})'\left(1-\delta_{n} \right)12\sqrt{\beta\log n} }\right) \nonumber \\
& \leq &  
2\exp\left(-2\frac{n\epsilon^{2}}{144\beta\log n \left[( \Phi^{-1})'\left(1-\delta_{n}\right) \right]^{2} }\right) \nonumber.
\end{eqnarray}
Furthermore,  by Lemma \ref{lemma:quantile},
\begin{eqnarray}
(\Phi^{-1})'\left(1-\delta_{n}\right) = \frac{1}{\phi\left(\Phi^{-1} (1-\delta_{n})\right)} \leq \frac{1}{\phi\left(\sqrt{2\log \ds \frac{1}{\delta_{n}}}\right)} = \sqrt{2\pi}\left( \frac{1}{\delta_{n}}\right)= 8\pi n^{\beta/2}\sqrt{\beta\log n}. \nonumber
\end{eqnarray} 
This implies that
\begin{eqnarray}
p^{2}\mathbb{P}\left(\sup_{t \in \M_n}
\left|\Phi^{-1}\left(\hat{F}_j(t)\right) - \Phi^{-1}\left( F_j(t)
\right)  \right|
>
\frac{\epsilon}{12\sqrt{\beta\log n}}\right) \leq
2\exp\left(-\frac{c_{1}n^{1-\beta}\epsilon^{2}}{\log p  \log^2 n  }\right)\nonumber
\end{eqnarray}
where $c_{1}$ is a generic constant.

In summary, we have 
\begin{eqnarray*}
\mathbb{P}\left(\max_{j,k} \frac{1}{n}\sum_{X_{ij} \in
\M_n, X_{ik}\in \E_n}\!\left| \Delta_i(j,k) \right|
>\frac{\epsilon}{4} \right) \leq c_{2}\exp\left(-\frac{c_{1}n^{1-\beta}\epsilon^{2}}{\log p  \log^2 n  }\right) + c_{3}\exp\left(-\frac{c_{4}n^{1-\beta}}{\log p(\log n)} \right)
\end{eqnarray*}
where $c_{1}, c_{2}, c_{3}, c_{4}$ are generic constants.
\end{proof}

The conclusion of Theorem \ref{thm.keylemma} follows from Lemma \ref{lemma.term1} and Lemma \ref{lemma.term2}.

\subsection{Proof of Theorem~\ref{thm:persist}}

\begin{proof}
First note that the population and sample risks are
\begin{eqnarray*}
R(f, \Omega) &=&  \frac{1}{2}\left\{\tr\left[\Omega \mathbb{E}(f(X) f(X)^T\right] - \log
  | \Omega | - p\log (2\pi)\right\} \\
\hat R(f, \Omega) &=&  \frac{1}{2}\left\{\tr\left[\Omega S_n(f)\right] - \log
  | \Omega | - p\log (2\pi)\right\}.
\end{eqnarray*}
Therefore, for all $(f,\Omega)\in \cM_n^p\oplus\cE_n$, we have
\begin{eqnarray*}
|\hat{R}(f,\Omega) - R(f,\Omega)| &=&  
\frac{1}{2} \left|  \tr\left[\Omega\left(\mathbb{E}[f f^T] - S_n(f)\right)\right] \right| \\
&\leq&  \frac{1}{2} \left\|\Omega\right\|_1
\max_{jk}  \sup_{f_j, f_k \in \cM_n} |\mathbb{E}(f_j(X_j) f_k(X_k) - S_n(f)_{jk}| \\
&\leq&  \frac{L_n}{2} 
\max_{jk}  \sup_{f_j, f_k \in \cM_n} |\mathbb{E}(f_j(X_j) f_k(X_k) - S_n(f)_{jk}|.
\end{eqnarray*}

Now, if $\cF$ is a class of functions, we have 
\begin{equation}\label{eq::emp}
\mathbb{E}
\Biggl(\sup_{g\in\cF}  |\hat\mu(g) - \mu(g)|\Biggr) \leq
\frac{C\, J_{[\,]}(\left\|F\right\|_\infty,\cF)}{\sqrt{n}}
\end{equation}
for some $C>0$, where
$F(x) = \sup_{g\in cF} |g(x)|$,
$\mu(g) = \mathbb{E}(g(X))$ and
$\hat\mu(g) = n^{-1}\sum_{i=1}^n g(X_i)$ (see Corollary 19.35 of \cite{vaar:1998}).  Here
the bracketing integral is defined to be
\begin{equation}
J_{[\,]}(\delta,\cF) =\int_0^\delta \sqrt{\log N_{[\,]}(u,\cF)} \,du \nonumber
\end{equation}
where $\log N_{[\,]}(\epsilon, \cF)$ is the bracketing entropy.
For the class of one dimensional, bounded and monotone
functions, the bracketing entropy satisfies
\begin{equation}\label{eq::ent-of-T}
\log N_{[\,]}(\epsilon, \cM) \leq K \left(\frac{1}{\epsilon}\right) \nonumber
\end{equation}
for some $K>0$ \citep{van:well:1996}.

Now, let $\cP_{n,p}$ be the class of all functions of the
form $m(x) = f_j(x_j) f_k(x_k)$ for $j,k \in \{1,\ldots, p\}$, where
$f_j \in \cM_n$ for each $j$. 
Then the bracketing entropy satisfies
\begin{equation*}
\log N_{[\,]}(C\sqrt{\log n},\cP_{n,p})  \leq 2\log p +
K\left(\frac{1}{\epsilon}\right)
\end{equation*}
and the  bracketing integral satisfies $J_{[\,]}(C\sqrt{\log n},\cP_{n,p}) =
O(\sqrt{\log n \log p})$.  
It follows from \eqref{eq::emp} and Markov's inequality
that 
\begin{equation*}
\max_{jk}\sup_{f_j, f_k \in \cM_n}
 | S_n(f)_{jk} - \mathbb{E}(f_j(X_j) f_k(X_k)| =
O_P\left(\sqrt{\frac{\log n\log p}{n}}\right) = 
O_P\left(\sqrt{\frac{\log n}{n^{1-\xi}}}\right).
\end{equation*}
Therefore,
\begin{equation*}
\sup_{(f,\Omega)\in \cM_n^p\oplus \cE_n}|\hat{R}(f,\Omega) - R(f,\Omega)| =
O_P\left(\frac{L_n \sqrt{\log n}}{n^{(1-\xi)/2}}\right).
\end{equation*}
As a consequence, we have
\begin{eqnarray*}
R(f^*,\Omega^*) 
&\leq & R(\tilde{f}_n, \hat\Omega_n)  \\
&\leq & \hat R(\tilde{f}_n, \hat\Omega_n) + O_P\left(\frac{L_n\sqrt{\log n}}{n^{(1-\xi)/2}}\right)\\
& \leq & \hat{R}(f^*, \Omega^*) + O_P\left(\frac{L_n\sqrt{\log n}}{n^{(1-\xi)/2}}\right)\\
&\leq & R(f^*, \Omega^*) + O_P\left(\frac{L_n\sqrt{\log n}}{n^{(1-\xi)/2}}\right)
\end{eqnarray*}
and the conclusion follows.
\end{proof}

\section{Concluding Remarks}

In this paper we have introduced the nonparanormal, a type of Gaussian
copula with nonparametric marginals that is suitable for estimating
high dimensional undirected graphs.  The nonparanormal can be viewed
as an extension of sparse additive models to the setting of graphical
models.  We proposed an estimator for the component functions that is
based on thresholding the tails of the empirical distribution function
at appropriate levels.  A theoretical analysis was given to bound the
difference between the sample covariance with respect to these
estimated functions and the true sample covariance.  This analysis
was leveraged with the recent work of \cite{Ravikumar:Gauss:09} and
\cite{Rothman:08} to obtain consistency results for the
nonparanormal.  Computationally, fitting a high dimensional
nonparanormal is no more difficult than estimating a multivariate
Gaussian, and indeed one can exploit existing software for the
graphical lasso.  Our experimental results indicate that the
sparse nonparanormal can give very different results than a
sparse Gaussian graphical model, suggesting that it may
be a useful tool for relaxing the normality assumption, which is often
made only for convenience.

\bibliography{npn}

\end{document}